\newcommand{\arxiv}[1]{\iftoggle{iclr}{}{#1}}
\newcommand{\iclr}[1]{\iftoggle{iclr}{#1}{}}
\global\toggletrue{iclr}
\global\togglefalse{iclr}
\renewcommand{\ast}{\star}
\newcommand{\Xtrain}{\mathcal{X}_{\mathsf{train}}}%
\newcommand{\Xtest}{\mathcal{X}_{\mathsf{test}}}%
\newcommand{\piref}{\pi_{\mathsf{base}}}%
\newcommand{\alg}{\mathrm{Alg}}%
\newcommand{\ucb}{\mathsf{ucb}}
\newcommand{\ucbalg}{\mathtt{UCB}}
\newcommand{\grpo}{\mathtt{GRPO}}
\newcommand{\bbatch}{b_\mathsf{batch}}
\newcommand{\batchalgo}{\mathtt{Batch}}
\newcommand{\mathdata}{\mathtt{MATH}}
\newcommand{\dapo}{\mathtt{DAPO}}
\newcommand{\llama}{\mathtt{Llama\text{-}3.1\text{-}8B\text{-}Instruct}}
\newcommand{\qwen}{\mathtt{Qwen\text{-}2.5\text{-}7B\text{-}Base}}
\newcommand{\ucbmean}{\mathtt{UCB\text{-}Mean}\xspace}
\newcommand{\ucbconst}{\mathtt{UCB\text{-}Con}\xspace}
\newcommand{\unif}{\mathsf{unif}}
\newcommand{\KL}{\mathrm{KL}}
\newcommand{\supp}{\mathrm{supp}}
\renewcommand{\epsilon}{\varepsilon}
\newcommand{\EE}{\mathbb{E}} 
\DeclarePairedDelimiter{\brk}{[}{]}
\DeclarePairedDelimiter{\crl}{\{}{\}}
\DeclarePairedDelimiter{\prn}{(}{)}
\let\Pr\undefined
\DeclareMathOperator{\En}{\mathbb{E}}
\DeclareMathOperator{\Pr}{Pr}
\DeclareMathOperator*{\argmax}{arg\,max}             
\def\ddefloop#1{\ifx\ddefloop#1\else\ddef{#1}\expandafter\ddefloop\fi}
\def\ddef#1{\expandafter\def\csname bb#1\endcsname{\ensuremath{\mathbb{#1}}}}
\def\ddefloop#1{\ifx\ddefloop#1\else\ddef{#1}\expandafter\ddefloop\fi}
\def\ddef#1{\expandafter\def\csname b#1\endcsname{\ensuremath{\mathbf{#1}}}}
\def\ddef#1{\expandafter\def\csname sf#1\endcsname{\ensuremath{\mathsf{#1}}}}
\def\ddef#1{\expandafter\def\csname c#1\endcsname{\ensuremath{\mathcal{#1}}}}
\def\ddef#1{\expandafter\def\csname h#1\endcsname{\ensuremath{\widehat{#1}}}}
\def\ddef#1{\expandafter\def\csname hc#1\endcsname{\ensuremath{\widehat{\mathcal{#1}}}}}
\def\ddef#1{\expandafter\def\csname t#1\endcsname{\ensuremath{\widetilde{#1}}}}
\def\ddef#1{\expandafter\def\csname tc#1\endcsname{\ensuremath{\widetilde{\mathcal{#1}}}}}
\def\ddefloop#1{\ifx\ddefloop#1\else\ddef{#1}\expandafter\ddefloop\fi}
\def\ddef#1{\expandafter\def\csname scr#1\endcsname{\ensuremath{\mathscr{#1}}}}
\newcommand{\indic}{\mathbbm{1}}    
\newcommand{\alghyperref}[1]{\hyperref[#1]{Alg.~\ref*{#1}}}
\title{Outcome-based Exploration for LLM Reasoning}
\author[1,2]{Yuda Song}
\author[1,3]{Julia Kempe}
\author[1]{Remi Munos}
\affiliation[1]{FAIR at Meta}
\affiliation[2]{CMU}
\affiliation[3]{NYU}
\date{\today}
\abstract{
Reinforcement learning (RL) has emerged as a powerful method for improving the reasoning abilities of large language models (LLMs). Outcome-based RL, which rewards policies solely for the correctness of the final answer, yields substantial accuracy gains but also induces a systematic loss in generation diversity. This collapse undermines real-world performance, where diversity is critical for test-time scaling. We analyze this phenomenon by viewing RL post-training as a sampling process and show that, strikingly, RL can reduce effective diversity even on the \emph{training} set relative to the base model. Our study highlights two central findings: (i) a \emph{transfer of diversity degradation}, where reduced diversity on solved problems propagates to unsolved ones, and (ii) the \emph{tractability of the outcome space}, since reasoning tasks admit only a limited set of distinct answers. Motivated by these insights, we propose \emph{outcome-based exploration}, which assigns exploration bonuses according to final outcomes. We introduce two complementary algorithms: \emph{historical exploration}, which encourages rarely observed answers via $\ucbalg$-style bonuses, and \emph{batch exploration}, which penalizes within-batch repetition to promote test-time diversity. Experiments on standard competition math with $\mathtt{Llama}$ and $\mathtt{Qwen}$ models demonstrate that both methods improve accuracy while mitigating diversity collapse. On the theoretical side, we formalize the benefit of outcome-based exploration through a new model of \emph{outcome-based bandits}. Together, these contributions chart a practical path toward RL methods that enhance reasoning without sacrificing the diversity essential for scalable deployment.
}
\begin{document}

\maketitle

\iclr{
\begin{abstract}
Reinforcement learning (RL) has emerged as a powerful method for improving the reasoning abilities of large language models (LLMs). Outcome-based RL, which rewards policies solely for the correctness of the final answer, yields substantial accuracy gains but also induces a systematic loss in generation diversity. This collapse undermines real-world performance, where diversity is critical for test-time scaling. We analyze this phenomenon by viewing RL post-training as a sampling process and show that, strikingly, RL can reduce effective diversity even on the \emph{training} set relative to the base model. Our study highlights two central findings: (i) a \emph{transfer of diversity degradation}, where reduced diversity on solved problems propagates to unsolved ones, and (ii) the \emph{tractability of the outcome space}, since reasoning tasks admit only a limited set of distinct answers. Motivated by these insights, we propose \emph{outcome-based exploration}, which assigns exploration bonuses according to final outcomes. We introduce two complementary algorithms: \emph{historical exploration}, which encourages rarely observed answers via $\ucbalg$-style bonuses, and \emph{batch exploration}, which penalizes within-batch repetition to promote test-time diversity. Experiments on standard competition math with $\mathtt{Llama}$ and $\mathtt{Qwen}$ models demonstrate that both methods improve accuracy while mitigating diversity collapse. On the theoretical side, we formalize the benefit of outcome-based exploration through a new model of \emph{outcome-based bandits}. Together, these contributions chart a practical path toward RL methods that enhance reasoning without sacrificing the diversity essential for scalable deployment.
\end{abstract}
}

\begin{figure}[H]
    \centering
    \includegraphics[width=0.45\linewidth]{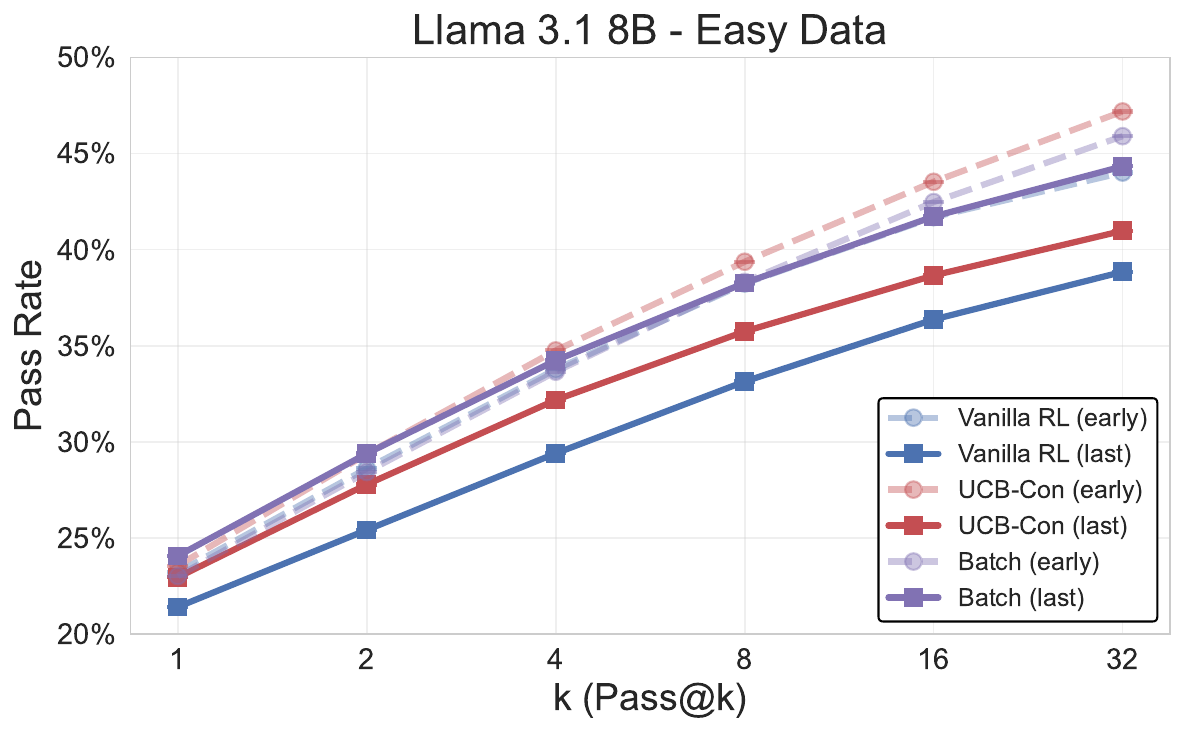}
    \includegraphics[width=0.45\linewidth]{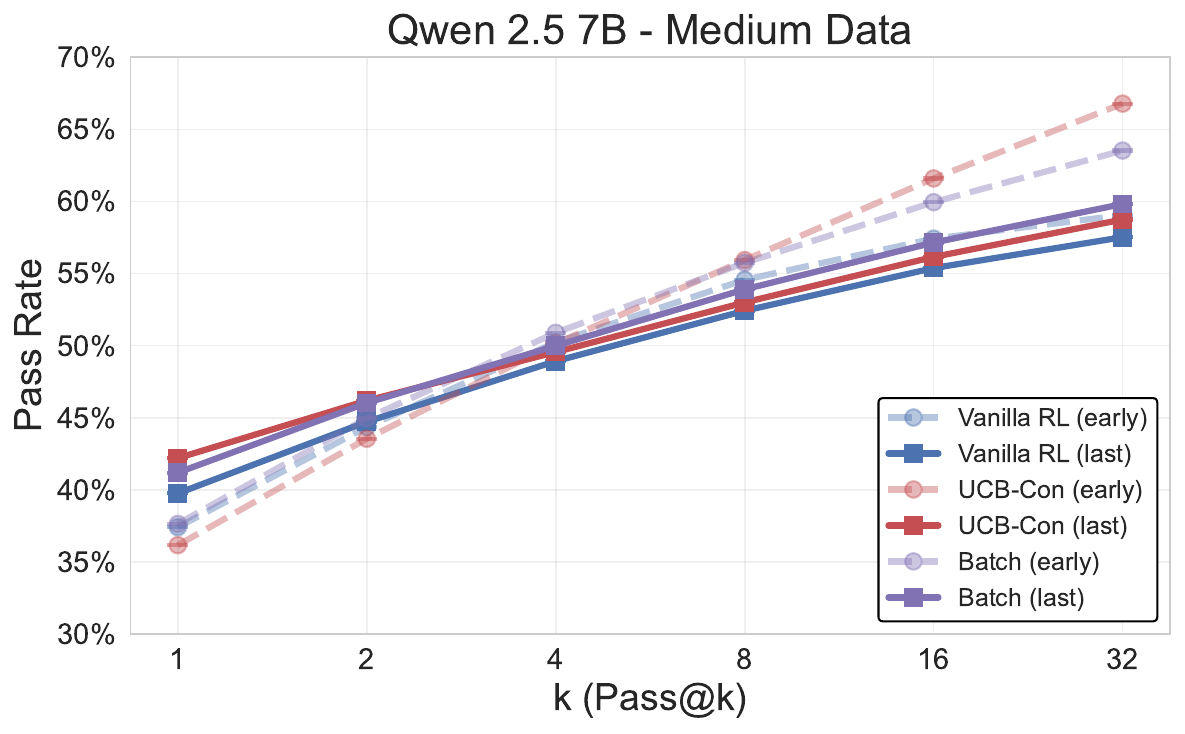}
    \caption{Test performance comparison (averaged across $\mathtt{MATH\text{-}500}$, $\mathtt{AIME2024/2025}$, $\mathtt{AMC23}$) between our exploration methods ($\ucbconst$ and $\batchalgo$) and the $\grpo$ baseline, with $\llama$ on the easy dataset (left) and $\qwen$ on the medium dataset (right). We report pass@$k$ for $k \in \{1,2,4,8,16,32\}$ on an early checkpoint (at timestep 100) and the final checkpoint (at timestep 700). We repeat each experiment with 3 different random seeds and plot the mean performance. The exploration methods outperform the baseline on nearly all metrics across the training process (except $\qwen$ with $\ucbconst$ on pass@1 on the early checkpoint due to exploration, but it has much higher pass@32 rate), and better exploitation-exploration trade-off and mitigation of overoptimization (note that the last checkpoint of Llama 3.1 8B with Vanilla RL has overall worse performance than its early checkpoint due to overoptimization).}
    \label{fig:teaser}
\end{figure}

\section{Introduction}
\label{sec:intro}
Large language models (LLMs) are commonly post-trained with reinforcement learning (RL), both in preference alignment \citep{ouyang2022training,bai2022training} and in reasoning  \citep{shao2024deepseekmath,guo2025deepseek}.
A longstanding difficulty in RL is the design of reward signals: while one might hope to shape intermediate reasoning steps, recent works have shown that the seemingly crude strategy of rewarding only the final correctness (e.g., whether a math answer is correct) can be remarkably effective \citep{shao2024deepseekmath,guo2025deepseek}. \looseness=-1

However, a growing body of evidence points to an important drawback of RL post-training: a systematic loss of diversity in model generations \citep{song2024mind,dang2025weight,yue2025does,zhao2025echo,wu2025invisible}.
This phenomenon is most cleanly captured by the pass@$k$ metric: when $k$ is large (say $k=512$), post-trained models exhibit a lower pass@$k$ than the base model. 
This raises a practical concern: in real-world deployments, diversity is often valuable and can amplify performance through test-time scaling \citep{wu2024empirical, snell2024scaling}, with different sampling processes such as directly sampling from the model or tree search.
Indeed, we find that diversity degradation already manifests during training, as models collapse to a reduced set of candidate answers on unsolved problems due to a transfer effect of the diversity degradation induced by concentrating on correct answers, which we detail in \pref{sec:motivation}.

Exploration is the canonical RL tool for combating such collapse \citep{bellemare2016unifying,azar2017minimax,burda2018exploration}.
However, directly importing classical techniques such as Upper Confidence Bound ($\ucbalg$) exploration \citep{auer2002finite} to token-level language modeling is intractable, as it would require searching over exponentially many sequences.
Motivated by the success of outcome-based rewards, we therefore study {\em outcome-based exploration}, where exploration bonuses depend only on final outcomes. This perspective allows us to adapt $\ucbalg$-style methods to LLM training, which we further refine by incorporating both positive and negative outcome signals. 

A subtlety arises, however: in language models, one must distinguish between {\em historical exploration} (visiting a more diverse set of states and actions during training) and {\em batch exploration} (producing diverse outputs at test time). 
The latter improves pass@k but does not necessarily increase diversity during training whereas the former improves pass@1 but does not guarantee test-time diversity of the trained model. We introduce and study a batch version of outcome-based exploration, which demonstrates improved tradeoff between accuracy and diversity during test time. 

\paragraph{Our contributions}  
\begin{enumerate}
\item  We study RL post-training dynamics by framing RL as a sampling process (\pref{sec:motivation}). This perspective reveals that diversity loss is not limited to test-time behavior, but already occurs on the training set: as RL concentrates probability mass on previously solved questions, the resulting collapse propagates and reduces diversity even on unsolved ones. We term this effect the transfer of diversity degradation.
\item We propose outcome-based exploration (\pref{sec:method}), which adapts classical exploration bonuses (e.g. $\ucbalg$) to the outcome space of LLM tasks. We show that naively adapting $\ucbalg$ does not lead to improved testing performance. We thus propose more refined algorithms ($\ucbmean$, $\ucbconst$) which incorporate both positive and negative signals, and show that they improve both training exploration and test generalization, on standard reasoning dataset such as $\dapo$ and models such as $\llama$. We further provide a theoretical analysis on the benefit of outcome-based exploration in a new bandit setting (outcome-based bandits), inspired by the practical considerations (\pref{sec:theoretical-results-summary}).
\item We introduce a batch version of the outcome-based exploration algorithm ($\batchalgo$) (\pref{sec:batch}). By penalizing repetitive answers within the latest samples, the algorithm explicitly encourages diverse generations on the batch level, yielding a better accuracy–diversity tradeoff at test time.
\item We analyze the interaction between historical and batch exploration, showing that they are not mutually exclusive (\pref{sec:his_vs_batch}). In summary, our proposed methods can be easily incorporated into standard RL for LLMs reasoning training, agnostic to the training algorithm, and consistently improve both accuracy and diversity.

\end{enumerate}

\section{Diversity Degradation: RL as Sampling}
\label{sec:motivation}
\subsection{Preliminaries}
We consider LLM reasoning training with RL in a verifiable reward setting. Denote the set of questions as $\cX$, the training question set $\Xtrain \subseteq \cX$ and the test question set $\Xtest \subseteq \cX$. Further, define the space of intermediate text as $\cY$, and the answer space as $\cA$; we consider an LLM to be a policy $\pi: \cX \to \Delta(\cY \times \cA)$, i.e, given any question $x \in \cX$, the LLM generates a sample $(y, a) \sim \pi(\cdot \mid x)$, where $y\sim \pi(\cdot \mid x)$ is the intermediate reasoning trace (chain of thought) and $a \sim \pi(\cdot \mid x, y)$ is the final answer. By following the convention in \citep{guo2025deepseek} we have access to a ground truth reward $r: \cX \times \cA \to \{0,1\}$ that checks the correctness of the final answer. The evaluation metric for a given (dataset, policy) pair is defined in terms of the accuracy of the final answer: $J(\pi, \cX) = \En_{x \sim \unif(\cX)} \En_{(y,a) \sim \pi(\cdot \mid x)}\brk{r(x,a)}$. During RL training, we use the KL-regularized version of the objective $J(\pi, \Xtrain)$, which aims to find the $\pi^\ast$ such that
\begin{align*}
    \pi^\ast := \argmax_{\pi} \En_{x \sim \unif(\Xtrain)} \brk*{\En_{(y,a) \sim \pi(\cdot \mid x)}\brk*{r(x,a)} - \beta \KL(\pi(\cdot \mid x), \piref(\cdot \mid x))},
\end{align*}
where $\piref$ is our base LLM from which the RL training is initialized. In this paper, we consider the fully on-policy GRPO algorithm \citep{shao2024deepseekmath}, which optimizes the following objective:
\begin{align}\label{eq:grpo}
    \widehat{\En}_{x, \crl{y_i,a_i}_{i=1}^n \sim \pi(\cdot \mid x)} \brk*{ \frac{1}{n} \sum_{i=1}^n \hat A \prn*{x,\{y_i,a_i\}_{i=1}^n}_i - \beta \widehat{\KL}(\pi(\cdot \mid x), \piref(\cdot \mid x))},
\end{align}
where $\hat A \prn*{x,\{y_i,a_i\}_{i=1}^n}_i = \frac{r(x,a_i) - \mu\prn*{\{r(x,a_{i'})\}_{i'=1}^n}}{\sigma\prn*{\{r(x,a_{i'})\}_{i'=1}^n}}$ 
and $\widehat{\KL}(\pi(\cdot \mid x), \piref(\cdot \mid x))$ is estimated by $\log \prn*{\frac{\pi(y,a \mid x)}{\piref(y,a \mid x)}} +\frac{\piref(y,a \mid x)}{\pi(y,a \mid x)} - 1$, 
which is considered to enjoy lower variance than directly sampling $\log \prn*{\frac{\pi(y,a \mid x)}{\piref(y,a \mid x)}}$ \citep{schulman2020approximating}. Notice that it is known that this objective leads to a biased gradient of the regularized objective $J$, and incidentally minimizes the forward KL-divergence $\KL(\piref(\cdot \mid x), \pi(\cdot \mid x))$. \citep{tang2025few}. However, we will use this algorithm as it is a popular baseline and will call it vanilla RL in the following. 

Finally, given question $x$, we define all possible reasoning traces of LLM $\pi$ as $\cY^{\pi}(x) = \supp(\pi(\cdot \mid x))$, and thus the answer support of an LLM $\pi$ as $\cA^{\pi}(x) := \supp(\pi(\cdot \mid x, \cY^{\pi}(x)))$. 

\paragraph{Experiment setting:} In this paper, we primarily investigate LLM RL training for math reasoning. We test two models: $\llama$ \citep{dubey2024llama} and $\qwen$ models \citep{Yang2024Qwen25TR}. We use two datasets: an easy dataset and a medium difficulty dataset. The easy dataset is the train split of the $\mathdata$ dataset \citep{hendrycks2021measuring} with a total of 7500 questions. For the medium dataset, we subsample 3840 questions from the training set of $\dapo$ \citep{yu2025dapo}. 
We also include an additional hard dataset, where we keep the questions from the easy and medium dataset on which the base model's pass@512 is 0. 
To test, we use the $\mathtt{MATH\text{-}500}$, \citep{lightman2023let},  $\mathtt{AIME2024/2025}$, and $\mathtt{AMC23}$ datasets (dataset adopted from \citet{shafayat2025can}). To measure whether two given answers are different, we apply the $\mathrm{math\_verify}$ function in the verl \citep{sheng2024hybridflow} codebase, which treats two answers as the same as long as they are mathematically equivalent. This defines our reward function as well since it is the indicator function of whether a given answer is equivalent to the ground truth answer. Implementation details can be found in \pref{sec:app_imp_detail}. \looseness=-1

\begin{figure}
    \centering
    \includegraphics[width=0.24\linewidth]{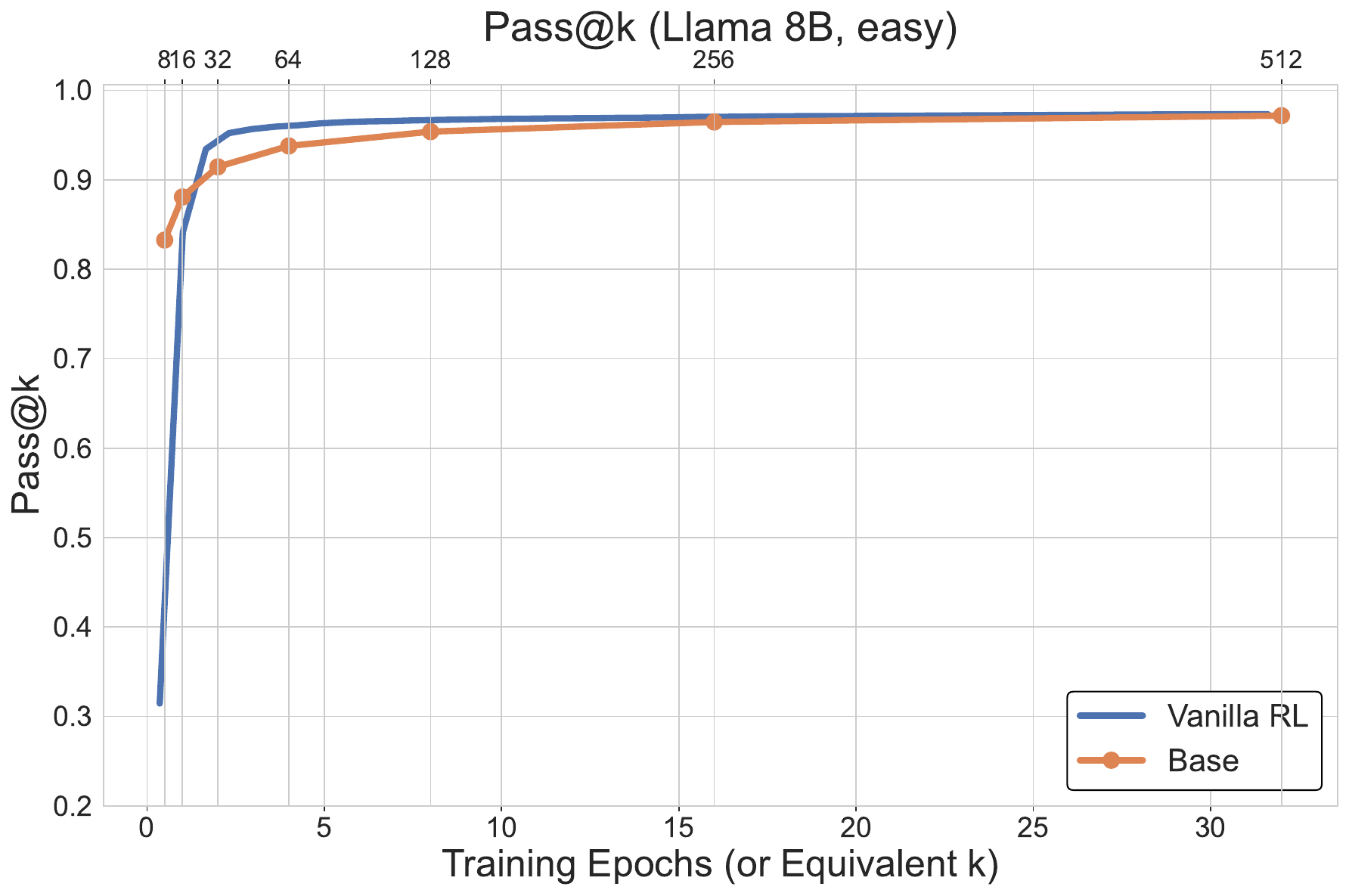}
    \includegraphics[width=0.24\linewidth]{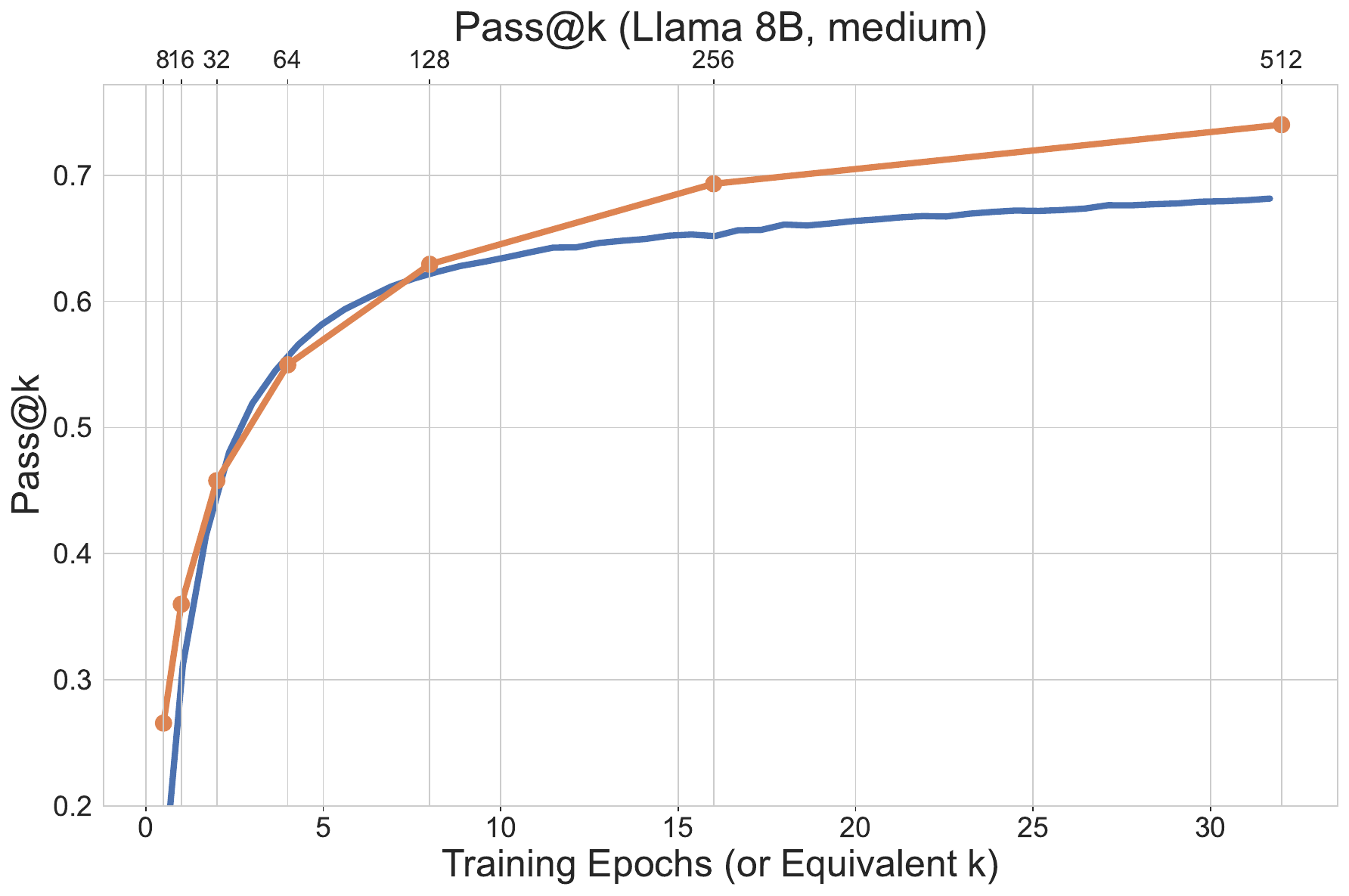}
    \includegraphics[width=0.24\linewidth]{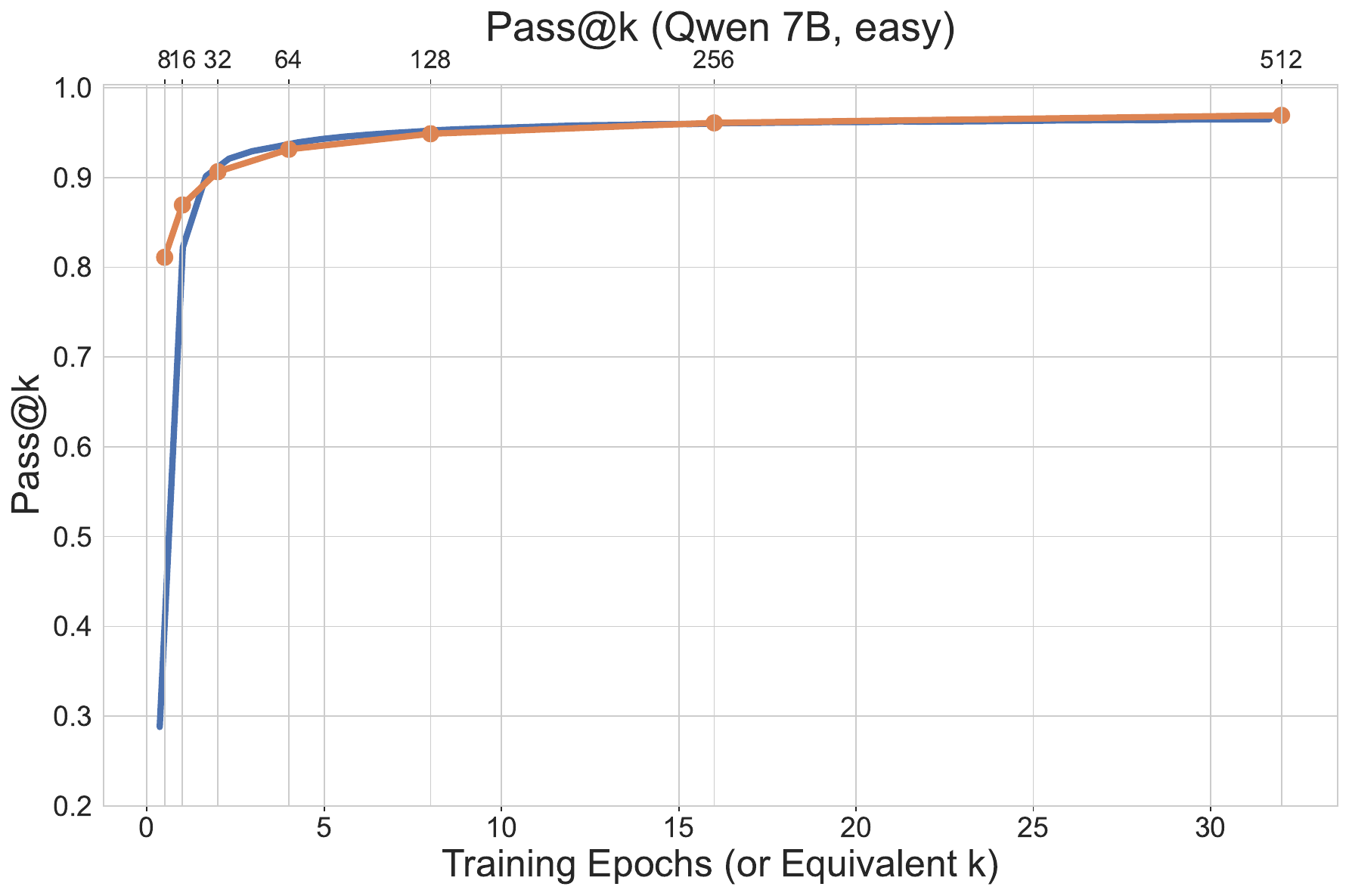}
    \includegraphics[width=0.24\linewidth]{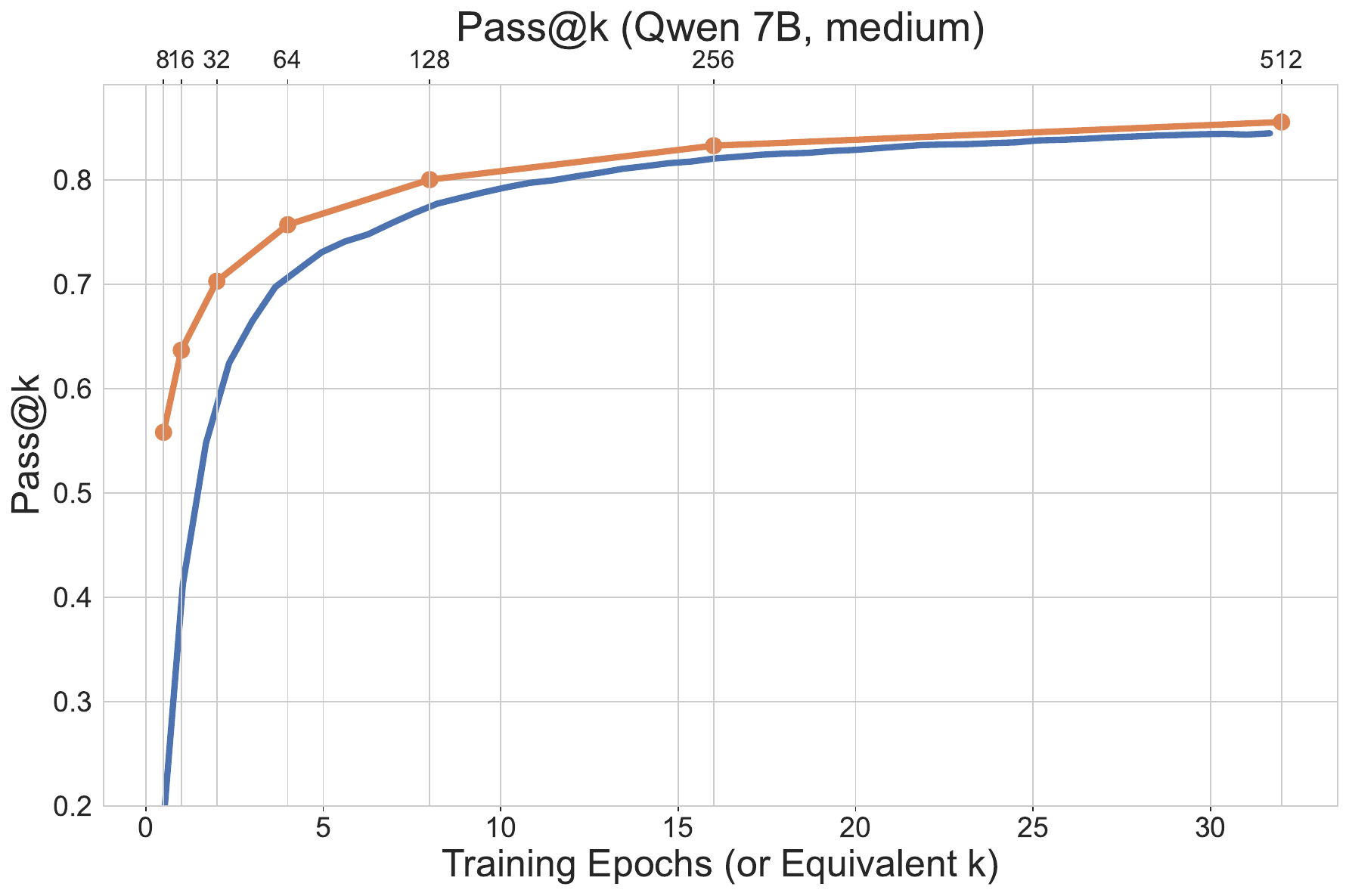}
    \includegraphics[width=0.24\linewidth]{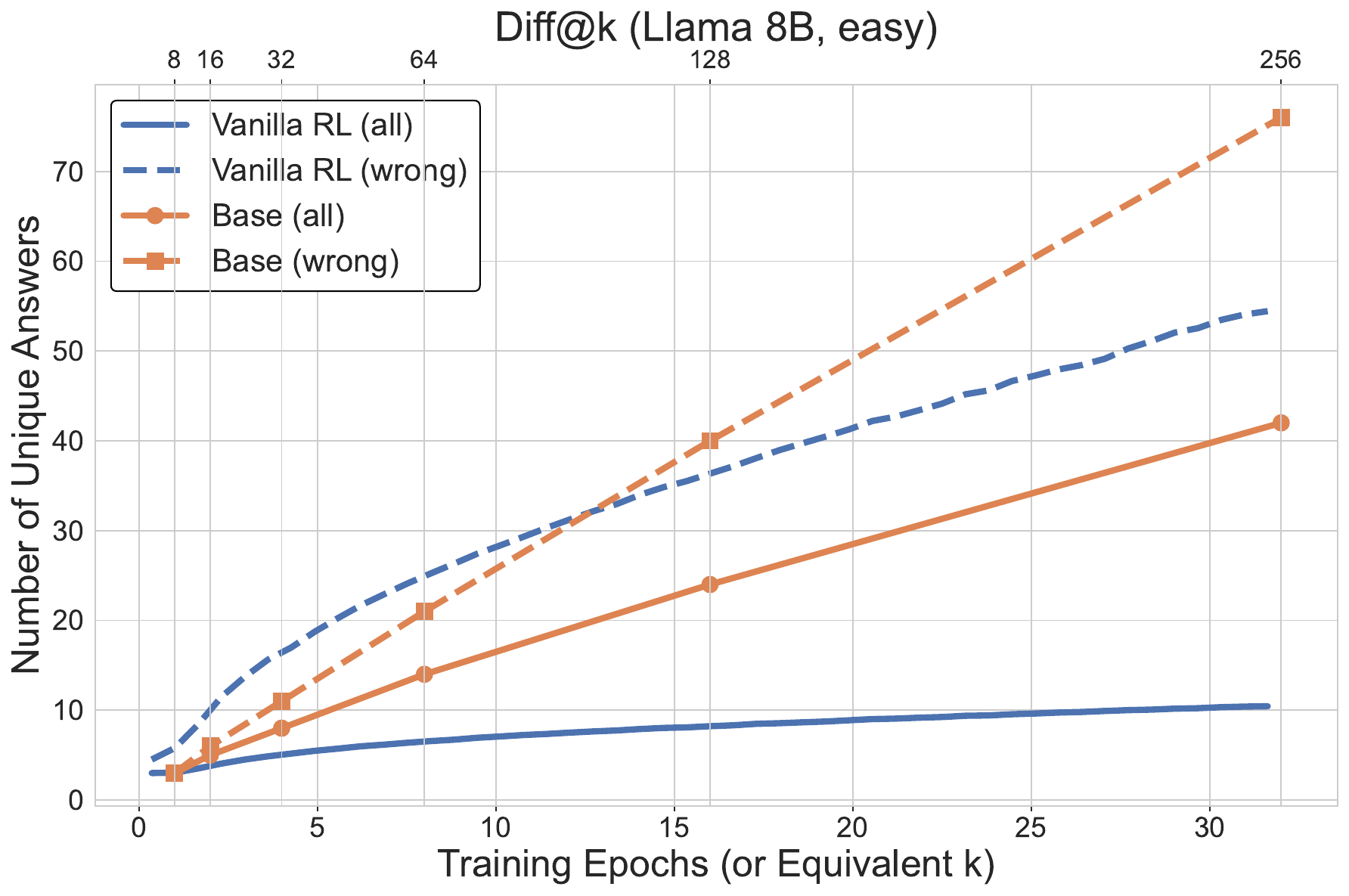}
    \includegraphics[width=0.24\linewidth]{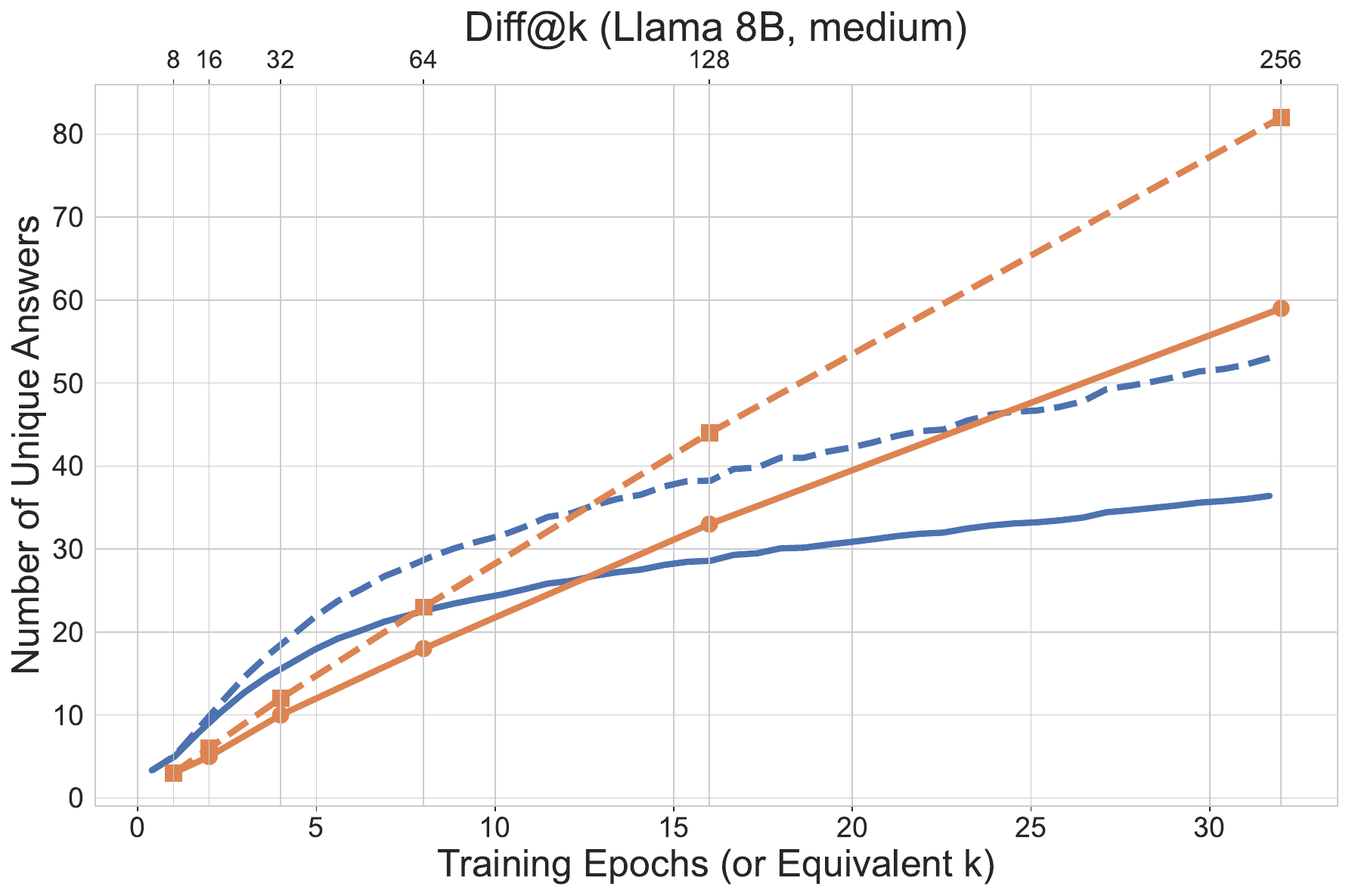}
    \includegraphics[width=0.24\linewidth]{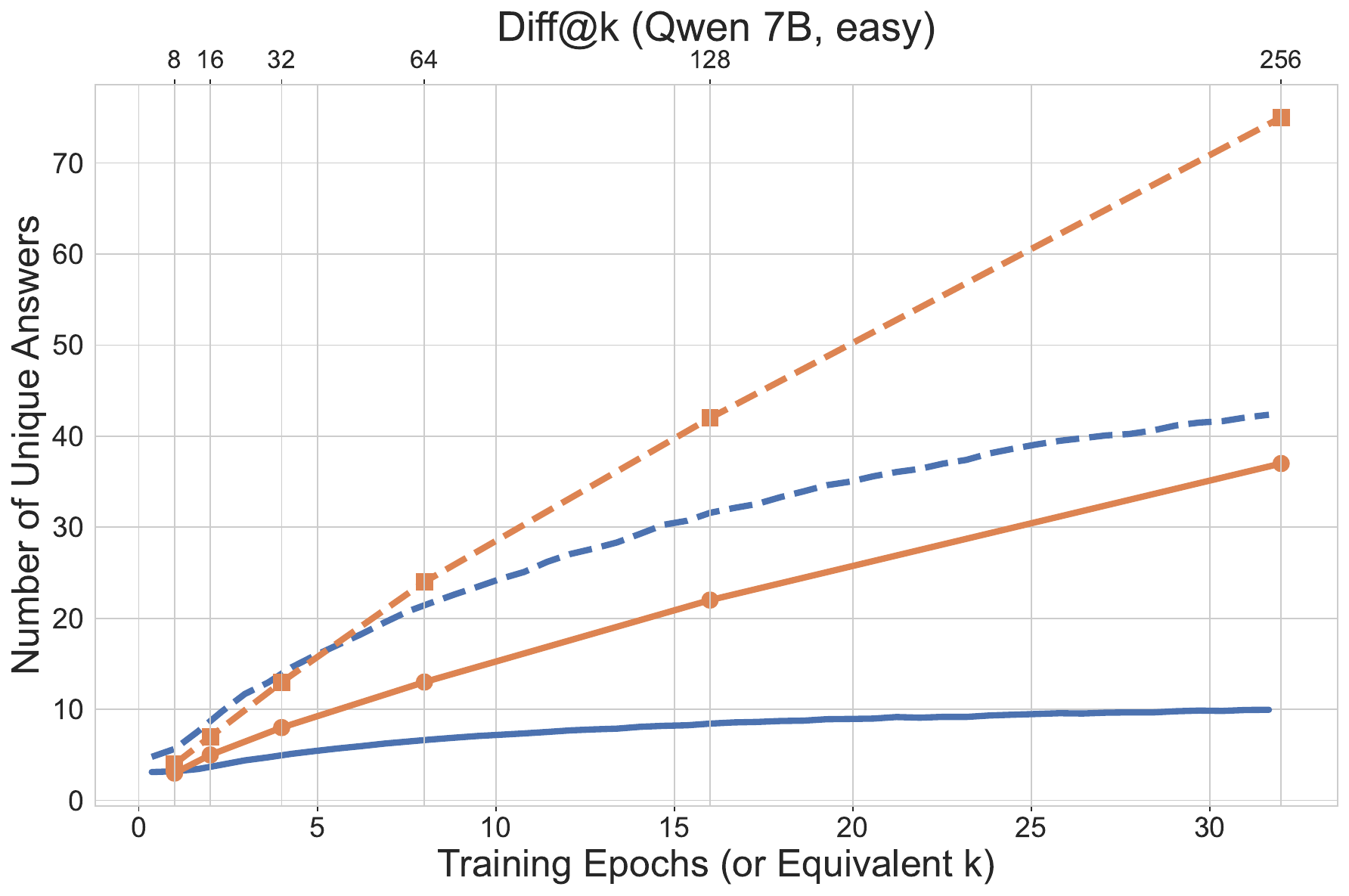}
    \includegraphics[width=0.24\linewidth]{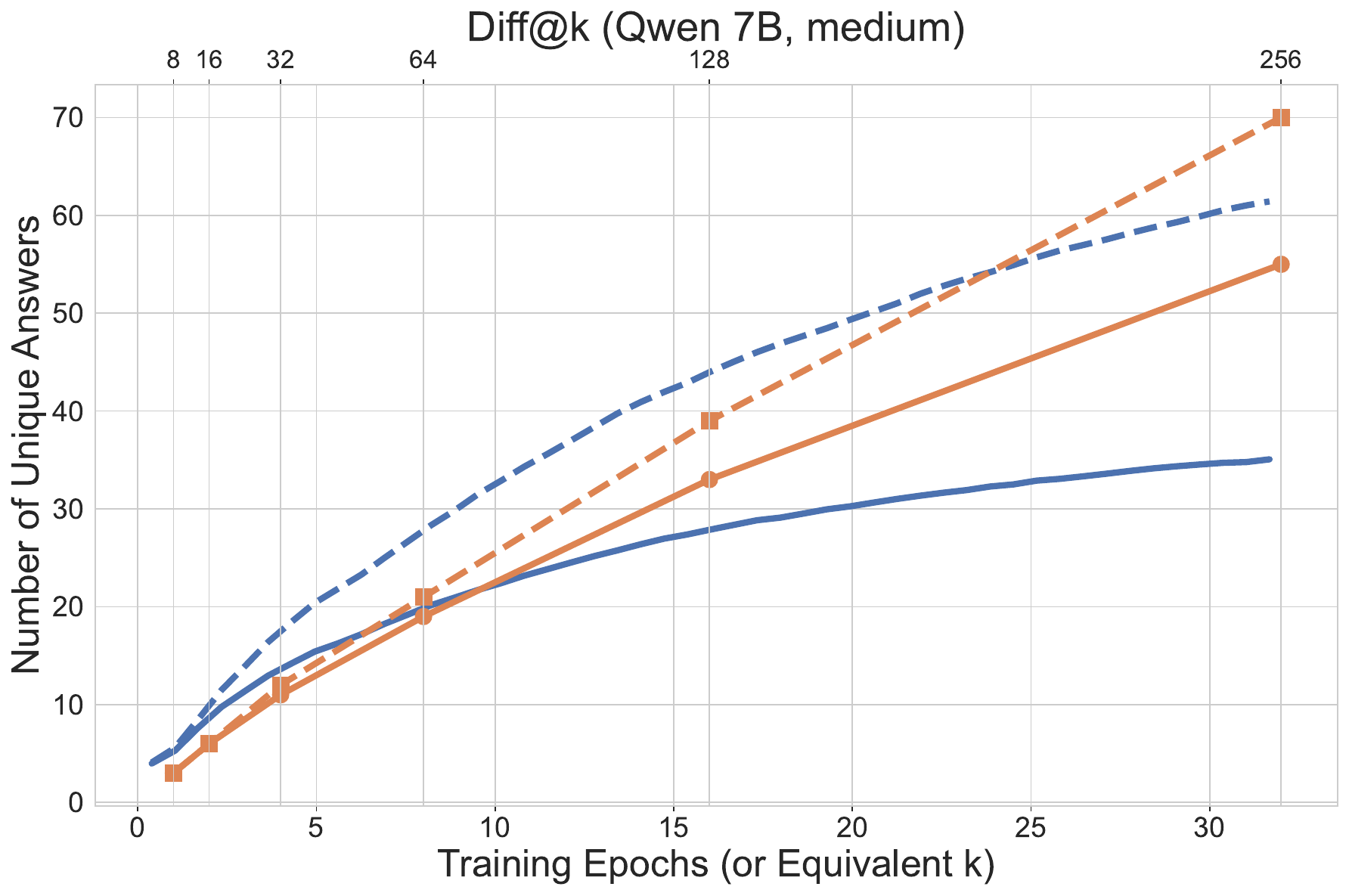}
    \caption{
    Comparison between RL training dynamics and base model sampling, on both easy and medium difficulty datasets, with $\llama$ and $\qwen$. Top row: number of questions solved so far; Bottom row: number of different answers sampled so far. The bottom x-ticks are the number of epochs $t$ for training, and the top x-ticks are the corresponding $k$ for sampling from the base model. We convert $k = n t$ where $n$ is the number of samples per epoch and $t$ is the epoch index. We use $n=16$ for pass@$k$ comparison and $n=8$ for diff@$k$ comparison. In the diff@$k$ comparison, solid lines denote the average number of different answers per all questions, and dashed lines denote the average number of different answers per unsolved questions (i.e., all answers are wrong so far). The fact that RL has lower diff@$k$ on unsolved questions than the base model indicates the transfer of diversity degradation.}
    \label{fig:rl_as_sampling}
\end{figure}

\subsection{Diversity Degradation during RL training}\label{sec:div_deg}
Recently it has been observed that, during LLM post training (either with SFT or RL), the diversity of the final policy decreases, as measured with the pass@$k$ metric with $k > 1$, over the test dataset \citep{song2024mind,dang2025weight,yue2025does,wu2025invisible}. However, the previous analysis only focused on comparing the base model $\piref$ with the final model checkpoint $\pi_T$, a single artifact of the RL training method. \looseness=-1

To understand how diversity degrades during RL training, we propose to examine the dynamics of RL training by considering RL as a sampling process on the training set. Specifically, in each epoch $t \in [T]$ during RL training, one samples $n$ trajectories for each question $x \in \Xtrain$. Thus given a base model $\piref$ and RL algorithm (denoted as $\alg$), we sample in total $nT$ trajectories for each question $x$, i.e., $\{y_i,a_i\}_{i=1}^{nT} \sim \alg(\piref, x)$. Now this allows us to directly compare with sampling the same amount of trajectories from the base model, i.e., $\{y'_i,a'_i\}_{i=1}^{k} \sim \piref(\Xtrain)$, where $k=nT$.

We conducted experiments on the $\llama$ and $\qwen$ models, trained on both the easy and medium difficulty datasets. To compare the RL training dynamics and the base model, we adopt two metrics: total number of questions solved and total number of distinct answers.
Note that these metrics correspond to the pass@$k$ and diff@$k$ metrics that are used to measure a fixed model. Recall that to convert a training epoch $t$ to $k$ in pass@$k$ and diff@$k$, we have $k = n t$, where in our experiments we use $n = 16$, and otherwise standard hyperparameters. We summarize the results in \pref{fig:rl_as_sampling}, and we make the following observations: \looseness=-1
\begin{itemize}
    \item \textbf{RL eventually solves fewer questions than the base model}. At the beginning of the RL training, the rate of questions solving is faster than the base model, which is expected, as RL quickly converges to the correct answers on the ones that it can easily solve. However, as training continues, the rate of question solving decreases faster than the base model, and eventually RL solves fewer questions than the base model with the same amount of samples \emph{on the training set}. 
    \item \textbf{Transfer of diversity degradation across questions.} 
    In an ideal setting where the training dynamics are independent across questions, vanilla RL training should never underperform the base model. This is because the model does not update on questions $x$ it has not solved yet (i.e., it receives zero gradient on those questions), so its behavior on those questions is equivalent to the base model, i.e., $\cA^{\pi_{\textsf{RL}}}(x) = \cA^{\piref}(x)$. The observed diversity degradation can therefore be explained as follows: once the model concentrates its answers on questions it has solved, this reduced diversity propagates to unsolved questions as well. To quantify this effect, we track the cumulative number of distinct answers sampled. We find that RL training yields lower diversity across all questions on average, and, more importantly, even lower diversity on the unsolved questions. We refer to this phenomenon as the \emph{transfer of diversity degradation}. \looseness=-1

    \item \textbf{Diversity is tractable on verifiable domains}. In general it is hard to predict that, given two generations from LLMs, whether they are semantically different or not. Naively measuring in token space results in an exponentially many candidates and thus is intractable. However, in the verifiable domain, we can use the final answer as a proxy to measure the diversity of the generations. From \pref{fig:rl_as_sampling}, we observe that given a large sample budget, we only have $|\cA^{\piref}(x)| < 50$ on average, which is tractable to measure and optimize. We refer to this property as {\em the tractability of the outcome space}. We will introduce our algorithms that leverage this property in the next section. 
\end{itemize} 

\section{Outcome-based Exploration}
\label{sec:method}
\subsection{Historical Exploration via UCB}
Given the observation that there are bounded number of final answers to search over, our training objective thus becomes to explore as many different answers (and their corresponding reasoning traces) as possible, while also rewarding the correctness of the answer. This problem is well studied in the bandit and RL literature, and the canonical solution for exploration is the upper confidence bound ($\ucbalg$) method \citep{auer2002finite,azar2017minimax}, which for each state and action adds an additional exploration bonus that is inversely proportional to its historical visitation counts, on top of the correctness reward. Thus, the training objective in \Cref{eq:grpo} becomes: 
\begin{equation}\label{eq:ucb}
    \widehat{\En}_{x, \crl{y_i,a_i}_{i=1}^n \sim \pi(\cdot \mid x)} \brk*{ \frac{1}{n} \sum_{i=1}^n \widehat A \prn*{x,\{y_i,a_i\}_{i=1}^n}_i + c b_{\ucb}(x,a_i) - \beta \widehat{\KL}(\pi(\cdot \mid x), \piref(\cdot \mid x))},
\end{equation}
where $c$ is a tunable hyperparameter and 
\begin{align*}
    b_{\ucb}(x,a) = \min\crl*{1,\sqrt{\frac{1}{N(x,a)}}},
\end{align*}
where $N(x,a)$ is the number of times we have sampled the answer $a$ for the question $x$. In practice, to prevent bonus hacking, we propose to further mask the final answer during the policy update. In this case, the gradient will only go through $\pi(y \mid x)$ but not $\pi(a \mid y,x)$.

\subsubsection{Naive UCB only Improves Training Performance}\label{sec:ucb_experiment}

\begin{figure}[t]
    \centering
    \includegraphics[width=0.49\linewidth]{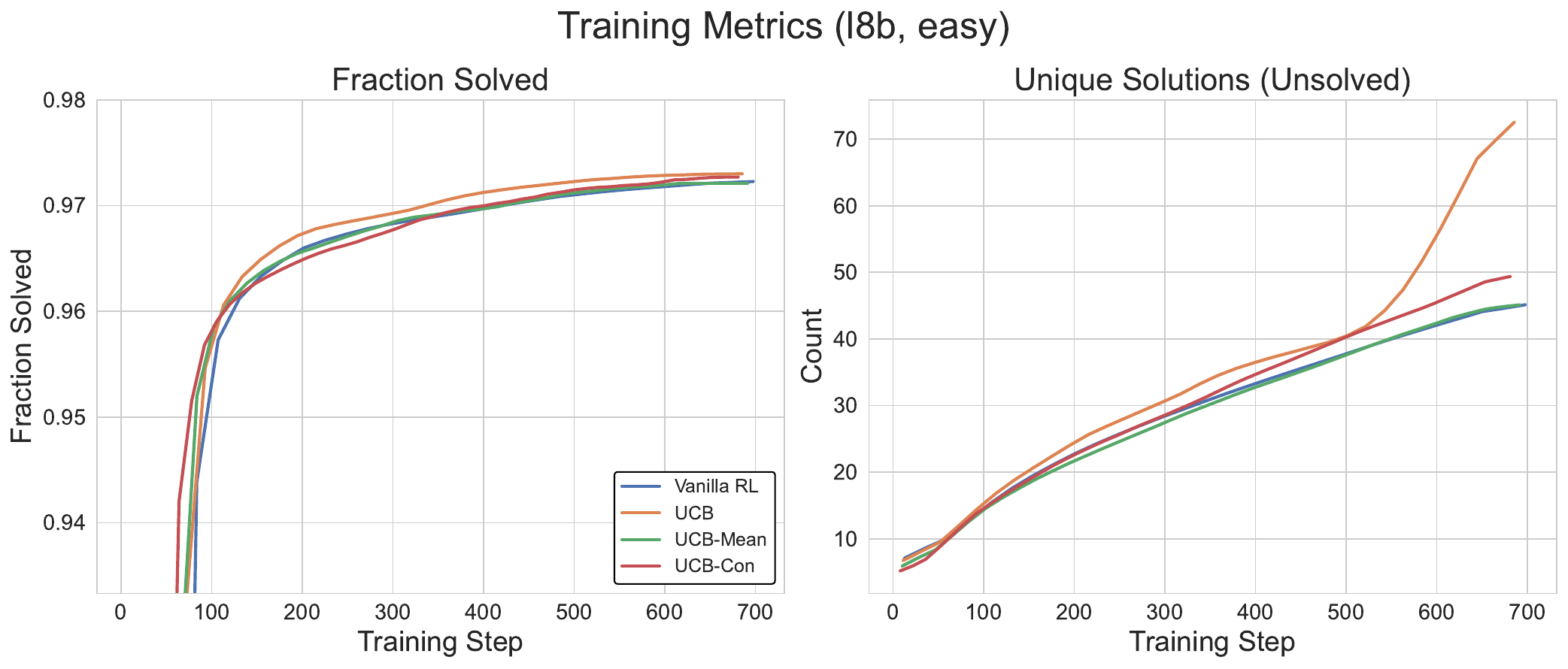}
    \includegraphics[width=0.49\linewidth]{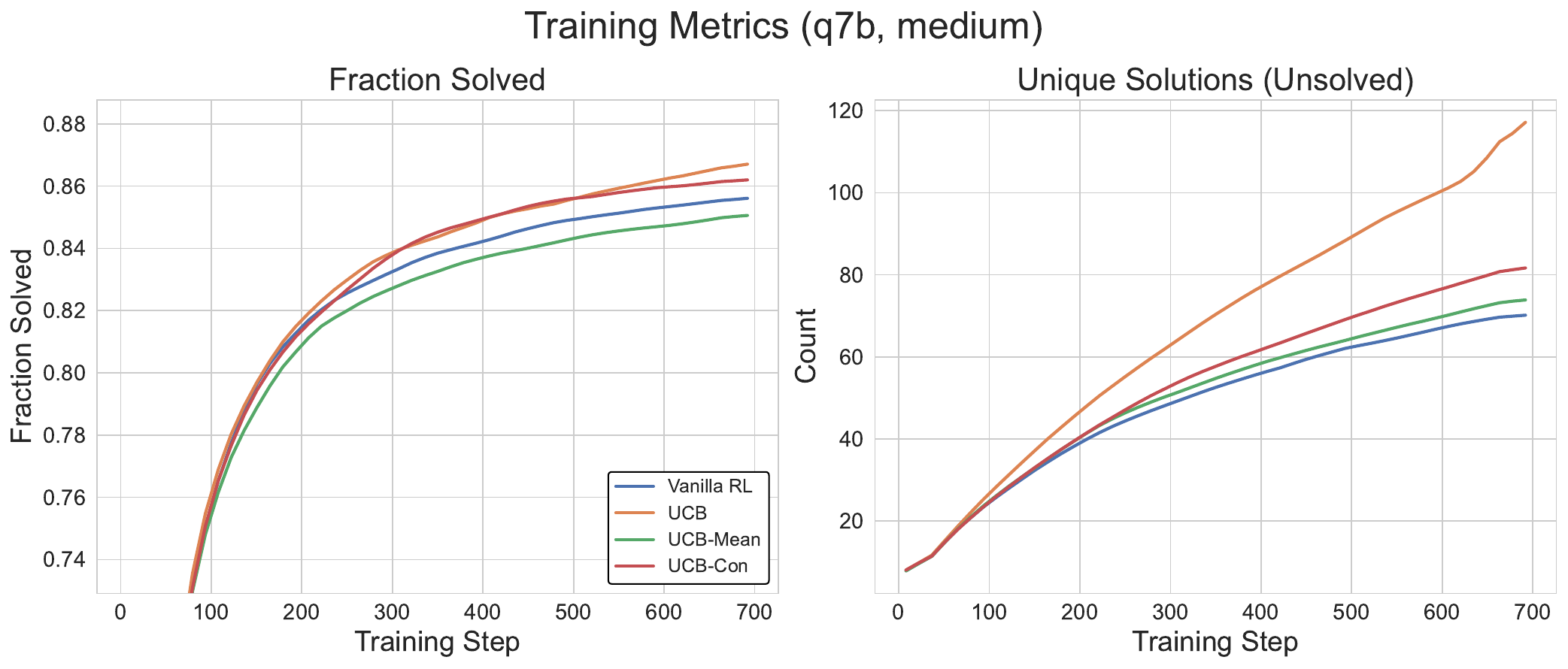}
    \caption{Training performance comparison between different $\ucbalg$ variants and the $\grpo$ baseline, with $\llama$ on the easy dataset (left) and $\qwen$ on the medium dataset (right). For each subplot: Left: fraction of questions solved so far; Right: number of different answers sampled on the questions that the model has yet to solve (i.e., sample one correct answer historically). The x-axis denotes the number of gradient updates as we train all models fully on-policy. We repeat each experiment with 3 different random seeds and plot the mean performance.}
    \label{fig:ucb_experiment_train_main}
\end{figure}

We present training results in \pref{fig:ucb_experiment_train_main,fig:app_ucb_train} and test results in \pref{fig:ucb_test_experiment_main,fig:app_ucb_test}. We observe that, although the $\ucbalg$ bonus improves the training performance consistently 
(and with a larger improvement on the harder dataset), it does not consistently improve the test performance across different models and datasets. In particular, we only observe a significant improvement on the easy dataset with the Llama3.1 8B model.

Originally, the design of $\ucbalg$ is due to the fact that, for any pair of state and action, the estimation error of the dynamics and reward scales with the order of $O(1/\sqrt{N(x,a)})$, and thus adding this bonus offsets this error and encourages the policy to explore uncertain states and actions. However, in the LLM reasoning setting, the dynamics and reward are both deterministic, and thus in the extreme case where the training dynamics is independent across questions, the policy should stop visiting an answer once it gets a reward of 0, because now it has a perfect estimation of the reward of this answer already. While in practice the training dynamics is not independent across questions, and intuitively the $\ucbalg$ bonus encourages the model to explore answers that it has not visited often and thus accelerates the training performance, we hypothesize that a redundant visitation of incorrect answers actually hurts the generalization performance.

\begin{figure}[t]
    \centering
    \includegraphics[width=0.93\linewidth]{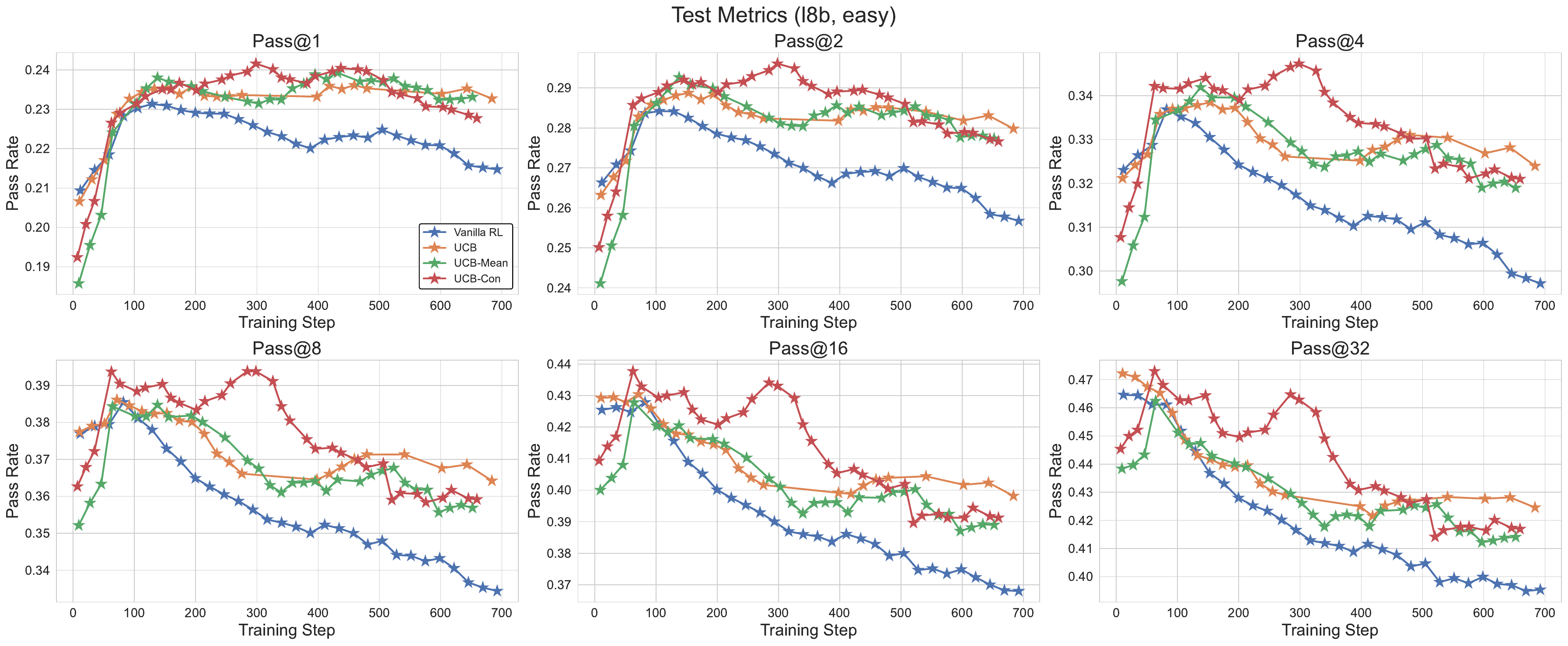}
    \includegraphics[width=0.93\linewidth]{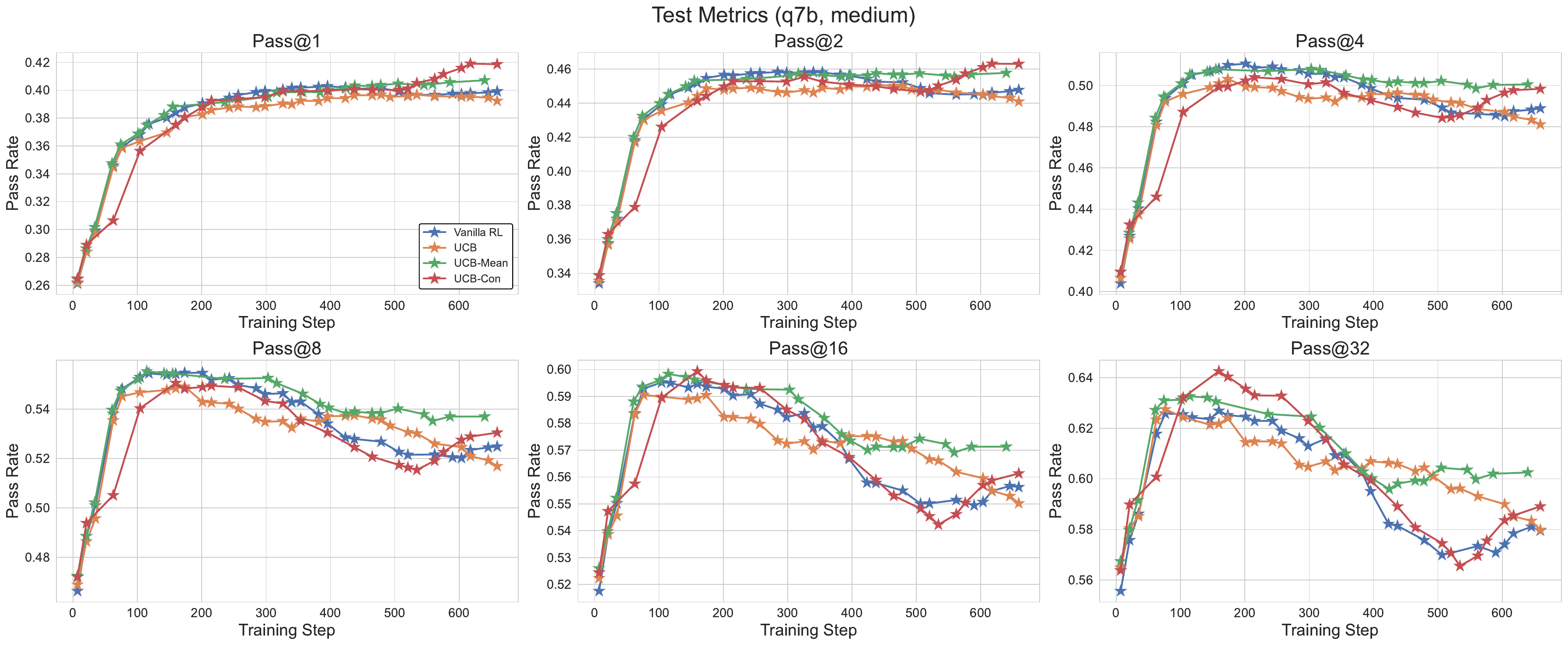}
    \caption{Test performance comparison between different $\ucbalg$ variants and the $\grpo$ baseline, with $\llama$ on the easy dataset (top) and $\qwen$ on the medium dataset (bottom). We report pass@$k$ for $k \in \{1,2,4,8,16,32\}$ at every 20 training steps. We repeat each experiment with 3 different random seeds and plot the mean performance (see \pref{sec:quant-results} for error bars). The metrics are calculated based on 32 samples per question during evaluation.}
    \label{fig:ucb_test_experiment_main}
\end{figure}

\subsection{UCB with a Baseline}

The above observation suggests that providing only positive exploration signals is not the most effective strategy where test performance is concerned. Instead, we propose incorporating a baseline into the bonus calculation, so that exploration signals are defined relative to this baseline and can be either positive or negative. A natural starting point—analogous to the $\grpo$ baseline—is to use the batch mean of the $\ucbalg$ bonus as the baseline. Concretely, we modify the objective in \pref{eq:ucb} by replacing $b_{\ucb}(x,a_i)$ with:
\begin{align*}
    \widehat B \prn*{x,\{y_i,a_i\}_{i=1}^n}_i = b_{\ucb}(x,a_i) - \frac{1}{n-1} \sum_{j\neq i}^n b_{\ucb}(x,a_j).
\end{align*}

We refer to this method as $\ucbalg$ with a mean baseline ($\ucbmean$). Intuitively, it encourages the model to explore answers that are less frequent in the current batch while penalizing those that appear more often. Although historically frequent answers tend to receive a negative signal, the batch-level baseline means that an answer can still receive a positive exploration signal if it is relatively underrepresented within the current batch. \looseness=-1

To avoid this issue, we propose a third method, $\ucbalg$ with a constant baseline ($\ucbconst$), where we simply use a constant as the baseline, i.e.,
\begin{align*}
    \widehat B \prn*{x,\{y_i,a_i\}_{i=1}^n}_i = b_{\ucb}(x,a_i) - b_0,
\end{align*}
where $b_0$ is a tunable hyperparameter. Note that this gives easy control over the tradeoff between positive and negative exploration signal \citep{arnal2025asymmetric}, even though in expectation the gradient of the baseline is 0. For example, if we set $b_0 = 0.5$, then an answer will get a positive exploration signal if it has been visited less than 4 times, and a negative signal otherwise. One issue with the baseline formulation is that, in the case where all answers in the batch are correct, then we have $A_i = 0$ for all $i$, and thus the exploration signal will dominate the training objective. After the beginning of the training, this objective will assign a negative gradient to the batch where all the answers are correct. To prevent this undesirable behavior, in this case (for both $\ucbmean$ and $\ucbconst$) we simply assign zero exploration bonus to all answers in the batch, thus recovering the regular $\grpo$ objective. 

\subsubsection{UCB with a Baseline Generalizes towards Test Performance}

We compare these three variants with the $\grpo$ baseline, and the results are summarized in \pref{fig:ucb_experiment_train_main,fig:ucb_test_experiment_main,fig:app_ucb_test,fig:app_ucb_train}. For the training performance, we observe that adding a baseline slightly hurts the training performance, but $\ucbconst$ still outperforms $\grpo$. On the other hand, both $\ucbmean$ and $\ucbconst$ consistently improve the test performance across different models and datasets. While $\ucbmean$ improves over $\ucbalg$ and $\grpo$, $\ucbconst$ achieves the best frontier performance as it achieves the best pass@$k$ performance for all $k$'s in most of the settings. Another observation is that under our large number of epochs training setup, vanilla RL ($\grpo$) sometimes suffers from overoptimization as even the pass@$k$ performance degrades after a certain number of epochs, while RL with exploration mitigates this issue. See Table \pref{tab:quant_results_best,tab:quant_results_final} for a quantitative comparison.

However, in general, one should not expect global exploration to achieve a high pass@$k$ when $k$ is large, especially at the end of the training. A pedagogical example is that, to maximize the exploration bonus, the model can generate a batch of the same answers that currently has the least visitation counts. Indeed, in the theoretical RL literature, the goal of exploration is usually to return a policy that is deterministic (and optimal) \citep{azar2017minimax} \footnote{Note that this is not necessarily true with the KL-regularized objective. However, note that the optimal KL-regularized policy $\pi^\ast \propto \piref \cdot \exp \prn*{\frac{1}{\beta} r}$, and with a small $\beta$ the optimal policy can still be near deterministic.}. While in general adding exploration bonus does provide better pass@$k$ with large $k$ than vanilla RL, we observe that in certain settings the final pass@$k$ for $k = 8,16,32$ is similar to that of vanilla RL.

\subsection{Batch Exploration}\label{sec:batch}

\begin{figure}[t]
    \centering
    \includegraphics[width=0.49\linewidth]{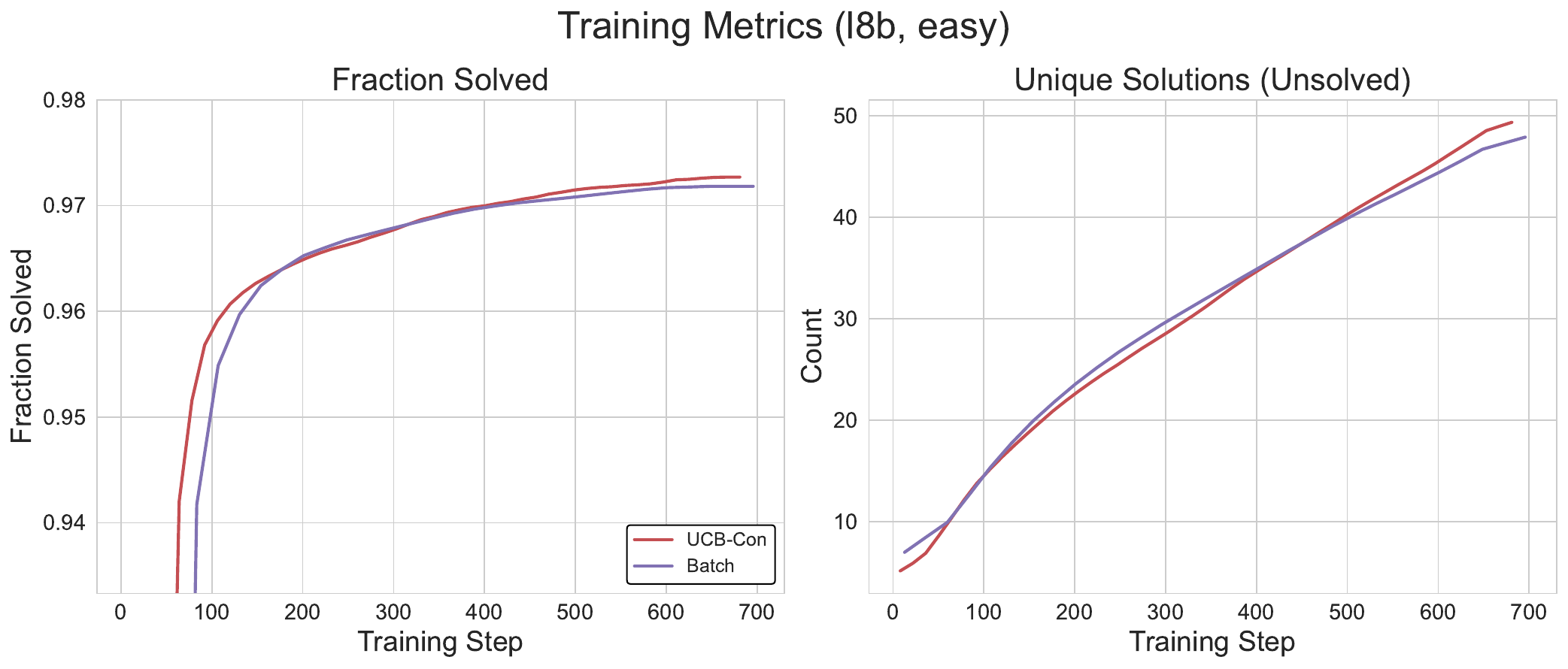}
    \includegraphics[width=0.49\linewidth]{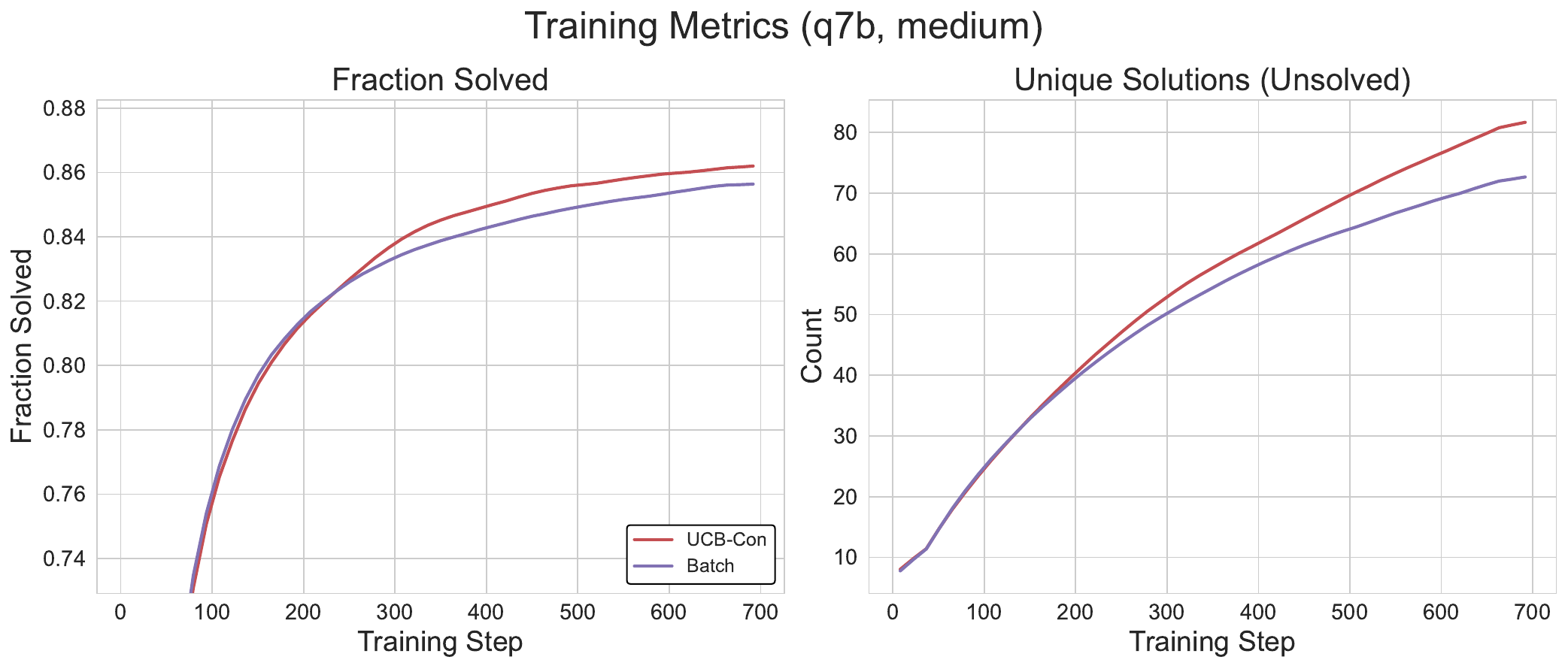}
    \caption{Training performance comparison between $\batchalgo$ and $\ucbconst$,  $\llama$ on the easy dataset (left) and $\qwen$ on the medium dataset (right). For each subplot: Left: fraction of questions solved so far; Right: number of different answers sampled on the questions that the model has yet to solve (i.e., sample one correct answer historically). The x-axis denotes the number of gradient updates as we train all models fully on-policy. We repeat each experiment with 3 different random seeds and plot the mean performance.}
    \label{fig:batch_train_experiment_main}
\end{figure}

The above issue suggests a fundamental but subtle difference between the goal of traditional RL exploration and the goal of exploration in the LLM reasoning setting. In traditional RL, the goal of exploration is to find the optimal policy which maximizes the expected return (corresponding to pass@1), while in the LLM reasoning setting, in addition to pass@1, sometimes we also care about the diversity of the generation which determines the model's capacity towards test-time scaling \citep{wu2024empirical}. To encourage the model to generate diverse answers, we consider a different exploration strategy, \emph{batch exploration}, which directly rewards the model to generate diverse answers regardless of their historical behavior. In particular, in batch exploration we propose the ($\batchalgo$) objective, with $b_{\ucb}(x,a_i)$ in \Cref{eq:ucb} replaced by:
\begin{align*}
    \bbatch \prn*{x,\{y_i,a_i\}_{i=1}^n}_i = - \frac{1}{n} \sum_{j\neq i} \indic \crl{a_i = a_j},
\end{align*}
where we simply penalize each answer based on how repetitive it is in the batch. We remark that we also experimented with the positive version of the batch exploration bonus where we provide a bonus of 1 for unique answers in the batch, but our result shows that such positive batch exploration bonus does not provide meaningful improvement in either training or test results.

\begin{figure}[t]
    \centering
    \includegraphics[width=0.93\linewidth]{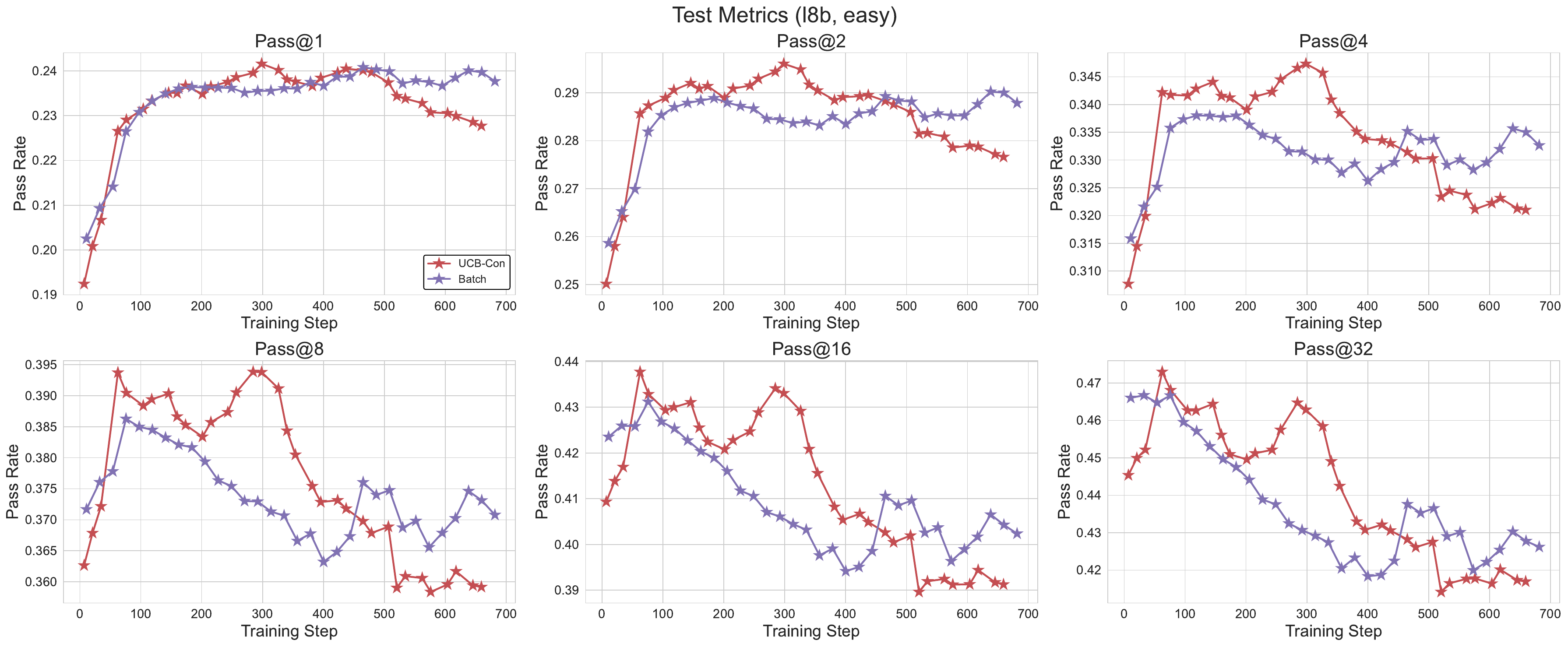}
    \includegraphics[width=0.93\linewidth]{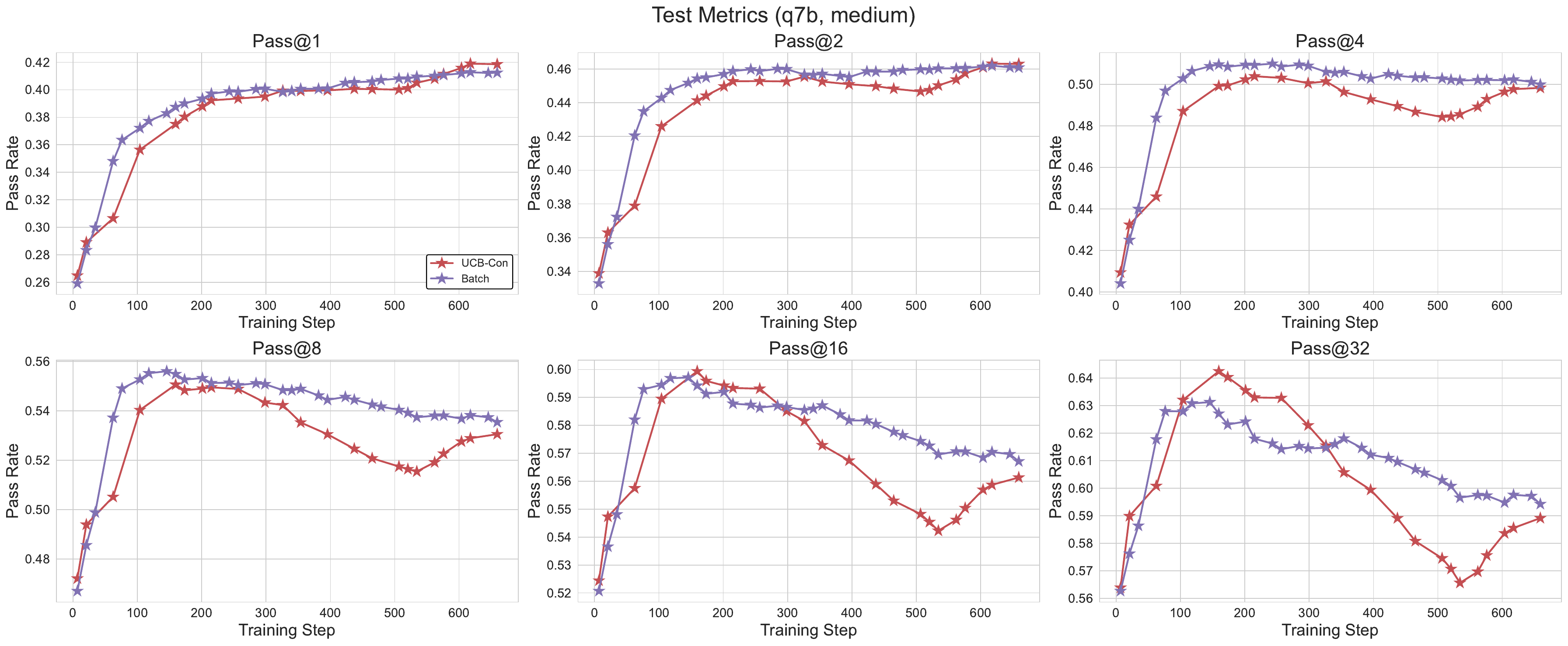}
    \caption{Test performance comparison between $\batchalgo$ and $\ucbconst$, with $\llama$ on the easy dataset (top) and $\qwen$ on the medium dataset (bottom). We report pass@$k$ for $k \in \{1,2,4,8,16,32\}$ at every 20 training steps. We repeat each experiment with 3 different random seeds and plot the mean performance (see \pref{sec:quant-results} for error bars). The metrics are calculated based on 32 samples per question during evaluation.}
    \label{fig:batch_test_experiment_main}
\end{figure}

We summarize the experimental results in \pref{fig:batch_train_experiment_main,fig:batch_test_experiment_main,fig:app_batch_train,fig:app_batch_test}. We focus on comparing $\batchalgo$ with $\ucbconst$ for a cleaner presentation since both methods outperform the $\grpo$ baseline consistently. We observe that in general, $\batchalgo$ achieves worse performance during the training, as measured by both the fraction of questions solved and the number of different answers generated. However, note that the objective of $\batchalgo$ is not designed to explicitly optimize these two metrics. One can also consider a pedagogical failure of batch exploration as the model keeps sampling the same $n$ distinct answers in each epoch for each question it could not solve yet, and thus no real exploration is performed during training.  As for test performance, in general $\batchalgo$ achieves similar peak pass@$k$ performance as $\ucbconst$ (with slight degradation in some settings), but $\batchalgo$ consistently achieves better diversity at the end of the training, as measured by the pass@$k$ performance for large $k$. See \pref{tab:quant_results_final} for a quantitative comparison. 
This suggests that batch exploration might be preferable if the objective is to achieve tradeoff between generation accuracy and diversity at test time.

\section{Additional Analysis}
\label{sec:discussion}
\subsection{Historical vs. Batch Exploration}\label{sec:his_vs_batch}

In the previous section, we compared historical exploration and batch exploration in terms of their training dynamics. Overall, historical exploration is superior, as it solves more questions and accumulates more diverse answers over time. This is expected, since both metrics are inherently historical in nature. In this section, we turn to additional aspects of exploration, focusing in particular on batch-level statistics.

\paragraph{Generation Entropy.} Entropy has been used as a measure of model diversity and as a tool to encourage exploration \citep{cheng2025reasoning,zheng2025first}. We compare token-level entropy averaged over the whole reasoning trajectory (including the outcomes), at training step 400 of the $\qwen$ model on the medium dataset, trained with $\grpo$, $\ucbconst$, and $\batchalgo$. For each method, we report the average entropy of correct and incorrect generations separately (\pref{tab:entropy}). As expected, correct generations have lower entropy than incorrect ones. Among incorrect generations, however, $\batchalgo$ achieves substantially higher entropy than both $\grpo$ and $\ucbconst$. Since entropy is measured on the current model rather than accumulated over training, this suggests that batch exploration yields generations with higher entropy, which reflects greater variability and potentially more diversity, when evaluated at a single checkpoint. That said, the absolute entropy values remain low across all methods, consistent with the fact that we do not explicitly optimize for entropy, unlike entropy-regularized exploration approaches.

\begin{table}[h]
    \centering
        \caption{Entropy comparison of $\grpo$, $\ucbconst$ and $\batchalgo$, measured on correct generation, incorrect generation and all generations. We repeat for 2 random seeds and report the mean and standard deviation (in parentheses).}
      \begin{tabular}{c|c|c|c}
        \toprule
        & Correct Generation & Incorrect Generation & All \\
        \midrule
      $\grpo$ & 0.080 (0.01) &  0.096 (0.04)     &0.095 (0.02)\\ 
      $\ucbconst$ & 0.084 (0.01)& 0.103 (0.03)  & 0.100 (0.02)\\ 
      $\batchalgo$ & 0.086 (0.01)& 0.153 (0.07) & 0.125 (0.03)\\ 
      \bottomrule
      \end{tabular}\\
        \label{tab:entropy}
\end{table}

\paragraph{Batch Generation Diversity.} To directly measure batch-level diversity, we consider the number of distinct answers sampled within each batch. Results are shown in Table~\ref{tab:batch_diversity}. As expected, $\batchalgo$ consistently produces more distinct answers than $\ucbconst$, since it directly optimizes for batch diversity.

\begin{table}[h]
    \centering
        \caption{Comparison of different exploration strategies based on the number of different answers sampled in a batch with size of 8. We additionally cluster the statistics based on whether the question has been solved. We repeat for 2 random seeds and report the mean and standard deviation (in parentheses).}
      \begin{tabular}{c|c|c|c}
        \toprule
        &  Solved Question & Unsolved Question & All\\
        \midrule
      $\grpo$ & 2.279 (0.018) & 4.805 (0.075)  &2.883  (0.024)\\ 
      $\ucbconst$ & 2.272 (0.020) & 4.855 (0.084) & 2.926 (0.035)\\ 
      $\batchalgo$ & 2.284 (0.057) &  5.390 (0.102)  & 3.230 (0.062)\\ 
      \bottomrule
      \end{tabular}\\
    \label{tab:batch_diversity}
\end{table}

Finally, we remark on the interaction between historical and batch exploration. In principle, it is possible to construct counterexamples where historical exploration converges to a nearly deterministic policy—thus sacrificing test-time diversity—or where batch exploration cycles through a small set of answers without improving training dynamics. These pathologies highlight that the two notions are not guaranteed to substitute for one another. In practice, however, our empirical results suggest a complementary relationship: historical exploration, by encouraging broader coverage of the training space, naturally increases the diversity available to each batch, while batch exploration, by promoting variation within each batch, in turn helps prevent premature collapse during training. Taken together, these findings indicate that historical and batch exploration are not mutually exclusive. 

\subsection{Outcome-Based Bandits}\label{sec:theoretical-results-summary}
To better understand the role of exploration bonuses in our setting, we provide theoretical analysis in a simplified bandit model. While bandits are of course a coarse abstraction of LLM post-training, they have repeatedly yielded useful insights on sample efficiency, algorithmic tradeoffs, and problem difficulty in this context \citep{zhu2023principled,rafailov2023direct,azar2024general,chang2024dataset,song2024importance}. Here, we present a bandit formulation that captures the gap between the large reasoning-trace space and the much smaller answer space, and use it to explain why outcome-based $\ucbalg$-style exploration is a principled strategy.

Concretely, we consider a stochastic bandit with a large set of arms $\cA$, $|\cA|=K$, but with an additional, much smaller set of outcomes $\cO$, $|\cO|=m \ll K$. Each arm $a \in \cA$ maps to an outcome $\phi(a) \in \cO$, and the reward of an arm depends only on its outcome:
\[
\En[R \mid a] = \mu(\phi(a)) =: \mu(o).
\]
Intuitively, $\cA$ corresponds to reasoning traces in the LLM setting, while $\cO$ corresponds to final answers. This abstraction captures the fact that the answer space is far smaller than the trace space.

A natural hope is that by exploiting the outcome-partition structure, we can achieve regret bounds that depend only on the number of outcomes $m$, not the number of arms $K$. This would mirror the LLM setting, where training on one reasoning trace should generalize to others that yield the same answer. However, without further assumptions, we show that this is not possible: even with outcome partitioning, the problem can remain as hard as the standard $K$-armed bandit.

\begin{theorem}[Informal version of \pref{thm:lb-no-gen-minimax}]
For any algorithm, there exists an outcome-partitioned bandit instance with $K$ arms and $m$ outcomes such that the expected regret after $T$ rounds is at least $\Omega\left(\min\{T, K\}\right)$.
\end{theorem}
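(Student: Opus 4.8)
The plan is to reduce to the classical multi-armed bandit lower bound by constructing an outcome-partitioned instance in which the outcome structure carries no useful information. The key observation is that we get to choose the partition map $\phi:\cA\to\cO$ adversarially, so we can make it degenerate in the direction that matters: let one outcome, say $o_1$, be the image of a single ``special'' arm $a^\star$, and let the remaining $K-1$ arms all map to a second outcome $o_0$ whose mean reward is fixed at some baseline, say $1/2$. Then knowing $\phi$ tells the learner nothing about which of the $K-1$ baseline arms is actually $a^\star$ until it pulls $a^\star$ itself — the outcome partition does not refine the set of candidate best arms any faster than pulling arms one at a time would. (If one prefers $m$ genuinely nontrivial outcomes rather than $2$, spread $a^\star$'s ``twin'' and a handful of distractor arms across $o_1,\dots,o_{m-1}$; the argument is unchanged since $m\le K$.)

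Concretely, I would set up a family of instances $\{\nu_j\}_{j=1}^{K-1}$ indexed by which baseline arm is secretly the good one: in $\nu_j$, arm $a_j$ has reward mean $1/2+\Delta$ (with, say, Bernoulli rewards and $\Delta$ a small constant like $1/4$), every other arm has mean $1/2$, and the partition $\phi$ is held fixed across all instances — all of $a_1,\dots,a_{K-1}$ map to $o_0$ and one extra dummy arm $a_K$ maps to $o_1$. Crucially, $\phi$ is the same for every $\nu_j$, so observing the partition reveals nothing. The regret in $\nu_j$ for any round spent on an arm $\neq a_j$ is $\Omega(\Delta)$. Now the standard argument applies: if $T\le K/2$, then for any algorithm there is a $j$ such that the algorithm pulls $a_j$ in expectation at most $2T/K \le 1$ times under the all-$1/2$ null instance, hence (by a change-of-measure / Pinsker bound, exactly as in the $\Omega(\sqrt{KT})$ proof but stopped at the ``haven't found the arm yet'' regime) it also pulls $a_j$ rarely under $\nu_j$, so it incurs regret $\Omega(\Delta\cdot T)$. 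If instead $T > K/2$, a learner still cannot do better than $\Omega(\Delta\cdot K)$ because identifying $a_j$ among $K-1$ indistinguishable-looking arms requires $\Omega(K)$ pulls in the worst case (again by an averaging-over-$j$ / Yao-style argument). Combining the two regimes gives expected regret $\Omega(\min\{T,K\})$, absorbing the constant $\Delta$ into the $\Omega(\cdot)$.

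The main obstacle — really the only subtlety — is making sure the outcome partition genuinely provides no leverage, i.e.\ that the lower bound instances are indistinguishable even to a learner who fully exploits the known map $\phi$ and the outcome of every pull. This is handled precisely by keeping $\phi$ identical across the instance family and putting all the ``signal'' arms in a single outcome block, so that the outcome observation $\phi(a_t)$ is a deterministic function of the action and carries zero extra information beyond the reward itself. Once that is arranged, the information-theoretic core is just the textbook multi-armed bandit minimax lower bound (e.g.\ via the KL-divergence decomposition of the trajectory law and Pinsker's inequality), and no new machinery is needed; I would either cite the standard result or reproduce the two-line change-of-measure computation for completeness. The formal theorem statement would then quantify $\Delta$ explicitly (any fixed constant works since we only claim $\Omega(\min\{T,K\})$, not the sharp $\sqrt{\cdot}$ rate), and note that the construction uses exactly $m$ outcomes with $m$ allowed to be as small as $2$.
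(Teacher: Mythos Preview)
Your construction violates the defining constraint of the outcome-partitioned model. The setup stipulates that rewards depend only on the outcome, i.e.\ $\EE[R\mid a]=\mu(\phi(a))$, so every arm in a given outcome class has \emph{the same} mean. In your family $\{\nu_j\}$ you place all of $a_1,\dots,a_{K-1}$ in the single class $o_0$ yet give $a_j$ mean $1/2+\Delta$ and the others mean $1/2$; that is not an outcome-partitioned bandit instance at all. And if you repair this by setting $\mu(o_0)$ consistently, then with $\phi$ held fixed across $j$ all your instances $\nu_j$ collapse to a single instance, and there is nothing to average over. Your remark that the outcome label ``carries zero extra information beyond the reward itself'' is true only because you have built instances that are outside the model.

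The paper handles this by varying the partition $\phi$ rather than the means: the sizes $\{s_o\}$ and the mean profile $\{\mu(o)\}$ are fixed, with a unique optimal outcome of size $s_\star=1$, and the single optimal arm is placed uniformly at random among the $K$ arm indices. Because $\phi$ is unknown to the learner (this is part of the setup), the learner cannot tell which arm sits in the optimal class until it pulls that arm and observes the outcome label. A Yao argument over this random placement reduces to bounding the first-hitting time $\tau_\star$ of the optimal arm by a deterministic policy; a direct hypergeometric/coupon calculation gives $\EE[\min\{T,\tau_\star-1\}]\ge c\min\{T,K\}$, and multiplying by $\Delta$ finishes. No change-of-measure or Pinsker step is needed, since hitting the optimal arm is a deterministic event given the action sequence. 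The minimal fix to your proposal is therefore: keep the mean profile fixed and let $\phi$ vary so that in instance $\nu_j$ arm $a_j$ is the sole member of $o_\star$; then your averaging-over-$j$ plan goes through and becomes essentially the paper's argument.
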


This lower bound highlights the need for an additional assumption: that policy updates on one arm can generalize to other arms yielding the same outcome. Under such a generalization assumption, we recover the desired dependence on $m$, with an algorithm performing $\ucbalg$-style exploration over the outcome space.

\begin{theorem}[Informal version of \pref{thm:pa-ucb}]
Under Assumption \pref{ass:strong-gen}, there exists an algorithm that achieves an expected regret after $T$ rounds of
\[
\En[R_T] \leq O\left(\sqrt{mT\log T}\right).
\]
\end{theorem}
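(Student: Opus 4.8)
The plan is to analyze a UCB algorithm that operates directly on the outcome space $\cO$ rather than the arm space $\cA$. Because of the generalization assumption (Assumption~\ref{ass:strong-gen}), every pull of an arm $a$ with $\phi(a)=o$ produces an unbiased sample of $\mu(o)$ and—crucially—updates our knowledge of the entire equivalence class $\phi^{-1}(o)$ simultaneously. So the relevant ``effective'' number of bandit arms is $m$, not $K$. Concretely, I would maintain for each outcome $o\in\cO$ a running count $N_t(o)$ of how many times an arm mapping to $o$ has been pulled, together with the empirical mean reward $\hat\mu_t(o)$, and at round $t$ play any arm whose outcome maximizes the index $\hat\mu_t(o) + \sqrt{2\log T / N_t(o)}$ (with the usual convention that unvisited outcomes have infinite index, so the first $m$ rounds sweep all outcomes). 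This is exactly the outcome-based $\ucbalg$ bonus $b_{\ucb}$ from \Cref{eq:ucb} specialized to the bandit model.

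The core of the argument is the standard UCB regret decomposition, but run over outcomes. First I would establish the good event: by Hoeffding's inequality and a union bound over the $m$ outcomes and $T$ rounds, with probability at least $1 - 1/T$ we have $|\hat\mu_t(o) - \mu(o)| \le \sqrt{2\log T/N_t(o)}$ for all $o$ and all $t$ simultaneously. On this event, the usual two-part UCB argument shows that whenever a suboptimal outcome $o$ (with gap $\Delta_o := \mu(o^\star) - \mu(o) > 0$) is selected, its count satisfies $N_t(o) \lesssim \log T / \Delta_o^2$; summing the per-pull regret $\Delta_o$ over these pulls gives the instance-dependent bound $\sum_{o:\Delta_o>0} \log T/\Delta_o$. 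Converting to the worst-case (minimax) bound is the textbook trick: split outcomes into those with $\Delta_o \le \sqrt{m\log T/T}$ (each contributes at most $\sqrt{m\log T/T}$ per round, totaling $\le \sqrt{mT\log T}$ across all $T$ rounds) and those with larger gap (contributing $\sum_o \log T/\Delta_o \le m\log T/\sqrt{m\log T/T} = \sqrt{mT\log T}$), plus the $O(1)$ contribution from the low-probability bad event. This yields $\En[R_T] \le O(\sqrt{mT\log T})$ as claimed.

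The one genuinely non-routine step is handling how the generalization assumption interacts with the regret definition. Regret is measured against the best \emph{arm} $\mu^\star = \max_a \mu(a) = \max_o \mu(o)$, so I must confirm that the outcome-level optimum coincides with the arm-level optimum (immediate, since rewards depend only on outcomes) and, more subtly, that Assumption~\ref{ass:strong-gen} is strong enough that a single pull of \emph{some} arm in class $o$ really does give us a valid reward sample for the class—i.e., that there is no within-class heterogeneity the adversary could exploit. If the assumption only guarantees generalization in a weaker sense (e.g., matching means but not shared samples, or shared value estimates only after sufficiently many pulls), the Hoeffding concentration step needs to be replaced by whatever concentration the assumption actually affords, and the $\log T$ factor or the leading constant may shift. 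I would therefore state the argument modularly: (i) reduce to an $m$-armed problem via the assumption, (ii) invoke the standard minimax UCB bound for $m$ arms. Step (i) is where all the modeling content lives; step (ii) is off-the-shelf.
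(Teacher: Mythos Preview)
Your approach is essentially the paper's: reduce to an $m$-armed bandit via a discovery phase enabled by strong generalization, then invoke the standard minimax UCB bound (the paper packages the latter as Lemma~\ref{lem:ucb-gapfree} and the discovery contribution as at most $m$). Two small clarifications will make your write-up tight: first, the claim that ``the first $m$ rounds sweep all outcomes'' is not automatic---it holds precisely because Assumption~\ref{ass:strong-gen} lets you \emph{exclude} the entire class $\cA_o$ from the fresh-probe pool once $o$ is seen, so each subsequent fresh pull is guaranteed to land in a new outcome (this is the content of the paper's discovery phase, yielding $\tau_{\mathrm{disc}}\le m$ almost surely); second, your within-class heterogeneity worry is a red herring, since $\EE[R\mid a]=\mu(\phi(a))$ is part of the reward model itself, not something Assumption~\ref{ass:strong-gen} needs to supply.
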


In other words, applying a $\ucbalg$-style bonus at the outcome level recovers the standard regret bound of an $m$-armed bandit, as opposed to the original $K$-armed bandit. This provides theoretical justification for our outcome-based exploration algorithms.

\arxiv{
\section{Related Work}
\label{sec:related_work}
\subsection{Diversity Degradation in LLM Post Training}
Reinforcement Learning has become the de facto method for finetuning large language models (LLMs) towards specific objectives, such as maximizing human preference \citep{ouyang2022training}, or improving the reasoning ability of LLMs \citep{jaech2024openai}. In the reasoning domain, it has been shown that simply rewarding the model based on the correctness of the final answer, without any intermediate reward, can significantly improve the final accuracy \citep{shao2024deepseekmath,guo2025deepseek}. However, it has been observed that, during the RL training (or even SFT), the diversity of the generations decreases significantly \citep{song2024mind,dang2025weight,yue2025does,wu2025invisible}, as measured with the pass@$k$ metric. In the non-reasoning domain, similar observations have also been made, where post-training improves the performance of the model on the main metric, but at the cost of losing diversity, measured by either semantic or syntactic metrics \citep{kirk2023understanding,o2024attributing,yun2025price}. 

\subsection{Exploration in LLM Post Training}
Enhancing exploration during RL training has been considered the key towards addressing diversity issues during either training or testing. In the preference fine-tuning domain, \citet{xie2024exploratory,cen2024value,zhang2024self} propose to use the likelihood of the base model as an exploration bonus. \citet{lanchantin2025diverse} proposes to label ranking of the data based on their diversity in the preference learning process. \citet{xiong2023iterative,bai2025online} theoretically analyzes the guarantees of RL with exploration under the linear setting.  In the reasoning domain, \citep{gao2025navigate} directly uses Random Network Distillation \citep{burda2018exploration}, a canonical exploration bonus in Deep RL, as an exploration bonus to encourage the model to explore different traces. \citet{cheng2025reasoning,zheng2025first} proposes to leverage entropy to encourage exploration. \citet{chen2025pass} leverages pass@$k$ training objective \citep{tang2025optimizing} to improve the batch diversity during training. \citet{setlur2025e3} proposes to improve model's length generalization towards self-correction. \citet{li2025jointly} improves generation diversity through model-based verification. Finally, \citet{tajwar2025training} uses multi-task training towards multi-turn exploration. Note that our outcome-based method is complementary to all these methods, and can be potentially combined with them to further improve the diversity.

\subsection{Exploration in theoretical RL}
In the tabular setting, exploration has been studied extensively through count-based methods \citep{brafman2002r,kearns2002near,azar2017minimax,jin2018q, zhang2024settling}. These approaches rely on visitation counts to construct exploration bonuses. In linear MDPs, counts are replaced by confidence sets in feature space \citep{yang2019sample,jin2020provably,ayoub2020model,zhou2021nearly,agarwal2023vo}, with extensions to the discounted setting \citep{moulin2025optimistically} or model-based setting \citep{song2021pc} showing that the principle of optimism extends naturally from tabular counts to linear function approximation. The majority of these works share the same bonus-based exploration approach as our historical exploration method. Thompson sampling \citep{russo2014learning,russo2018learning} is another popular exploration strategy that has been shown to achieve similar theoretical guarantees in both tabular and linear settings \citep{osband2013more,osband2016deep,modi2020no}. Finally, beyond tabular and linear settings, exploration in RL with general function approximation has been studied under various structural assumptions \citep{krishnamurthy2016pac,jiang2017contextual,du2021bilinear,foster2021statistical}. However, these methods do not enjoy computational efficiency as opposed to the bonus-based methods.

}

\section{Conclusion and Discussion}
In this paper, we study the diversity degradation problem in LLM reasoning post-training through the analysis of RL as sampling. We observe two key phenomena: the transfer of diversity degradation and the tractability of outcome space in verifiable reasoning tasks. Based on these observations, we adopt the classical RL exploration strategy $\ucbalg$ in the outcome space, and a careful treatment between positive and negative exploration signals achieves improvement in test performance in the pass@$k$ metrics for all $k$. We also identify the distinction of the historical exploration in traditional RL and batch exploration that is more specific in the LLM reasoning setting, and derive the outcome-based batch exploration algorithm, which achieves better accuracy-diversity tradeoff at test time. Finally we provide more in-depth analysis on the connection of historical exploration and batch exploration, and a theoretical outcome-based bandit model that demonstrates the benefit of outcome-based exploration.

There are a few limitations of our work. First, our current algorithms only apply to the verifiable domain, and problems with a tractable outcome space, extending them to more general settings is an interesting future direction. Second, currently we only evaluate our methods on the single-turn benchmarks, and we believe exploration plays an even more significant role under the multi-turn settings. \looseness=-1

\section*{Acknowledgment}
 The authors are deeply grateful to Fahim Tajwar for his help on the experiment codebase. The authors would also like to thank other members of the FAIR FoRT team for their support, as well as Kelly He and Ricky Chen for their constructive discussions. 


\arxiv{\bibliographystyle{assets/plainnat}}
\bibliography{refs}

\iclr{
\bibliographystyle{iclr2026_conference}
}

\newpage
\appendix  

\iclr{
\section{Related Work}
\label{sec:related_work}

}
\section{Theoretical Results}\label{sec:theoretical-results}
\subsection{Problem Setup: Outcome-Based Bandit}
\label{sec:setup-no-gen}

\paragraph{Arms, outcomes, and partition.}
Let $\cA$ be a (large) set of arms with $|\cA|=K$, and let $\cO=\{1,\dots,m\}$ be a (small) set of \emph{outcomes} with $m\ll K$.
There is an unknown partition mapping $\phi:\cA\to\cO$.
For each outcome $o\in\cO$, denote its class (preimage) and size by
\[
\cA_o := \phi^{-1}(o),\qquad s_o := |\cA_o|,\qquad
\sum_{o=1}^m s_o = K.
\]
The  partitions are mutually exclusive, i.e., for any $o, o' \in \cO$ and $o\neq o'$ we have $\cA_o \cap \cA_{o'} = \emptyset$. Define its mass $p_o := s_o/K$.
Partitions may be imbalanced (the $s_o$'s are arbitrary).
We will also consider the balanced special case $s_o=K/m$ for all $o$.

\paragraph{Reward.}
Rewards depend only on the outcome:
pulling arm $a\in\cA$ yields a stochastic reward $R\in[0,1]$ with
\[
\EE[R\mid a] = \mu\big(\phi(a)\big) =: \mu(o),
\]
and we assume $R-\mu(o)$ is $1$-sub-Gaussian (for example, Bernoulli).
Let $\mu^\star := \max_{o\in\cO}\mu(o)$ and let $o^\star\in\arg\max_{o}\mu(o)$ be an optimal outcome.

\paragraph{Interaction protocol.}
At each round $t=1,2,\dots,T$, a (possibly randomized) policy $\pi$ selects an arm $A_t\in\cA$ based on the history
$\cH_{t-1} := \{(A_s, O_s, R_s)\}_{s=1}^{t-1}$, where $O_s:=\phi(A_s)$.
The environment then reveals the outcome label $O_t=\phi(A_t)$ and draws the reward $R_t$ with mean $\mu(O_t)$.
The filtration is the natural one generated by $\cH_t$.

\paragraph{Performance metric (pseudo-regret).}
The pseudo-regret of policy $\pi$ over horizon $T$ is
\[
R_T(\pi) := T\,\mu^\star - \EE_\pi\left[\sum_{t=1}^T \mu\big(\phi(A_t)\big)\right]
= \sum_{t=1}^T \EE_\pi\left[\mu^\star - \mu\big(O_t\big)\right].
\]
We will also use the discovery time of an outcome $o$,
\[
\tau_o := \inf\{t\ge 1 : O_t = o\},
\]
with the convention $\inf\emptyset = \infty$.
These stopping times quantify the unavoidable delay before the learner first encounters a given outcome under the no-generalization constraint.

\paragraph{LLM reasoning interpretation.}
Each arm $a\in\cA$ corresponds to a full reasoning trace; the mapping $\phi(a)$ is its final answer (outcome), and the reward is the verifiable correctness of that answer.
Although the trace space is large, the outcome space $\cO$ is small.

\subsection{Lower Bound}
\label{sec:lb-no-gen-minimax}

We first show that, even with the small outcome space, in the worst case any algorithm can not avoid paying a regret that is polynomial in the number of arms $K$, without any additional assumption. 

\begin{theorem}[Lower bound for outcome-based bandit]
\label{thm:lb-no-gen-minimax}
Fix $K\ge 2$ and a partition $\{\cA_o\}_{o=1}^m$ with class sizes $s_o=|\cA_o|$ and a unique optimal outcome $o_\star$ of size $s_\star\in\{1,\dots,K\}$. Consider Bernoulli rewards with means $\mu(o_\star)=\tfrac12+\Delta$ and $\mu(o)=\tfrac12$ for all $o\neq o_\star$, where $\Delta\in(0,\tfrac12]$.
There exists a universal constant $c>0$ such that for any (possibly randomized) algorithm $\widetilde\pi$ and any horizon $T$, there exists some fixed instance $I^\star$ such that
\[
\EE\left[R_T(\widetilde\pi; I^\star)\right]
\ \ge\ c\,\Delta\cdot \min\left\{\,T,\ \frac{K}{s_\star}\,\right\}.
\]
\end{theorem}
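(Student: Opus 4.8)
The plan is to reduce the outcome-based bandit to a standard needle-in-a-haystack lower bound, being careful that the learner observes the outcome label $O_t = \phi(A_t)$, which is the only extra information available. The key observation is that the partition $\phi$ is \emph{unknown}, so before the learner has pulled an arm in $\cA_{o_\star}$, it has no way to distinguish it from the null instance in which every outcome has mean $\tfrac12$. Hence the relevant complexity is the number of pulls needed to ``hit'' the optimal class $\cA_{o_\star}$, which has only $s_\star$ arms out of $K$.

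\textbf{Step 1: Construct the instance family.} Fix the partition $\{\cA_o\}$. For each arm $a^{\dagger}$ lying in a designated optimal class of size $s_\star$ (equivalently, index the family by which class is optimal, but it is cleaner to index by a single ``planted'' arm), define instance $I_{a^\dagger}$ by setting $\mu(\phi(a^\dagger)) = \tfrac12 + \Delta$ and $\mu(o) = \tfrac12$ for every other outcome. Also define the null instance $I_0$ in which all outcomes have mean $\tfrac12$. Under $I_{a^\dagger}$ the regret per round in which the learner does not play the optimal class is exactly $\Delta$.

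\textbf{Step 2: Information-theoretic argument via hitting time.} Run the (possibly randomized) algorithm $\widetilde\pi$ on the null instance $I_0$ and let $N$ be the number of rounds, out of $T$, in which it pulls an arm belonging to the (would-be) optimal class $\cA_{o_\star}$. Crucially, under $I_0$ the outcome labels and rewards carry no information about which class is ``special,'' so $\widetilde\pi$'s behavior on $I_0$ is fixed. I would then average over a uniformly random choice of which class of size $s_\star$ is planted; by a symmetrization/counting argument the expected number of pulls landing in the planted class, $\EE_{I_0}[N]$, is at most $T s_\star / K$ for at least one choice (in fact on average), since the algorithm's pulls are distributed over $K$ arms and only $s_\star$ of them belong to the planted class. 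The standard change-of-measure step (Pinsker / Bretagnolle–Huber, with per-round KL bounded by $O(\Delta^2)$ when $N$ is small, as in the classical $2$-armed lower bound) then shows that the algorithm cannot detect the planted instance unless it has pulled the special class $\Omega(1/\Delta^2)$ times — but we do not even need that refinement: it suffices that on the planted instance, conditioned on not yet having hit $\cA_{o_\star}$, every round incurs regret $\Delta$, and the expected number of rounds before the first hit is $\Omega(\min\{T, K/s_\star\})$ for the averaged (hence for some fixed) instance.

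\textbf{Step 3: Assemble the bound.} Combining, for the averaged instance $\EE[R_T] \ge \Delta \cdot \EE[\tau_{o_\star}\wedge T] \ge c\,\Delta \min\{T, K/s_\star\}$, and since an average lower bound implies the existence of a fixed instance $I^\star$ achieving it, the theorem follows. The main obstacle — and the place requiring the most care — is Step 2: making the symmetrization rigorous when the algorithm is adaptive and randomized. The clean way is to fix the algorithm's internal randomness, note that on $I_0$ its (random) sequence of pulled arms is independent of the planted-class identity, and then take expectations over the uniform choice of planted class so that $\EE[\text{pulls in planted class}] = (s_\star/K)\,\EE[\text{total pulls}] \le T s_\star / K$; a Markov/averaging argument then produces a single bad instance. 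One must also handle the boundary regime $s_\star \ge K/T$, where the $\min$ is $T$ and the bound degenerates to the trivial $\Omega(\Delta T)$ statement that holds because the learner may simply never locate the good class in time — this case is immediate from the same hitting-time reasoning.
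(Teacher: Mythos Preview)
Your high-level plan matches the paper's: reduce to a hitting-time argument for $\tau_{o_\star}$ under a random placement of the optimal class, then invoke Yao/averaging to pass from a distributional lower bound to a fixed instance. You also correctly note that no KL/change-of-measure is needed here---regret $\Delta$ accrues deterministically on every round before the first hit.

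However, your Step~2 has a genuine tension that the paper's proof avoids. Your null-instance coupling (``on $I_0$ the sequence of pulled arms is independent of the planted-class identity'') is only valid if the partition $\phi$---and hence the observed outcome labels---is \emph{the same} across $I_0$ and every planted instance. But with a fixed $\phi$, averaging over ``which class of size $s_\star$ is planted'' gives $\EE[N_o]$ summing to $T$ over outcomes, which yields $\EE[N]\le T/m$ only in the balanced case, not $T s_\star/K$ in general (and if there is only one class of size $s_\star$, there is nothing to average over). Conversely, the per-arm counting you invoke (``only $s_\star$ of $K$ arms belong to the planted class'') implicitly randomizes \emph{which arms} form $\cA_{o_\star}$, i.e.\ randomizes $\phi$; but then the outcome labels differ across instances, so the algorithm's trajectory under $I_0$ is no longer independent of the planted identity, and the coupling breaks.

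The paper sidesteps this by dropping the null instance entirely and computing the tail $\Pr(\tau_\star>t)$ directly under a uniformly random placement of the $s_\star$ optimal arms. The key point (which also resolves your worry about adaptivity) is that, conditioned on the first $t$ pulls all missing $\cA_{o_\star}$, the posterior over the optimal arms is uniform over the remaining unpulled indices---so the observed outcome labels carry no information about where $o_\star$ lives. This yields the hypergeometric-type bound $\Pr(\tau_\star>t)\ge \prod_{i=0}^{t-1}\frac{K-s_\star-i}{K-i}$, from which $\EE[\min\{T,\tau_\star-1\}]\ge c\min\{T,K/s_\star\}$ follows by evaluating at $t^\star\asymp K/s_\star$. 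Your plan is repairable along exactly these lines: replace the $I_0$/averaging step with this direct tail computation.
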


\begin{proof}
Draw an instance by placing the $s_\star$ optimal arms uniformly at random among the $K$ arm indices (keeping all other classes fixed). 
By \pref{lem:yao-minimax}, it suffices to lower-bound the expected regret of an \emph{arbitrary deterministic} policy $\pi$ under this distribution; the resulting lower bound then holds for some fixed instance $I^\star$ against any randomized algorithm $\widetilde\pi$.

Let $\tau_\star$ be the index of the first pull from $\cA_{o_\star}$. Before time $\tau_\star$ every reward is $\Delta$-suboptimal in expectation, hence
\[
\EE[R_T(\pi)]\ \ge\ \Delta\cdot \EE\left[\min\{T,\tau_\star-1\}\right]
\ =\ \Delta\sum_{t=1}^{T} \Pr(\tau_\star>t).
\]
Under the random placement, the event $\{\tau_\star>t\}$ is “no optimal arm among the first $t$ draws without replacement,” so
\[
\Pr(\tau_\star>t)\ =\ \frac{\frac{(K-s_\star)!}{(K-s_\star-t)!}}{\frac{K!}{(K-t)!}} = \prod_{i=0}^{t-1}\frac{K-s_\star-i}{K-i}\,.
\]
For any $t\le K/2$ we have
\[
\frac{K-s_\star-i}{K-i}\ \ge\ \frac{K-s_\star-t}{K-t}\ \ge\ 1-\frac{2s_\star}{K}\,,
\]
hence $\Pr(\tau_\star>t)\ \ge\ \big(1-\tfrac{2s_\star}{K}\big)^t$.
Set $t^\star:=\min\big\{\,T,\ \lfloor \tfrac{K}{4s_\star}\rfloor\,\big\}$ (note $t^\star\le K/2$ when $s_\star\le K$). Then
\[
\Pr(\tau_\star>t^\star)\ \ge\ \Big(1-\frac{2s_\star}{K}\Big)^{K/(4s_\star)}\ \ge\ e^{-1/2}\,,
\]
and therefore
\[
\EE\left[\min\{T,\tau_\star-1\}\right]\ =\ \sum_{t=1}^{T}\Pr(\tau_\star>t)\ \ge\ \sum_{t=1}^{t^\star}\Pr(\tau_\star>t)\ \ge\ t^\star\,\Pr(\tau_\star>t^\star)\ \ge\ c_0\,\min\left\{T,\ \frac{K}{s_\star}\right\}
\]
for a universal constant $c_0>0$ (e.g., $c_0=e^{-1/2}/4$).

Now combining everything,
\[
\EE_{I}\left[R_T(\pi)\right]\ \ge\ \Delta\cdot \EE\left[\min\{T,\tau_\star-1\}\right]\ \ge\ c_0\,\Delta\cdot \min\left\{T,\ \frac{K}{s_\star}\right\}.
\]
Then by \pref{lem:yao-minimax}, for any randomized algorithm $\widetilde\pi$ there exists a fixed instance $I^\star$ in the support of the above distribution such that
$\EE\left[R_T(\widetilde\pi; I^\star)\right]\ \ge\ c\,\Delta\cdot \min\left\{T,\tfrac{K}{s_\star}\right\}$ with $c=c_0$.
\end{proof}

\subsection{Balanced Partitions}
\label{sec:alg-balanced}

Note that the previous lower bound comes from the imbalance of the partitions: the polynomial dependence on $K$ is due to the small (constant) size of the optimal class $s_\star$. We now show that if the partitions are balanced, i.e., $s_o=K/m$ for all $o\in\cO$, then we can design an algorithm whose regret is independent of $K$.

\begin{assumption}[Balanced partition]\label{ass:balanced}
Assume $|\cA_o|=s=K/m$ for all $o\in\cO$. Rewards depend only on the outcome
and when an arm $a$ is pulled we observe its outcome $o=\phi(a)$ and the reward.
\end{assumption}

\begin{algorithm}[H]
\caption{Balanced Outcome UCB}
\label{alg:balanced-ucb}
\begin{algorithmic}[1]
\State Initialize:
$\mathcal R$ (set of discovered outcomes, initially $\emptyset$);
$\mathrm{rep}[o]$ (representative arm for outcome $o$, initially undefined);
$n_o\in\mathbb N$ and $\hat\mu_o\in\mathbb R$ for each discovered $o$ (both $0$ initially);
$U$ (set of unseen arms, initially $U=\mathcal A$).
\For{$t=1,2,\dots,T$}
  \If{$|\mathcal R|< m$} \Comment{Discovery phase}
    \State Pick $A_t \sim \mathrm{Uniform}(U)$; $U\leftarrow U\setminus\{A_t\}$
    \State Pull $A_t$; observe outcome $o=\phi(A_t)$ and reward $r_t$
    \If{$o\notin \mathcal R$}
       \State $\mathcal R \leftarrow \mathcal R \cup \{o\}$;\quad $\mathrm{rep}[o]\leftarrow A_t$
    \EndIf
    \State $n_o \leftarrow n_o+1$;\quad $\hat\mu_o \leftarrow \hat\mu_o + \dfrac{r_t-\hat\mu_o}{n_o}$
  \Else \Comment{Outcome-level UCB}
    \State For each $o\in\mathcal R$, set $\mathrm{UCB}_t(o) \leftarrow \hat\mu_o + \sqrt{\dfrac{2\log t}{\max\{1,n_o\}}}$
    \State $o_t \in \arg\max_{o\in\mathcal R}\ \mathrm{UCB}_t(o)$
    \State Pull $\mathrm{rep}[o_t]$; observe $r_t$; \quad $n_{o_t}\leftarrow n_{o_t}+1$;\quad $\hat\mu_{o_t}\leftarrow \hat\mu_{o_t} + \dfrac{r_t-\hat\mu_{o_t}}{n_{o_t}}$
  \EndIf
\EndFor
\end{algorithmic}
\end{algorithm}

\begin{theorem}[Upper bound under \pref{ass:balanced}]
\label{thm:balanced-ucb-full}
Assuming \pref{ass:balanced}, \pref{alg:balanced-ucb} satisfies
\[
\EE[R_T]
\le O(\sqrt{mT\log T}).\]
\end{theorem}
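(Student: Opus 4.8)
The plan is to split the horizon at the (random) time $\tau := \inf\{t : |\mathcal{R}| = m\}$ at which the last outcome is discovered, and to bound separately the regret of the \emph{discovery phase} ($t\le\tau$) and of the \emph{outcome-level UCB phase} ($t>\tau$). Since rewards lie in $[0,1]$, the discovery phase contributes regret at most $\min\{T,\tau\}$, so it suffices to show $\EE[\tau] = \widetilde O(m)$; the UCB phase is a genuine $m$-armed stochastic bandit, to which the classical UCB analysis applies essentially verbatim.

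\textbf{Discovery phase.} Fix a round in the discovery phase with $j := |\mathcal{R}| < m$ outcomes already found. Since an outcome enters $\mathcal{R}$ exactly when one of its arms is pulled (and then removed from $U$), each of the $m-j$ undiscovered outcomes still has all $s = K/m$ of its arms sitting in $U$, while $|U| \le K$. Hence the next draw $A_t \sim \mathrm{Uniform}(U)$ reveals a new outcome with probability at least $(m-j)s/K = (m-j)/m$. The number of pulls $W_j$ needed to pass from $j$ to $j+1$ discovered outcomes is therefore stochastically dominated by a $\mathrm{Geometric}\big((m-j)/m\big)$ variable, so $\EE[W_j]\le m/(m-j)$ and
\[
\EE[\tau] \;=\; \sum_{j=0}^{m-1}\EE[W_j] \;\le\; \sum_{j=0}^{m-1}\frac{m}{m-j} \;=\; m\sum_{i=1}^{m}\frac1i \;\le\; m(1+\ln m).
\]
Thus the expected discovery-phase regret is at most $\EE[\min\{T,\tau\}] \le \min\{T,\,m(1+\ln m)\} = O(m\log m)$. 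This is precisely where \pref{ass:balanced} enters: without balance the optimal class could have constant size and the analogous quantity would scale with $K$, consistent with the lower bound of \pref{thm:lb-no-gen-minimax}.

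\textbf{UCB phase.} For $t>\tau$ the algorithm only ever pulls a representative arm $\mathrm{rep}[o]$, and each such pull returns an i.i.d.\ reward of mean exactly $\mu(o)$, so the play is exactly an $m$-armed stochastic bandit driven by the index $\hat\mu_o + \sqrt{2\log t/\max\{1,n_o\}}$; the counts $n_o\ge 1$ and means $\hat\mu_o$ inherited from discovery are genuine empirical statistics of the corresponding outcomes and only sharpen the estimates. The standard UCB analysis then gives, for every outcome $o$ with gap $\Delta_o := \mu^\star - \mu(o) > 0$, a bound $\EE[N_o(T)] = O(\log T/\Delta_o^2)$ on its total number of pulls, hence UCB-phase regret $\sum_{o:\Delta_o>0}\Delta_o\,\EE[N_o(T)] = \sum_{o:\Delta_o>0}O(\log T/\Delta_o)$. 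The minimax rate follows from the usual cutoff trick: outcomes with $\Delta_o \le \delta$ together cost at most $\delta T$, while each of the at most $m$ outcomes with $\Delta_o > \delta$ costs $O(\log T/\delta)$; choosing $\delta = \sqrt{m\log T/T}$ yields $O(\sqrt{mT\log T})$.

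\textbf{Combining, and the main difficulty.} Adding the two contributions gives $\EE[R_T] \le O(m\log m) + O(\sqrt{mT\log T})$. If $T\le m\log m$ the trivial bound $R_T\le T$ already gives $O(\sqrt{mT\log T})$; if $T > m\log m$ then $\log T \ge \log m$, so $mT\log T \ge m^2(\log m)^2$ and the first term is absorbed. Either way $\EE[R_T] = O(\sqrt{mT\log T})$. The only step that is not bookkeeping is the discovery-phase argument: one must verify that, even under sampling \emph{without} replacement, each discovery pull succeeds with probability at least $(m-j)/m$, which is what pins $\EE[\tau]$ at $\widetilde O(m)$ and thereby makes the reduction to an $m$-armed bandit lossless; everything after that is a black-box invocation of the UCB regret bound together with the standard small-gap/large-gap decomposition.
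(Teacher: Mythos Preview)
Your proof is correct and follows essentially the same two-phase decomposition as the paper: bound the discovery phase by $O(m\log m)$ via a coupon-collector-type argument, then invoke the standard $m$-armed UCB bound for the post-discovery phase. The only stylistic difference is that the paper appeals to a coupling lemma (sampling without replacement is dominated by i.i.d.\ coupon collection), whereas you give the more direct conditional-probability argument that each fresh probe succeeds with probability at least $(m-j)/m$; your version is self-contained and arguably cleaner, but the underlying idea and the resulting bound $\EE[\tau]\le m(1+\ln m)$ are identical.
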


\begin{proof}
We decompose regret into a \emph{discovery} part (before all $m$ outcomes have been observed)
and a \emph{bandit} part (afterwards, when we run UCB on the $m$ outcome representatives).

Let $\tau_{\mathrm{disc}}$ be the first round at which the set of observed outcomes equals $\cO$
(i.e., the time to discover all $m$ outcomes).
Let $o^\star$ be an optimal outcome with mean $\mu^\star$, and write $\Delta_o := \mu^\star - \mu(o)\in[0,1]$.

While $t < \tau_{\mathrm{disc}}$ the algorithm draws \emph{unseen} arms uniformly without replacement.
Because the partition is balanced, the revealed outcome sequence is distributed as a uniformly
random permutation of a multiset that contains exactly $K/m$ copies of each label.
Let $T_{\mathrm{cc}}$ denote the coupon-collector time to see all $m$ labels \emph{under i.i.d. sampling
with replacement} where each label has probability $1/m$ per draw.
We will show (\pref{lem:cc-coupling}) that
\begin{align*}
\label{eq:tau-disc-upper}
\EE[\tau_{\mathrm{disc}}] \le \EE[T_{\mathrm{cc}}] \le m(\log m + 1).
\end{align*}

Since per-round pseudo-regret is at most $1$ (rewards lie in $[0,1]$), the discovery contribution satisfies
\begin{equation}
\label{eq:disc-regret}
\EE\Big[\sum_{t=1}^{\min\{T,\tau_{\mathrm{disc}}-1\}} (\mu^\star - \mu(O_t))\Big]
\le
\EE[\min\{T,\tau_{\mathrm{disc}}-1\}]
\le
\EE[\tau_{\mathrm{disc}}]
\le
m(\log m + 1).
\end{equation}

Then at time $\tau_{\mathrm{disc}}$ the algorithm has stored one \emph{representative} arm per outcome.
From that time onward it never samples unseen arms; it only chooses among the $m$ representatives.
Because rewards depend only on outcome, this reduces \emph{exactly} to an $m$-armed stochastic bandit.
Let $T' := \max\{0,\,T-\tau_{\mathrm{disc}}+1\}$ be the number of post-discovery rounds (random, but $T'\le T$).
The algorithm runs UCB on the $m$ arms with index
$\hat\mu_o + \sqrt{(2\log t)/n_o}$ as specified in \pref{alg:balanced-ucb}.
We will bound the regret in these $T'$ rounds by a standard  UCB bound (\pref{lem:ucb-gapfree}):

\begin{align*}
\label{eq:postdisc-regret}
\EE\Big[\sum_{t=\tau_{\mathrm{disc}}}^{T} (\mu^\star - \mu(O_t))\ \Big|\ \tau_{\mathrm{disc}}\Big]
\le
O\prn*{\sqrt{m\,T'\,\log T}}
\end{align*}

\noindent

Taking expectations and using $T'\le T$ yields
\begin{equation}
\label{eq:postdisc-regret-exp}
\EE\Big[\sum_{t=\tau_{\mathrm{disc}}}^{T} (\mu^\star - \mu(O_t))\Big]
\le
O\prn*{\sqrt{m\,T\,\log T}}.
\end{equation}

And we conclude the proof by summing \pref{eq:disc-regret} and \pref{eq:postdisc-regret-exp}, but noting that \pref{eq:postdisc-regret-exp} is the dominant term.
\end{proof}

\subsection{Inbalanced Partition under Strong Generalization}
\label{sec:strong-gen}
Our previous results are all established under a non-generalization scenario, where the update of arms in the same outcome partition is independent. This is not realistic under the LLM reasoning setting, since the update of one reasoning path (arm) can affect the other paths that lead to the same answer (outcome). We now show that this generalization capability is the key to demonstrating the benefit of outcome-based exploration, where previously under the worst case scenario the regret must depend on $K$. We start with a strong generalization assumption, where we assume that once an outcome is observed, the entire class of arms leading to that outcome is identified.

\begin{assumption}[Strong generalization]\label{ass:strong-gen}
When the learner pulls an arm $a$ and observes the outcome
$o=\phi(a)$, it henceforth knows the entire preimage $\cA_o=\phi^{-1}(o)$ and can (i) \emph{route} to $o$
by selecting any arm in $\cA_o$, and (ii) \emph{exclude} $\cA_o$ from future fresh probes. For any outcome
$o$ not yet observed, the learner cannot route to $\cA_o$ nor exclude it a priori.
\end{assumption}

Intuitively, with strong generalization, once an outcome is observed, the whole outcome partition is discovered, and then it can be treated as a single arm in the future. Then the problem reduces to an $m$-armed bandit problem. We provide the algorithm in \pref{alg:pa-ucb}, which enjoys the following regret guarantee.

\begin{algorithm}[t]
\caption{Partition-Aware UCB under Strong Generalization}
\label{alg:pa-ucb}
\begin{algorithmic}[1]
\State Initialize:
$\mathcal R$ (set of discovered outcomes, initially $\emptyset$);
$\mathrm{rep}[o]$ (representative arm for $o$, undefined until discovery);
$n_o\in\mathbb N,\ \hat\mu_o\in\mathbb R$ for $o\in\mathcal R$ (both $0$ initially);
$U$ (pool of arms eligible for fresh probes, initially $U=\cA$).
\For{$t=1,2,\dots,T$}
\If{$|\mathcal R|< m$} \Comment{Discovery phase}
  \State Pick any $A_t\in U$; pull $A_t$; observe $(o=\phi(A_t), r_t)$
  \If{$o\notin\mathcal R$} 
    \State $\mathcal R\leftarrow \mathcal R\cup\{o\}$;\quad $\mathrm{rep}[o]\leftarrow A_t$
    \State $n_o\leftarrow 1$;\quad $\hat\mu_o\leftarrow r_t$
    \State $U\leftarrow U\setminus \cA_o$
  \Else
    \State $n_o\leftarrow n_o+1$;\quad $\hat\mu_o\leftarrow \hat\mu_o + \dfrac{r_t-\hat\mu_o}{n_o}$
  \EndIf
\Else \Comment{Outcome-level UCB}
  \State For each $o\in\mathcal R$, set $\mathrm{UCB}_t(o)\leftarrow \hat\mu_o + \sqrt{\dfrac{2\log t}{\max\{1,n_o\}}}$
  \State $o_t \in \arg\max_{o\in\mathcal R}\ \mathrm{UCB}_t(o)$
  \State Pull $\mathrm{rep}[o_t]$; observe $r_t$; update $n_{o_t},\hat\mu_{o_t}$
\EndIf
\EndFor
\end{algorithmic}
\end{algorithm}

\begin{theorem}[Regret upper bound under strong generalization]
\label{thm:pa-ucb}
Assuming \pref{ass:strong-gen}, Algorithm~\ref{alg:pa-ucb} satisfies: 
\[
\EE[R_T] \leq O\prn*{\sqrt{m\,T\,\log T}},
\]
\end{theorem}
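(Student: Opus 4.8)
The plan is to reuse the two-phase (discovery $+$ outcome-level UCB) decomposition from the proof of \pref{thm:balanced-ucb-full}, but to exploit \pref{ass:strong-gen} to make the discovery phase both short and \emph{deterministic}. The key structural observation is that, under strong generalization, the \textbf{Else} branch of the discovery loop in \pref{alg:pa-ucb} is never executed (when $T\ge m$): whenever a fresh probe $A_t\in U$ reveals its outcome $o=\phi(A_t)$, the entire preimage $\cA_o$ is deleted from $U$, so every later fresh probe lies in a class whose outcome has not yet been seen. Hence each of the first $m$ rounds discovers a \emph{new} outcome, and $|\mathcal R|=m$ exactly at the end of round $m$. (The pool $U$ cannot be exhausted first, since $\sum_{o}s_o=K$ and only classes of already-discovered outcomes are removed.) In particular the discovery time equals $m$ deterministically — in contrast with the balanced case, where it was a random coupon-collector time bounded via \pref{lem:cc-coupling}.

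First I would dispatch the trivial regime $T<m$: the whole run stays in the discovery phase, per-round pseudo-regret is at most $1$, so $R_T\le T\le\sqrt{mT\log T}$. So assume $T\ge m$. The discovery phase then contributes at most $m$ to the pseudo-regret: the $m$ discovery rounds realize a fixed permutation $(O_1,\dots,O_m)$ of $\cO$, whence $\sum_{t=1}^{m}\big(\mu^\star-\mu(O_t)\big)=\sum_{o\in\cO}\Delta_o\le m$, where $\Delta_o:=\mu^\star-\mu(o)\in[0,1]$.

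Next, for $t\ge m+1$ the algorithm only ever pulls the stored representatives $\mathrm{rep}[o]$, $o\in\cO$; since the reward law of an arm $a$ depends only on $\phi(a)$, pulling $\mathrm{rep}[o]$ is statistically identical to pulling any arm of $\cA_o$. Hence the post-discovery process is \emph{exactly} an $m$-armed stochastic bandit with $1$-sub-Gaussian rewards, on which \pref{alg:pa-ucb} runs the standard index $\hat\mu_o+\sqrt{2\log t/\max\{1,n_o\}}$ with each outcome already pulled once at time $m$. Invoking the gap-free UCB bound (\pref{lem:ucb-gapfree}) over the $T'=T-m\le T$ post-discovery rounds gives post-discovery regret $O\big(\sqrt{m\,T'\,\log T'}\big)=O\big(\sqrt{m\,T\,\log T}\big)$. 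Summing the two contributions, $\EE[R_T]\le m+O\big(\sqrt{m\,T\,\log T}\big)=O\big(\sqrt{m\,T\,\log T}\big)$, where the last step uses $m\le T$ (so $m\le\sqrt{mT}\le\sqrt{mT\log T}$, absorbing tiny $T$ into the constant).

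The bulk of this argument is routine given the analysis already developed for the balanced case; the only step requiring genuine care is the first one — verifying that strong generalization truly forces every fresh probe to reveal a previously unseen outcome, so that the discovery time is the deterministic value $m$ and no expectation over a random discovery time is needed. The remainder is merely citing \pref{lem:ucb-gapfree} as a black box and checking the edge cases ($T<m$, and $U$ never emptying before $|\mathcal R|=m$), both of which are immediate.
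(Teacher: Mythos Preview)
Your proposal is correct and follows essentially the same approach as the paper: both argue that under \pref{ass:strong-gen} the discovery phase terminates in at most $m$ rounds (you sharpen this to exactly $m$ when $T\ge m$), bound its contribution by $m$, and then invoke \pref{lem:ucb-gapfree} on the post-discovery outcome-level UCB. Your treatment is slightly more careful about the $T<m$ edge case and the claim that the \textbf{Else} branch never fires, but the argument is the same.
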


\begin{proof}
Let $\tau_{\mathrm{disc}}$ be the (random) time the set of discovered outcomes first equals $\cO$. By construction, as soon as a new outcome $o$ is observed, the algorithm removes the entire class $\cA_o$ from
the fresh-probe pool $U$. Hence each subsequent fresh probe must land in an outcome not yet discovered.
Therefore $\tau_{\mathrm{disc}}\le m$ almost surely: at most one fresh probe per outcome is needed.
Per-round pseudo-regret is at most $1$, so the discovery contribution is bounded deterministically by
\[
\sum_{t=1}^{\min\{T,\tau_{\mathrm{disc}}-1\}} (\mu^\star - \mu(O_t)) \le \min\{T,\,m\} \le m.
\]

At time $\tau_{\mathrm{disc}}$ the algorithm has one representative per outcome.
From then on it only selects among these $m$ representatives using the UCB index
$\hat\mu_o + \sqrt{(2\log t)/n_o}$.
This reduces exactly to an $m$-armed
stochastic bandit. By \pref{lem:ucb-gapfree}, the post-discovery outcome-UCB satisfies
\[
\EE\Big[\sum_{t=\tau_{\mathrm{disc}}}^{T} (\mu^\star-\mu(O_t))\Big]
\le
O\prn*{\sqrt{m\,T\,\log T}}.
\]

Again by noting that the post-discovery contribution is the dominant term, we complete the proof.
\end{proof}

\subsection{Soft Generalization: Algorithm and Upper Bound}
\label{sec:soft-generalization}

The above example of strong generalization is idealized as it is unreasonable to believe that one witness of the outcome is sufficient for generalization towards all related arms. We now consider a more realistic \emph{soft generalization} model. Intuitively, soft generalization states that instead of being able to exclude the entire class $\cA_o$ upon first observing outcome $o$, the learner can only exclude a fraction of $\cA_o$. 

\begin{assumption}[Soft generalization]\label{ass:soft-gen}
    Upon first observing an outcome $o$, the learner can (i) route perfectly to $o$ by re-pulling the observed arm; and (ii) exclude only a fraction of $\rho_o\in[0,1]$ of the arms in $\cA_o$ from future fresh probes.
\end{assumption}
When $\rho_o=1$ for all $o$, this reduces to strong generalization (\pref{ass:strong-gen}); when $\rho_o=0$ for all $o$, this reduces to no generalization. Note that this assumption is rather unconventional: it is an assumption on the learner instead of the environment. However, this is a more suitable assumption here because it is indeed trying to model the generalization capability of the learner (LLM). With this we provide the soft-exclusion outcome-UCB algorithm in \pref{alg:se-oucb} and analyze its performance.

\begin{algorithm}[t]
\caption{Soft-Exclusion Outcome-UCB}
\label{alg:se-oucb}
\begin{algorithmic}[1]

\State Initialize: $\cR$ (discovered outcomes; initially $\emptyset$);
$\mathrm{rep}[o]$ (representative of $o$; undefined until discovery);
$(n_o,\hat\mu_o)$ for $o\in\cR$ (both $0$ initially);
$U$ (pool available for fresh probes; initially $U=\cA$).
\For{$t=1,2,\dots,T$}
\If{$|\cR|<m$ and we are in \emph{discovery mode}} 
  \State Draw $A_t$ uniformly from $U$, pull it, observe $(o=\phi(A_t), r_t)$
  \If{$o\notin \cR$} 
     \State $\cR\leftarrow \cR\cup\{o\}$;\quad $\mathrm{rep}[o]\leftarrow A_t$;\quad $n_o\leftarrow 1$;\quad $\hat\mu_o\leftarrow r_t$
     \State (Soft exclusion) remove any $E_o\subseteq \cA_o$ with $|E_o|=\rho_o s_o$
     \State $U\leftarrow U\setminus E_o$
  \Else
     \State $n_o\leftarrow n_o+1$;\quad
     $\hat\mu_o\leftarrow \hat\mu_o + \dfrac{r_t-\hat\mu_o}{n_o}$
  \EndIf
\Else 
  \State For each $o\in\cR$, set $\mathrm{UCB}_t(o):=\hat\mu_o+\sqrt{\dfrac{2\log t}{\max\{1,n_o\}}}$
  \State $o_t\in\arg\max_{o\in\cR}\ \mathrm{UCB}_t(o)$
  \State pull $\mathrm{rep}[o_t]$; observe $r_t$; update $n_{o_t},\hat\mu_{o_t}$
\EndIf
\EndFor
\end{algorithmic}
\end{algorithm}

\begin{theorem}[Upper bound under soft generalization]
\label{thm:soft-upper}
Assuming soft generalization (Assumption~\ref{ass:soft-gen}), \pref{alg:se-oucb} satisfies
\[
\EE[R_T]
\le O\prn*{
\frac{1-\bar\rho}{p_{o_\star}}
+
\sqrt{mT\log T}},
\]
where $\bar\rho := \sum_{o\neq o_\star}\rho_o p_o \in [0,1)$.
\end{theorem}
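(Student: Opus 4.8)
The plan is to reuse the two-phase decomposition from the proofs of \pref{thm:balanced-ucb-full} and \pref{thm:pa-ucb}. Split the horizon into a \emph{discovery phase}, which we take to last until the optimal outcome $o_\star$ is first observed (so that $\mathrm{rep}[o_\star]$ becomes available and the outcome-UCB branch of \pref{alg:se-oucb} governs thereafter), and the remaining \emph{post-discovery phase}. In the post-discovery phase the algorithm only ever re-pulls stored representatives and rewards depend only on the outcome, so those rounds are distributed exactly as a stochastic bandit over the at-most-$m$ discovered outcomes, whose arm set contains the globally optimal $o_\star$; \pref{lem:ucb-gapfree} then bounds their contribution by $O(\sqrt{m\,T'\,\log T})\le O(\sqrt{mT\log T})$, where $T'\le T$ is the (random) number of such rounds, the contribution being $0$ if $o_\star$ is never found within $T$. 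Hence all the work is in bounding the expected discovery-phase regret, where each round costs at most $1$ and costs $0$ whenever the drawn arm lies in $\cA_{o_\star}$; it suffices to bound $\EE[Z]$ for $Z$ the number of discovery rounds on which a suboptimal arm is drawn.

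The heart of the argument is a partition of the suboptimal arms. Let $E:=\bigcup_{o\neq o_\star}E_o$ be the \emph{excludable} arms removed when each suboptimal outcome is first seen, so $|E|=\sum_{o\neq o_\star}\rho_o s_o=\bar\rho K$, and let $P$ be the remaining \emph{permanent} suboptimal arms, with $|P|=(1-\bar\rho)K-s_\star$; write $Z=Z_{\mathrm{excl}}+Z_{\mathrm{perm}}$ for the induced split. First, $Z_{\mathrm{excl}}\le m-1$ deterministically: the first draw ever landing in a suboptimal class $\cA_o$ already reveals $o$ and triggers removal of $E_o$ from the live pool $U$, so at most one draw can ever land in a given $E_o$. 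Second, note that $P$ and $\cA_{o_\star}$ both remain subsets of $U$ throughout the discovery phase, since their arms are never excluded before $o_\star$ is found. Consequently, conditioned on a draw landing in $S:=P\cup\cA_{o_\star}$ it is uniform over $S$ regardless of the current pool, so the labels along the subsequence of rounds with draw in $S$ form an i.i.d.\ sequence --- in $\cA_{o_\star}$ with probability $s_\star/|S|=p_{o_\star}/(1-\bar\rho)$ and in $P$ otherwise --- and the discovery phase ends at the first draw in $\cA_{o_\star}$. The number of $P$-draws before that is geometric with mean $|P|/s_\star=(1-\bar\rho)/p_{o_\star}-1$, so $\EE[Z_{\mathrm{perm}}]\le (1-\bar\rho)/p_{o_\star}$. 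Combining, $\EE[Z]\le (1-\bar\rho)/p_{o_\star}+m$, which bounds the discovery regret; adding the post-discovery bound and absorbing the lower-order $m$ (if $T<m$ the total regret is already $\le T\le\sqrt{mT\log T}$, else $m\le\sqrt{mT\log T}$) gives the claim.

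The main obstacle is obtaining the factor $1-\bar\rho$ instead of the crude $1$ one gets from $|U_t|\le K$: this is exactly what the permanent/excludable split buys. The key point to verify is that every draw into an excludable chunk is ``productive'' --- it necessarily coincides with a first discovery and shrinks $U$ --- so that the only recurring cost comes from the fixed-size permanent pool $S$, whose composition pins the hitting rate at $p_{o_\star}/(1-\bar\rho)$. A secondary subtlety, and the reason the discovery phase must be declared over as soon as $o_\star$ is seen, is that the $S$-subsequence coupling is valid only up to that moment; thereafter, continuing uniform probing would keep accruing permanent-pool cost, whereas the algorithm instead switches to outcome-UCB over the discovered outcomes, for which the standard \pref{lem:ucb-gapfree} analysis applies.
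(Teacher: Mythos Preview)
Your argument is correct and shares the paper's two–phase (discovery plus outcome–UCB) skeleton, but the way you bound the discovery cost is genuinely different from the paper's \pref{lem:first-hit-soft}. The paper asserts a \emph{uniform} per–probe lower bound $s_{o_\star}/|U_t|\ge p_{o_\star}/(1-\bar\rho)$ and then invokes geometric domination to conclude $\EE[\tau_\star]\le(1-\bar\rho)/p_{o_\star}$; however, that per–step inequality cannot hold at the very first probe, where $|U_t|=K$ and the hit probability is only $p_{o_\star}$ (concretely, with $m=2$, $s_\star=1$, $\rho_1=1$ one gets $\EE[\tau_\star]=2-1/K>1=(1-\bar\rho)/p_{o_\star}$, so exclusions that have not yet happened cannot be credited up front). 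Your excludable/permanent split repairs exactly this: draws into $E=\bigcup_{o\neq o_\star}E_o$ are capped at $m-1$ deterministically, while all remaining draws live in the \emph{fixed} set $S=P\cup\cA_{o_\star}$ that is contained in $U_t$ throughout, so conditional uniformity makes the $S$–subsequence genuinely i.i.d.\ with success rate exactly $p_{o_\star}/(1-\bar\rho)$. The price you pay is the extra additive $m$, which is harmless and absorbed as you note. One small point worth making explicit in a full write-up is that your argument treats the sets $E_o$ as fixed in advance (so that $S$ is deterministic and the conditional-uniformity step is clean); this is a valid instantiation of the ``remove any $E_o$'' line in \pref{alg:se-oucb}, but deserves a sentence.
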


To prove this theorem, we first prove the stopping time that we discover the optimal outcome: 
\begin{lemma}[First hit of the optimal outcome]
\label{lem:first-hit-soft}
With \pref{alg:se-oucb}, let $\tau_\star$ be the index (number of fresh probes) of the
first pull that lands in $\cA_{o_\star}$. Then
\[
\EE[\tau_\star]\le\frac{1-\bar\rho}{p_{o_\star}}.
\]
\end{lemma}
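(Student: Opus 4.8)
The plan is to bound the number of fresh probes that miss $\cA_{o_\star}$ before the first one that hits it, reducing the messy pool dynamics to a geometric computation. First I would record the structure of the discovery phase: each fresh probe is drawn uniformly from the current pool $U$, and up to and including round $\tau_\star$ the pool still contains all $s_{o_\star}=p_{o_\star}K$ arms of $\cA_{o_\star}$, since soft exclusion deletes arms only from a \emph{non-optimal} class and only at the round that class is first observed. Thus, conditioned on the history, each fresh probe before $\tau_\star$ lands in $\cA_{o_\star}$ with probability $s_{o_\star}/|U|$, and everything hinges on how fast $|U|$ contracts toward its eventual value $s_{o_\star}+\sum_{o\neq o_\star}(1-\rho_o)s_o=K\bigl(1-\sum_{o\neq o_\star}\rho_o p_o\bigr)=K(1-\bar\rho)$.

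The naive hope---that $|U|\le K(1-\bar\rho)$ at every round, so the hit probability is always at least $p_{o_\star}/(1-\bar\rho)$ and $\tau_\star\preceq\mathrm{Geometric}(p_{o_\star}/(1-\bar\rho))$---is false, since $|U|=K$ before any non-optimal outcome is discovered. Instead I would classify each arm (hence each fresh probe) into three types: \emph{optimal}; \emph{surviving} non-optimal (never to be excluded); or \emph{doomed} (slated for exclusion). The $s_{o_\star}$ optimal arms and the $R:=\sum_{o\neq o_\star}(1-\rho_o)s_o=K(1-\bar\rho)-s_{o_\star}$ surviving arms lie in $U$ at \emph{every} round before $\tau_\star$, whereas a doomed arm of class $o$ is drawable at most once, because drawing it discovers $o$ and immediately removes the block $E_o$. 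Conditioning a fresh probe on being ``optimal or surviving'' makes it uniform over those $s_{o_\star}+R$ arms, so the subsequence of such probes is i.i.d.\ with hit probability $s_{o_\star}/(s_{o_\star}+R)$ and contributes $R/s_{o_\star}=(1-\bar\rho)/p_{o_\star}-1$ wasted probes in expectation; for the doomed probes, an exchangeability argument---inside $\cA_{o_\star}\cup\cA_o$ every arm is equally likely to be the first of that set drawn, whatever the rest of the pool does---bounds their expected number before $\tau_\star$ by $\sum_{o\neq o_\star}\rho_o s_o/(s_{o_\star}+s_o)$. Summing the two and simplifying gives $\EE[\tau_\star]\le(1-\bar\rho)/p_{o_\star}$.

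The step I expect to fight hardest is the doomed-probe term: in heavily imbalanced, high-$\rho$ instances each summand $\rho_o s_o/(s_{o_\star}+s_o)$ is nearly $1$, so a blunt bound leaves an additive $O(m)$ overhead. Landing exactly at $(1-\bar\rho)/p_{o_\star}$ requires exploiting the cancellation with the geometric term---the surviving mass $R$ drops by exactly $\rho_o s_o$ at the moment $o$ is discovered---which is the delicate bookkeeping; failing that, one settles for an extra $O(m)$, harmless downstream since it is dominated by the $\sqrt{mT\log T}$ bandit term in \pref{thm:soft-upper}. A secondary point is making the exchangeability claims precise against the natural filtration, since the pool $U$ is itself random and time-varying.
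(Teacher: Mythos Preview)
Your instinct that the naive geometric bound fails is correct---and worth stating plainly, because the paper's own proof is precisely that naive argument. The paper claims the per-probe hit probability is uniformly at least $p_{o_\star}/(1-\bar\rho)$ via the chain $s_{o_\star}/|U_t|\ge s_{o_\star}/(K-|E|)\ge p_{o_\star}/(1-\bar\rho)$, asserting $|U_t|\le K-|E|$ with $|E|\le K\bar\rho$. But both inequalities point the wrong way: at round $t=1$ no exclusion has occurred so $|U_1|=K$, and in general $|U_t|\ge K(1-\bar\rho)$ since at most $K\bar\rho$ arms are ever excluded before $\tau_\star$. Hence the conditional hit probability is at \emph{most} $p_{o_\star}/(1-\bar\rho)$, not at least, and the geometric domination the paper invokes does not go through.

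Your optimal/surviving/doomed decomposition is a genuine repair and, for $E_o$ fixed in advance (a valid instantiation of \pref{alg:se-oucb}), is essentially exact: the surviving subsequence contributes exactly $R/s_{o_\star}=(1-\bar\rho)/p_{o_\star}-1$ missed probes, and the doomed contribution is exactly $\sum_{o\neq o_\star}\rho_o s_o/(s_{o_\star}+s_o)$ by the exchangeability argument you sketch. But the cancellation you hope for cannot materialize, because the lemma as stated is actually false. Take $m=2$, $s_{o_\star}=1$, $s_1=K-1$, $\rho_1=1$: then $(1-\bar\rho)/p_{o_\star}=1$, yet a direct computation gives $\EE[\tau_\star]=\tfrac{1}{K}\cdot 1+\tfrac{K-1}{K}\cdot 2=2-\tfrac{1}{K}$, which matches your formula $1+\rho_1 s_1/(s_{o_\star}+s_1)=1+(K-1)/K$ and strictly exceeds $1$. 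So do not chase the exact constant: your fallback bound $\EE[\tau_\star]\le (1-\bar\rho)/p_{o_\star}+(m-1)$ (each doomed summand is at most $1$) is correct, tight in this example, and the additive $m-1$ is harmlessly absorbed into the $\sqrt{mT\log T}$ term of \pref{thm:soft-upper}, exactly as you anticipated.
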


\begin{proof}
Let $U_t$ denote the pool of arms available for fresh probes just \emph{before} the $t$-th fresh probe, and let
$E$ be the (random) union of all excluded sets $E_o$ over non-optimal outcomes $o\neq o_\star$ that have been
discovered prior to time $\tau_\star$. By definition of soft exclusion, $|E|\le \sum_{o\neq o_\star}\rho_o s_o=K\bar\rho$.

Fix any history up to time $t-1$ with $\tau_\star>t-1$. Then no arm from $\cA_{o_\star}$ has been drawn yet, so
all $s_{o_\star}$ optimal arms remain in $U_t$. The next fresh probe samples uniformly from $U_t$, hence
\[
\Pr(\text{hit }\cA_{o_\star}\text{ at probe }t \mid \tau_\star>t-1,\ \cH_{t-1})
=
\frac{s_{o_\star}}{|U_t|}
\ge
\frac{s_{o_\star}}{K - |E|}
\ge
\frac{p_{o_\star}}{1-\bar\rho}.
\]
The first inequality uses $|U_t|\le K-|E|$ (we can only remove arms, via exclusions and previous draws),
and the second uses $|E|\le K\bar\rho$. Therefore the conditional success probability at each probe is
\emph{uniformly} lower bounded by $p_{o_\star}/(1-\bar\rho)$, regardless of the discovery order of non-optimal outcomes.

Let $G$ be a geometric random variable with parameter $p_{o_\star}/(1-\bar\rho)$ (counting the number of trials until
the first success). By the usual domination argument for inhomogeneous Bernoulli sequences with per-trial success
probability bounded below,\footnote{Formally, if $X_t\in\{0,1\}$ are conditionally independent given the past with
$\Pr(X_t=1\mid \cH_{t-1})\ge \theta$ for all $t$, then $\sum_{u=1}^t(1-X_u)$ is stochastically dominated by a sum of
i.i.d.\ $\text{Bernoulli}(1-\theta)$, and the first success time is dominated by $\mathrm{Geom}(\theta)$.}
we have $\tau_\star\preceq G$ (stochastic domination). Taking expectations gives
\(\EE[\tau_\star]\le (1-\bar\rho)/p_{o_\star}\).
\end{proof}

\begin{proof}[Proof of \pref{thm:soft-upper}]
Decompose the pseudo-regret as
\[
R_T = 
\sum_{t=1}^{\min\{T,\tau_\star-1\}} (\mu^\star-\mu(O_t))
+
\sum_{t=\tau_\star}^{T} (\mu^\star-\mu(O_t)).
\]
For the first term, we have
\[
\EE\Big[\sum_{t=1}^{\min\{T,\tau_\star-1\}} (\mu^\star-\mu(O_t))\Big]
\le \EE[\min\{T,\tau_\star-1\}]
\le \EE[\tau_\star]
\le \frac{1-\bar\rho}{p_{o_\star}}
\]
by Lemma~\ref{lem:first-hit-soft}. For the second term, apply \pref{lem:ucb-gapfree}.
Summing the bounds gives the claim. 
\end{proof}

\begin{remark}
For all algorithms in this section, we separate the exploration phase (discovering outcomes) and the exploitation phase (outcome-level UCB). This is mainly for ease of presentation and analysis. In practice, the exploration phase can happen naturally with the outcome-level UCB, as the algorithm will always seek unpicked outcomes due to the design of the UCB term. 
\end{remark}

\subsection{Technical Lemmas}
\begin{lemma}
\label{lem:yao-minimax}
Let $\Pi_{\det}$ be a (possibly infinite) set of deterministic algorithms (policies), let $\cI$ be a set of instances,
and let $L:\Pi_{\det}\times\cI\to\mathbb{R}_{\ge 0}$ be any (measurable) loss functional. 
Write $\Delta(\Pi_{\det})$ and $\Delta(\cI)$ for distributions over algorithms and instances, respectively.
Then
\[
\inf_{\widetilde\pi\in \Delta(\Pi_{\det})}\ \sup_{I\in\cI}\ \EE\big[\,L(\widetilde\pi, I)\,\big]
\ \ \ge\ \
\sup_{\cD\in\Delta(\cI)}\ \inf_{\pi\in\Pi_{\det}}\ \EE_{I\sim \cD}\big[\,L(\pi, I)\,\big].
\]
\end{lemma}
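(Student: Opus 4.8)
The plan is to establish this inequality by the standard ``averaging'' argument that underlies Yao's minimax principle; note this is the \emph{easy} direction of the minimax relation and requires no convexity or compactness. First I would fix an arbitrary randomized algorithm $\widetilde\pi\in\Delta(\Pi_{\det})$ and an arbitrary instance distribution $\cD\in\Delta(\cI)$. The elementary observation is that a supremum over instances dominates any weighted average over instances:
\[
\sup_{I\in\cI}\ \EE\big[L(\widetilde\pi,I)\big]\ \ge\ \EE_{I\sim\cD}\Big[\,\EE\big[L(\widetilde\pi,I)\big]\,\Big],
\]
where the inner expectation is over the internal randomness of $\widetilde\pi$.

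Next, since $L\ge 0$ and $\widetilde\pi$ is by definition a distribution over the deterministic policies in $\Pi_{\det}$, I would invoke Tonelli's theorem to interchange the two expectations, $\EE_{I\sim\cD}\big[\EE_{\pi\sim\widetilde\pi}[L(\pi,I)]\big]=\EE_{\pi\sim\widetilde\pi}\big[\EE_{I\sim\cD}[L(\pi,I)]\big]$, and then bound an average over $\pi\sim\widetilde\pi$ from below by the infimum, $\EE_{\pi\sim\widetilde\pi}\big[\EE_{I\sim\cD}[L(\pi,I)]\big]\ \ge\ \inf_{\pi\in\Pi_{\det}}\EE_{I\sim\cD}[L(\pi,I)]$. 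Chaining these steps yields, for every $\widetilde\pi$ and every $\cD$,
\[
\sup_{I\in\cI}\ \EE\big[L(\widetilde\pi,I)\big]\ \ge\ \inf_{\pi\in\Pi_{\det}}\ \EE_{I\sim\cD}\big[L(\pi,I)\big].
\]
Since the left-hand side does not depend on $\cD$ and the right-hand side does not depend on $\widetilde\pi$, I would take the infimum over $\widetilde\pi\in\Delta(\Pi_{\det})$ on the left and the supremum over $\cD\in\Delta(\cI)$ on the right, which is exactly the claimed bound.

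The only genuinely delicate point — and the one I would be careful about — is measurability: for Tonelli to apply and for the inner infimum/supremum to be meaningful, one needs $\pi\mapsto\EE_{I\sim\cD}[L(\pi,I)]$ and $I\mapsto\EE_{\pi\sim\widetilde\pi}[L(\pi,I)]$ to be measurable, which is precisely the role of the ``(measurable)'' hypothesis on $L$ in the statement; in our bandit application this is vacuous because $\Pi_{\det}$ and $\cI$ are effectively finite (finitely many arms and finitely many placements of the optimal class), so all sums are finite. A secondary point deserving one sentence is that the randomized algorithms of interest are indeed elements of $\Delta(\Pi_{\det})$: any internally randomized bandit policy can be realized by drawing a random seed and then executing the induced deterministic policy, so conditioning on the seed exhibits it as a mixture over $\Pi_{\det}$, and the bound therefore applies to it.
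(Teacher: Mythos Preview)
Your proof is correct and is the standard averaging argument for the easy direction of Yao's principle. The paper does not supply its own proof of this lemma; it is listed among the technical lemmas as a well-known fact and invoked as-is, so there is nothing to compare against beyond noting that your argument is exactly the classical one.
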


\begin{lemma}[Coupon-collector coupling \citep{motwani1996randomized}]
\label{lem:cc-coupling}
In the balanced partition setting, let $\tau_{\mathrm{disc}}$ be the discovery time when drawing unseen arms
uniformly without replacement. Let $T_{\mathrm{cc}}$ be the time to observe all $m$ labels under i.i.d.
sampling with replacement with uniform label distribution $1/m$.
Then $\EE[\tau_{\mathrm{disc}}] \le \EE[T_{\mathrm{cc}}] = m H_m \le m(\log m+1)$.
\end{lemma}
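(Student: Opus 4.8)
The statement is the classical coupon-collector coupling, so the plan is to verify its three links in turn: (i) the coupling inequality $\EE[\tau_{\mathrm{disc}}]\le\EE[T_{\mathrm{cc}}]$; (ii) the exact coupon-collector identity $\EE[T_{\mathrm{cc}}]=mH_m$; and (iii) the harmonic-number bound $mH_m\le m(\log m+1)$. Parts (ii) and (iii) are textbook: writing $T_{\mathrm{cc}}=\sum_{j=0}^{m-1}D_j^{\mathrm{cc}}$, where $D_j^{\mathrm{cc}}$ is the number of i.i.d.\ draws needed to go from $j$ to $j+1$ distinct labels, each $D_j^{\mathrm{cc}}\sim\mathrm{Geom}((m-j)/m)$ is independent with mean $m/(m-j)$, so $\EE[T_{\mathrm{cc}}]=\sum_{j=0}^{m-1}m/(m-j)=m\sum_{i=1}^m 1/i=mH_m$, and $H_m\le 1+\int_1^m dx/x=1+\log m$. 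These may simply be cited to \citep{motwani1996randomized}; the real content is part (i).

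For (i), I would track the without-replacement discovery process by its \emph{level} $j\in\{0,1,\dots,m\}$, the number of distinct outcome labels seen so far, and let $D_j$ be the number of fresh probes spent while the process is at level $j$, so that $\tau_{\mathrm{disc}}=\sum_{j=0}^{m-1}D_j$. The key structural fact from balancedness (\pref{ass:balanced}, $|\cA_o|=s=K/m$ for every $o$) is that an outcome becomes a candidate for removal from the fresh-probe pool $U$ only after one of its arms has been drawn; hence, conditioned on any history in which the current level is $j$, all $s$ arms of each of the $m-j$ undiscovered classes are still in $U$, and $|U|\le K$. Therefore the conditional probability that the next fresh probe reveals a new outcome is
\[
\frac{(m-j)s}{|U|}\ \ge\ \frac{(m-j)s}{K}\ =\ \frac{m-j}{m},
\]
which is exactly the per-draw new-label probability of the with-replacement coupon collector at level $j$.

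Given this uniform-over-histories lower bound on the advancement probability, I would conclude by a coupling/stochastic-domination step: because at \emph{every} fresh probe the indicator ``advance a level'' is, conditionally on the entire past, a Bernoulli with parameter at least $(m-j)/m$ (with $j$ the current level), one can couple the without-replacement process with an i.i.d.\ coupon-collector process so that the former advances a level whenever the latter does; hence $\tau_{\mathrm{disc}}$ is stochastically dominated by $T_{\mathrm{cc}}$. Taking expectations gives $\EE[\tau_{\mathrm{disc}}]\le\EE[T_{\mathrm{cc}}]$, and chaining with (ii)--(iii) finishes the proof.

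I expect the only subtle point to be the rigor of this last step, since the success probability at a fixed level $j$ is not constant (it only increases as more arms leave $U$), so the per-level waiting times $D_j$ are not honest geometrics and the domination must be argued probe-by-probe rather than level-by-level. An equivalent route that avoids stochastic domination altogether is to bound expectations directly: conditioned on reaching level $j$, the per-probe lower bound $(m-j)/m$ gives $\Pr(D_j>k\mid\cdot)\le(1-\tfrac{m-j}{m})^k$ and hence $\EE[D_j\mid\text{reach level }j]\le m/(m-j)$; summing over $j=0,\dots,m-1$ (every level is reached with probability one, as $\tau_{\mathrm{disc}}\le K<\infty$ almost surely) yields $\EE[\tau_{\mathrm{disc}}]\le\sum_{j=0}^{m-1}m/(m-j)=mH_m$ directly.
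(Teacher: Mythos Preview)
Your proof is correct. The paper does not actually prove this lemma: it is listed under ``Technical Lemmas'' with only the citation to \citep{motwani1996randomized} and no argument, so there is nothing in the paper to compare against. Your write-up supplies the details the paper omits, and your second route (the direct tail bound $\Pr(D_j>k\mid\cdot)\le(1-(m-j)/m)^k$ followed by summing $\EE[D_j]\le m/(m-j)$ over $j$) is clean and fully rigorous; it sidesteps the probe-by-probe coupling you flagged as subtle and is the version I would keep.
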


\begin{lemma}[\citet{auer2002finite}]
\label{lem:ucb-gapfree}
Consider an $m$-armed stochastic bandit with rewards in $[0,1]$ that are $1$-sub-Gaussian and means $\{\mu(o)\}_{o=1}^m$.
Run UCB with index $\hat\mu_o + \sqrt{(2\log t)/n_o}$ for $t=1,2,\dots,T$ (initializing each arm once).
Then the pseudo-regret satisfies
\[
\EE[R_T] \le O\prn*{\sqrt{mT\log T}}.
\]
\end{lemma}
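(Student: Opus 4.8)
The plan is to reproduce the classical gap-free $\ucbalg$ analysis of \citet{auer2002finite}: bound the pseudo-regret by a sum of confidence widths along the played arms, separately control the rare rounds on which a confidence interval fails, and close with Cauchy--Schwarz. Write $n_o(t)$ for the number of pulls of arm $o$ strictly before round $t$, $\hat\mu_o(t)$ for the corresponding empirical mean, and $\mathrm{rad}_t(o):=\sqrt{2\log t/n_o(t)}$ (for $n_o(t)\ge1$). Since the algorithm initializes each arm once, rounds $1,\dots,m$ contribute at most $m$ to $\EE[R_T]$. For a round $t>m$, call $t$ \emph{clean} if both $\hat\mu_{o^\star}(t)+\mathrm{rad}_t(o^\star)\ge\mu^\star$ and $\hat\mu_{O_t}(t)\le\mu(O_t)+\mathrm{rad}_t(O_t)$ hold. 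On a clean round, since $\ucbalg$ picks the maximal index, $\hat\mu_{O_t}(t)+\mathrm{rad}_t(O_t)\ge\hat\mu_{o^\star}(t)+\mathrm{rad}_t(o^\star)\ge\mu^\star$, and combining with the upper bound on $\hat\mu_{O_t}(t)$ yields the instantaneous bound $\mu^\star-\mu(O_t)\le 2\,\mathrm{rad}_t(O_t)$; on a non-clean round I simply use $\mu^\star-\mu(O_t)\le1$.

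\textbf{Failure probability.} By Hoeffding's inequality (rewards lie in $[0,1]$, hence are $1/4$-sub-Gaussian, implying the assumed $1$-sub-Gaussianity), for any arm $o$ and fixed count $s\le t$ we have $\Pr\prn*{|\hat\mu_{o,s}-\mu(o)|\ge\sqrt{2\log t/s}}\le 2t^{-4}$. Union-bounding the "optimism fails for $o^\star$" event over $s\in\{1,\dots,t\}$ and the "$\hat\mu_{O_t}$ too small" event over the $m$ possible identities of $O_t$ and their counts $s\le t$ shows that round $t$ is non-clean with probability at most $(m+1)t^{-3}$. Summing, $\sum_{t\ge1}(m+1)t^{-3}=O(m)$, so the non-clean rounds contribute only $O(m)$ to $\EE[R_T]$.

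\textbf{Summing the widths.} It remains to bound $\EE\brk*{\sum_{t>m}2\,\mathrm{rad}_t(O_t)}\le 2\sqrt{2\log T}\,\EE\brk*{\sum_{t>m}1/\sqrt{n_{O_t}(t)}}$. Reindexing by arm: if the $k$-th pull of arm $o$ (counting its initialization pull as the first) occurs at round $t$, then $n_o(t)=k-1$, so $\sum_{k\ge2}1/\sqrt{k-1}\le 2\sqrt{n_o(T)}$. Hence $\sum_{t>m}1/\sqrt{n_{O_t}(t)}\le 2\sum_o\sqrt{n_o(T)}\le 2\sqrt{m\sum_o n_o(T)}=2\sqrt{mT}$ by Cauchy--Schwarz and $\sum_o n_o(T)=T$. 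Combining the three contributions gives $\EE[R_T]\le m+O(m)+4\sqrt{2mT\log T}=O\prn*{\sqrt{mT\log T}}$; when $m>T$ the trivial bound $R_T\le T$ already gives the claim.

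\textbf{Main obstacle.} The one delicate step is the failure-probability bookkeeping: the union bound must be arranged so that confidence-interval failures cost only $O(1)$ per arm rather than a quantity growing in $T$. This works precisely because the $\log t$ inside the radius converts Hoeffding's tail into $t^{-4}$, which survives the sum over counts $s\le t$ and over $t\le T$. An equivalent alternative is to first prove the per-arm bound $\EE[n_o(T)]\le 8\log T/\Delta_o^2+O(1)$ via the standard three-case argument and then split the arms at a gap threshold $\delta=\sqrt{8m\log T/T}$, trading $\delta T$ against $m\cdot 8\log T/\delta$; the remaining manipulations in either route are mechanical.
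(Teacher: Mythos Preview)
Your proof is a correct and standard reproduction of the classical gap-free $\ucbalg$ analysis. The paper itself does not give a proof of this lemma: it is stated as a cited technical result from \citet{auer2002finite} and invoked as a black box in the proofs of Theorems~\ref{thm:balanced-ucb-full}, \ref{thm:pa-ucb}, and \ref{thm:soft-upper}. So there is no paper-side argument to compare against; your write-up simply fills in the classical details the paper leaves to the reference.

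One small remark: your $t^{-4}$ tail relies on the $[0,1]$-boundedness (i.e., the $1/4$-sub-Gaussian constant), not merely on the stated $1$-sub-Gaussian assumption; you already flag this, and since the lemma explicitly assumes rewards in $[0,1]$ the step is legitimate.
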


\section{Additional Experiment on Hard Dataset}
\begin{figure}[H]
    \centering
    \includegraphics[width=0.6\linewidth]{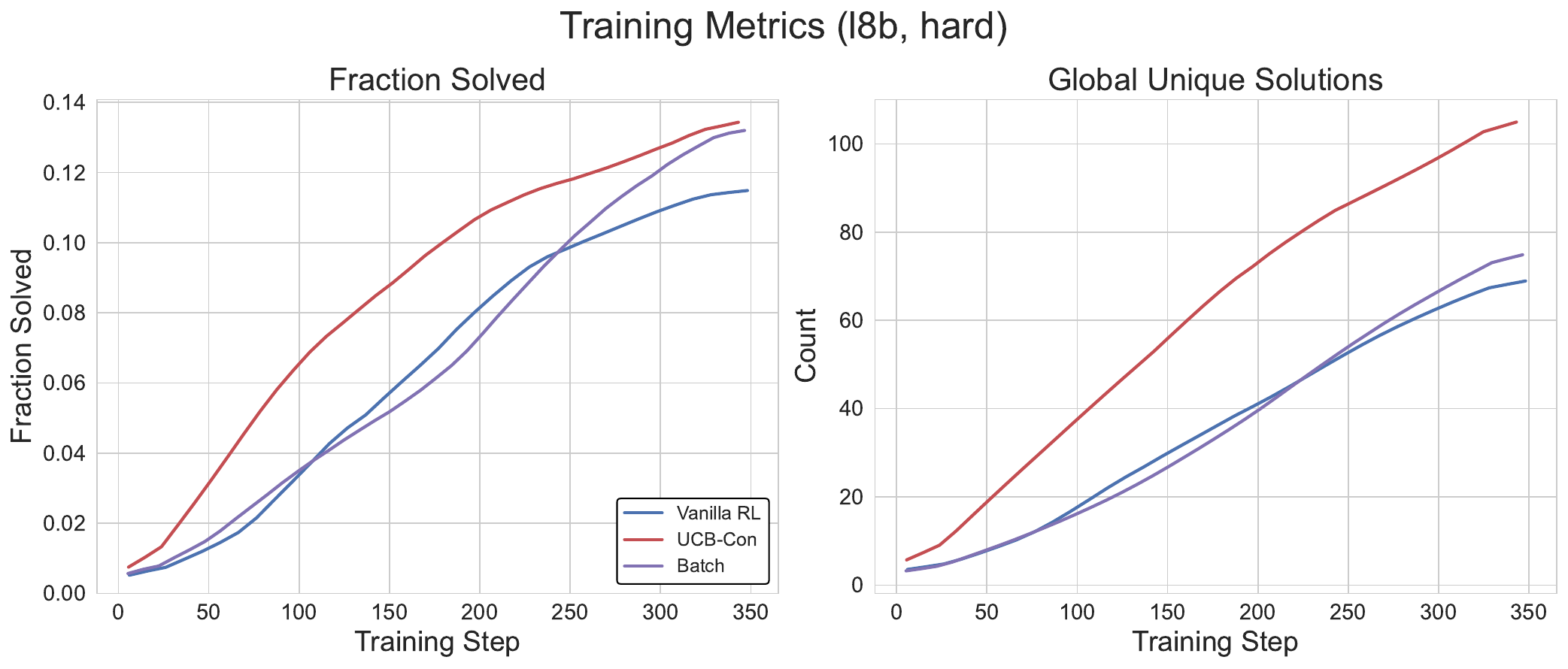}
    \caption{Training performance comparison between  $\ucbconst$, $\batchalgo$ and the $\grpo$ baseline, with $\llama$ on the hard dataset. Left: fraction of questions solved so far; Right: number of different answers sampled so far. The x-axis denotes the number of gradient updates as we train all models fully on-policy. We repeat each experiment with 3 different random seeds and plot the mean performance.}
    \label{fig:app_ucb_train_hard}
\end{figure} 
In this section, we compare the training performance between $\ucbconst$, $\batchalgo$ and the $\grpo$ baseline on the hard dataset. For $\llama$, the dataset comprises 996 questions in total. We observe the improvement of outcome-based exploration algorithm is even larger than the base model in this hard dataset, in both the number of questions solved and the number of unique solutions explored. However, we remark that this is more of a synthetic setting because training on this hard dataset does not bring meaningful improvement on the test dataset, for all of the baselines. Nevertheless, combining with the results from the medium dataset, our results indicate that exploration is more beneficial in the medium to hard training regime. \looseness=-1

\section{Omitted Plots}
\begin{figure}[H]
    \centering
        \includegraphics[width=0.49\linewidth]{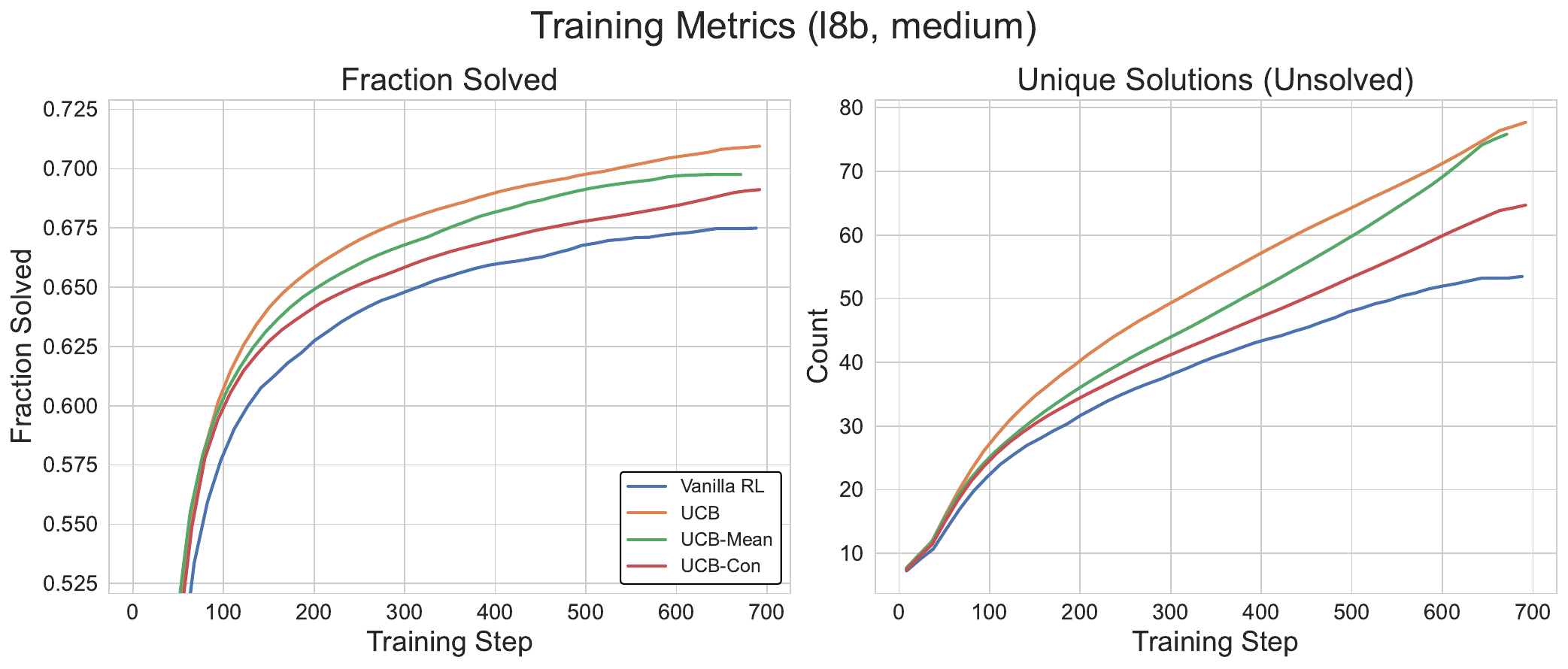}
    \includegraphics[width=0.49\linewidth]{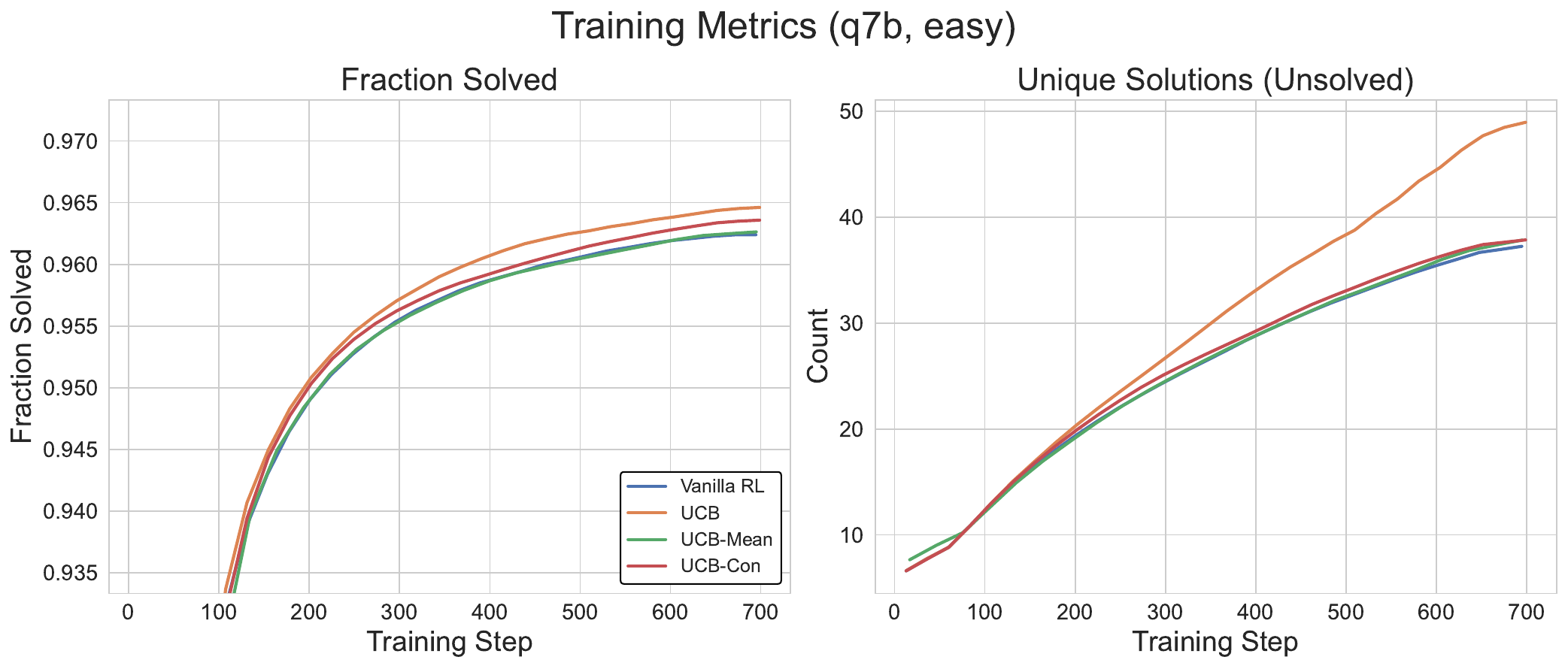}
    \caption{Training performance comparison between different $\ucbalg$ variants and the $\grpo$ baseline, with $\llama$ on the medium dataset (left) and $\qwen$ on the easy dataset (right). For each subplot: Left: fraction of questions solved so far; Right: number of different answers sampled on the questions that the model has yet to solve (i.e., sample one correct answer historically). The x-axis denotes the number of gradient updates as we train all models fully on-policy. We repeat each experiment with 3 different random seeds and plot the mean performance.}
    \label{fig:app_ucb_train}
\end{figure} 

\begin{figure}[H]
    \centering
    \includegraphics[width=0.93\linewidth]{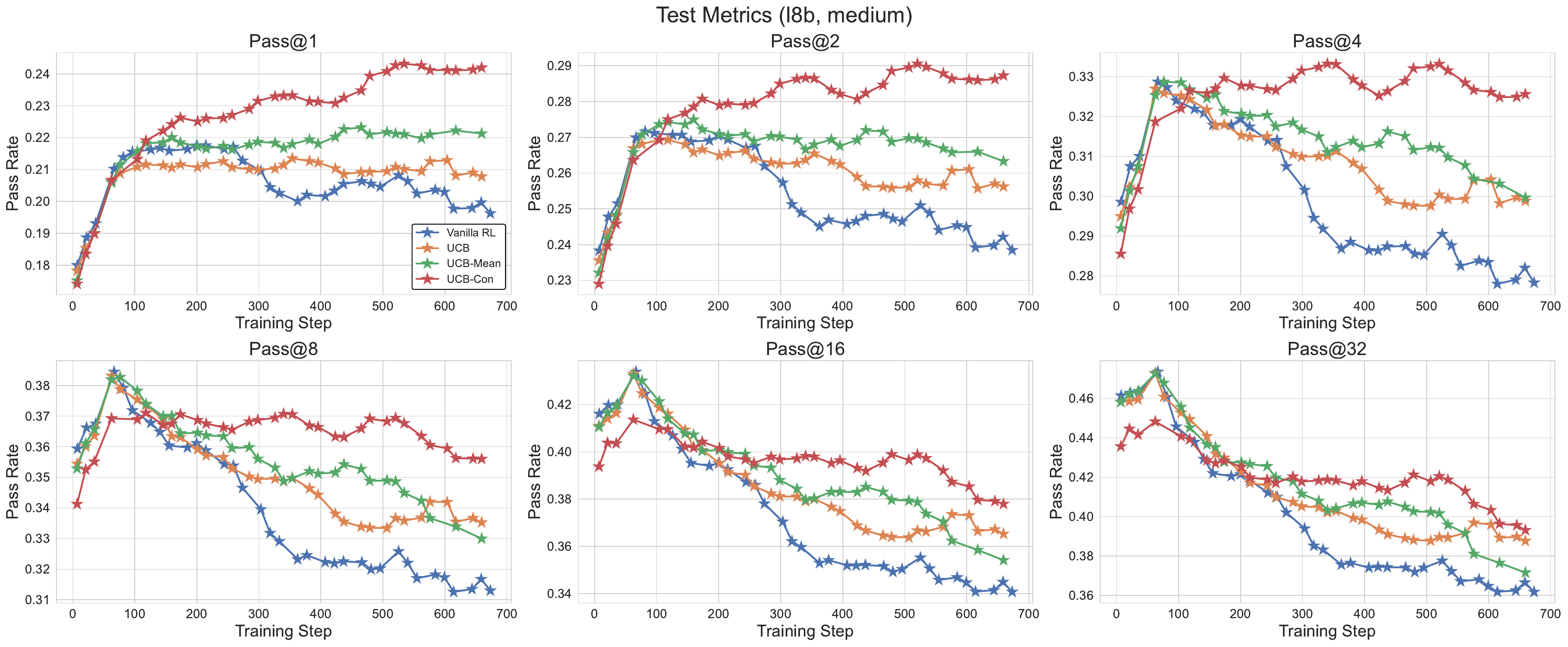}
    \includegraphics[width=0.93\linewidth]{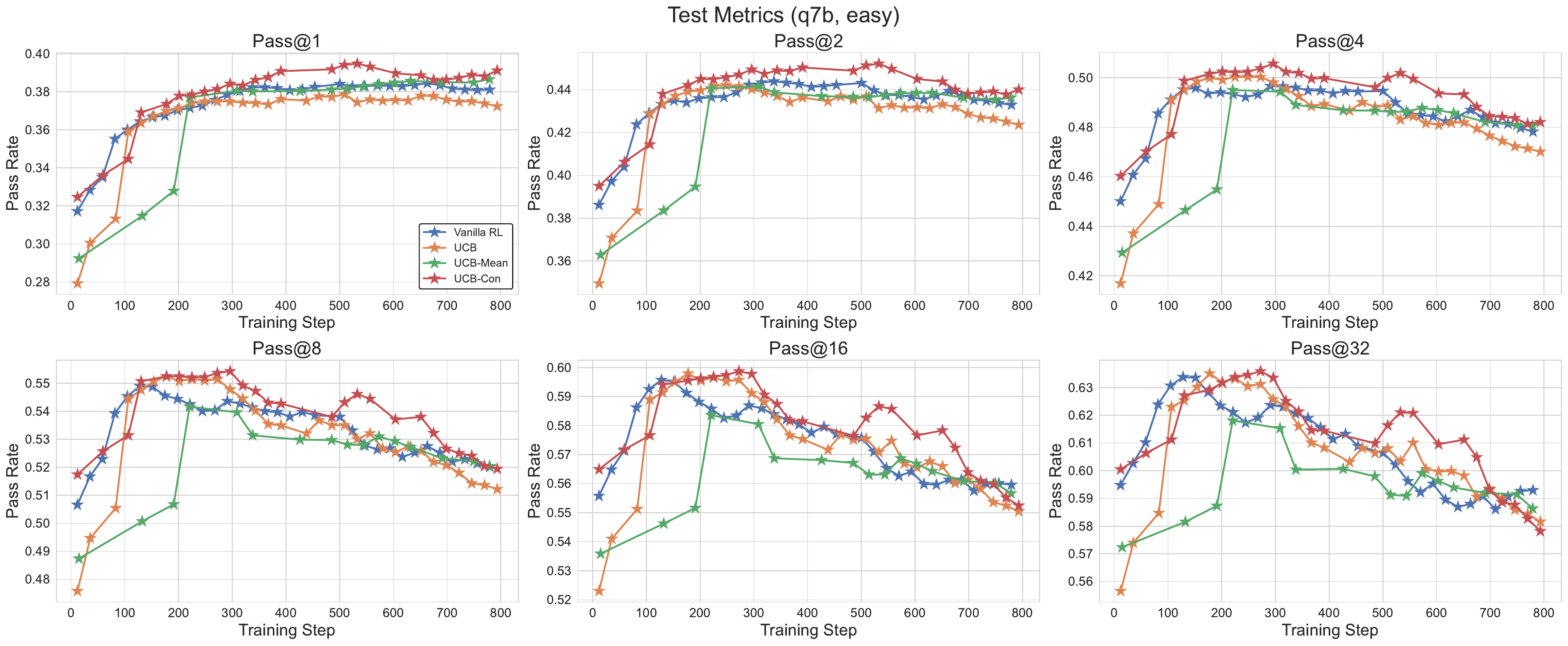}
    \caption{Test performance comparison between different $\ucbalg$ variants and the $\grpo$ baseline, with $\llama$ on the medium dataset (top) and $\qwen$ on the easy dataset (bottom). We report pass@$k$ for $k \in \{1,2,4,8,16,32\}$ at every 20 training steps. We repeat each experiment with 3 different random seeds and plot the mean performance (see \pref{sec:quant-results} for error bars). The metrics are calculated based on 32 samples per question during evaluation.}
    \label{fig:app_ucb_test}
\end{figure}

\begin{figure}[H] 
    \centering
    \includegraphics[width=0.49\linewidth]{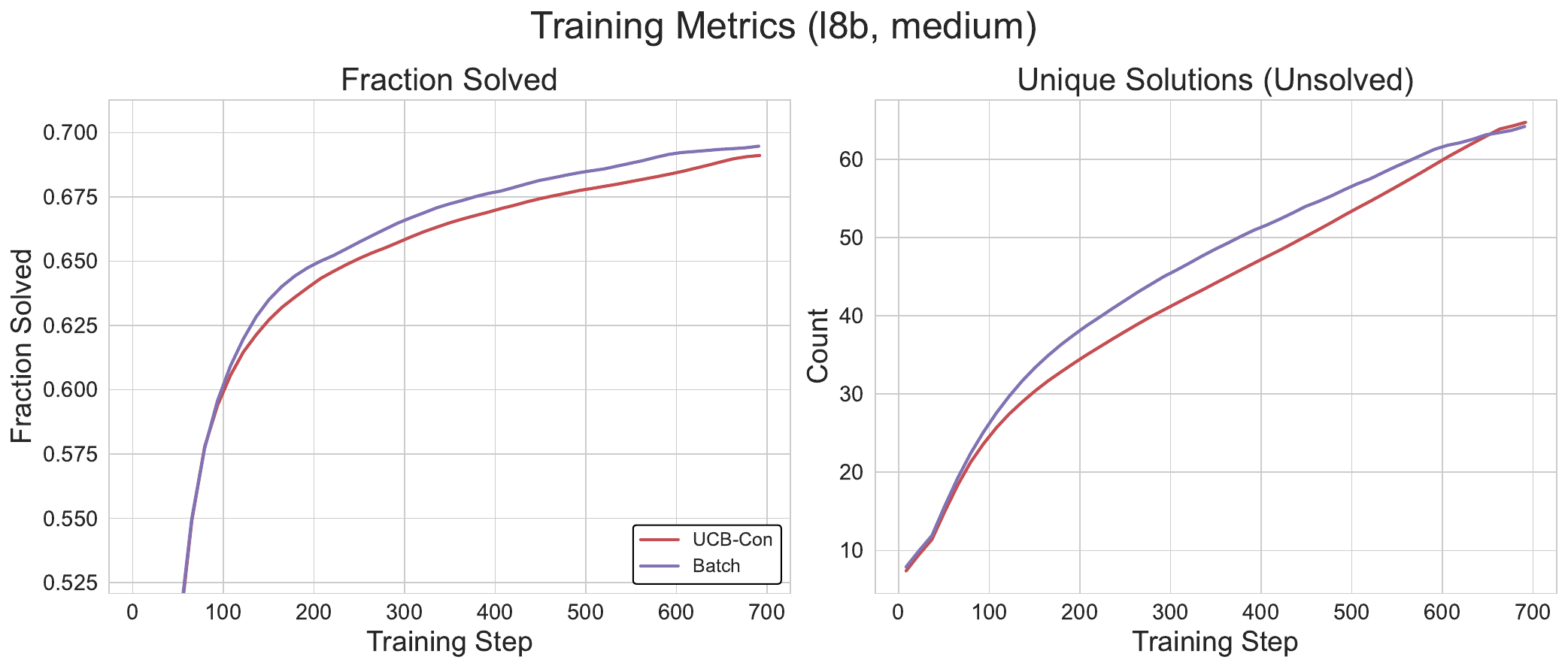}
    \includegraphics[width=0.49\linewidth]{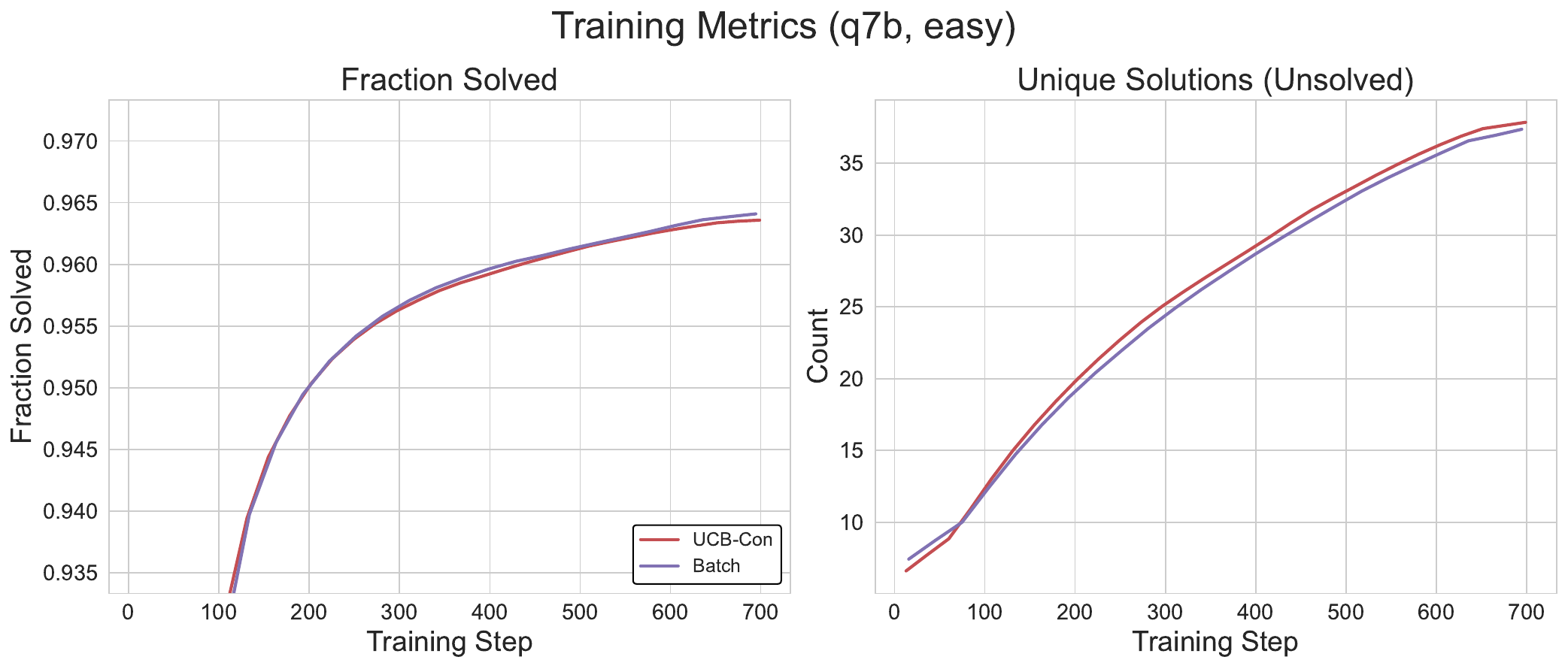}
    \caption{Training performance comparison between $\batchalgo$ and $\ucbconst$,  $\llama$ on the medium dataset (left) and $\qwen$ on the easy dataset (right). For each subplot: Left: fraction of questions solved so far; Right: number of different answers sampled on the questions that the model has yet to solve (i.e., sample one correct answer historically). The x-axis denotes the number of gradient updates as we train all models fully on-policy. We repeat each experiment with 3 different random seeds and plot the mean performance.}
    \label{fig:app_batch_train}
\end{figure}

\begin{figure}[H]
    \centering
    \includegraphics[width=0.93\linewidth]{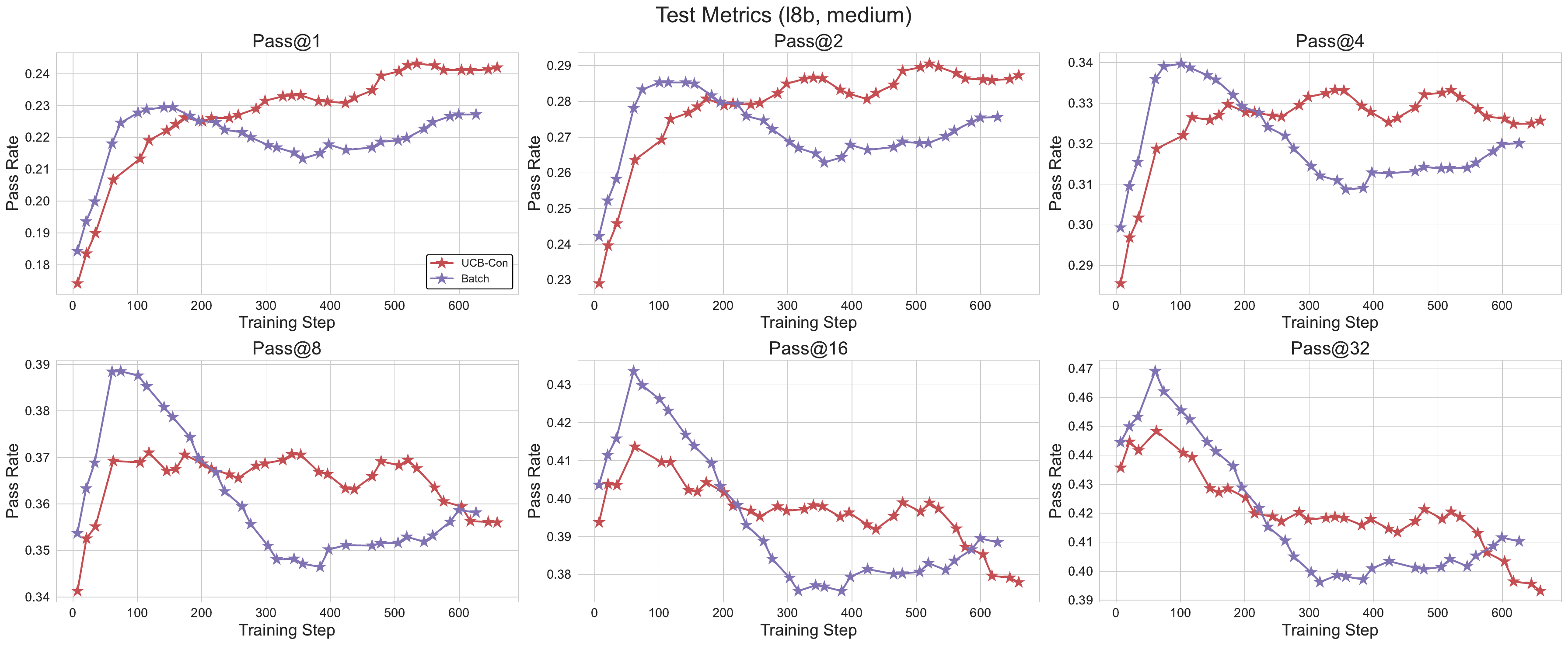}
    \includegraphics[width=0.93\linewidth]{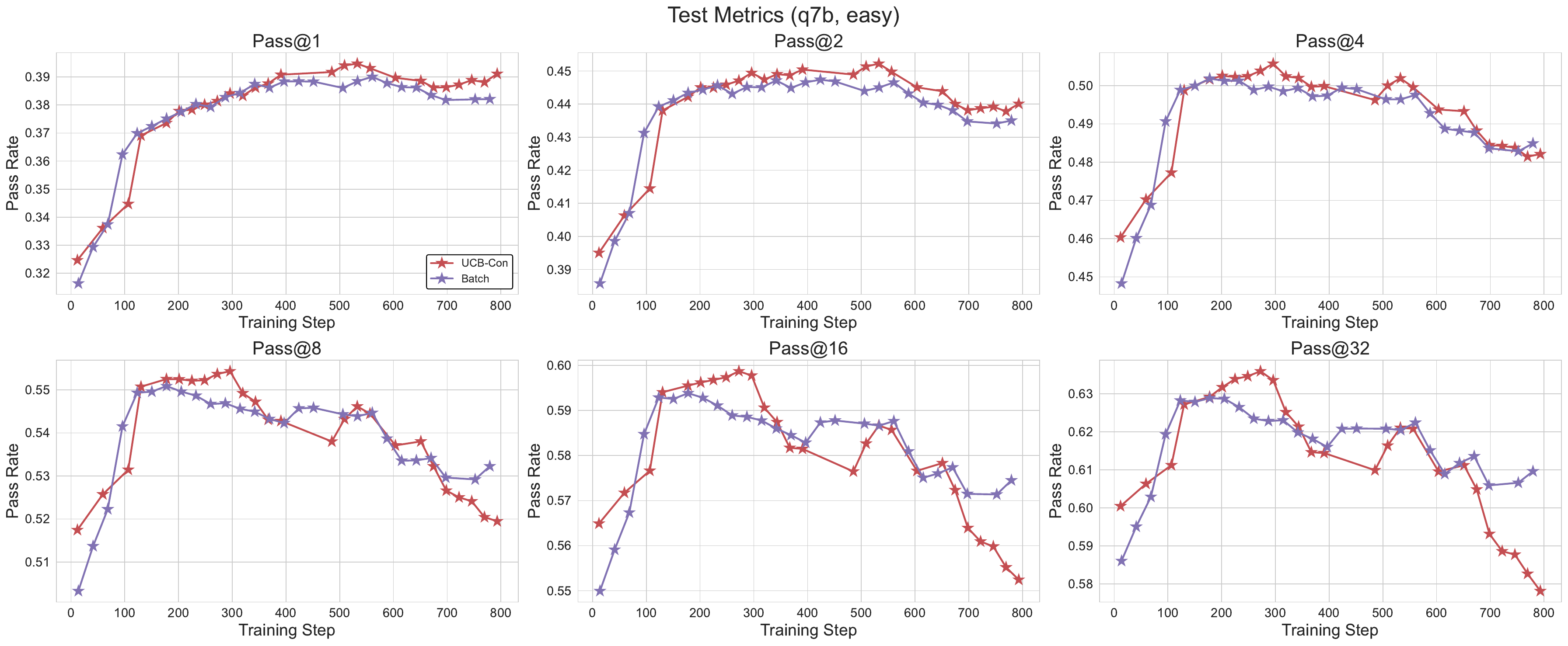}
    \caption{Test performance comparison between $\batchalgo$ and $\ucbconst$, with $\llama$ on the medium dataset (top) and $\qwen$ on the easy dataset (bottom). We report pass@$k$ for $k \in \{1,2,4,8,16,32\}$ at every 20 training steps. We repeat each experiment with 3 different random seeds and plot the mean performance (see \pref{sec:quant-results} for error bars). The metrics are calculated based on 32 samples per question during evaluation.}
    \label{fig:app_batch_test}
\end{figure}

\newpage
\section{Quantitative Results}\label{sec:quant-results}
\begin{table}[h]
    \centering
        \caption{Quantitative comparison of different baselines on pass@1 and pass@32 at the best checkpoint over three random seeds. We report mean and standard deviation in parentheses. The best mean results are in bold. Note that $\ucbconst$ in general achieves the best peak performance.}
\begin{tabular}{lcccccccc}
\toprule
& \multicolumn{4}{c}{$\llama$} & \multicolumn{4}{c}{$\qwen$} \\
\cmidrule(lr){2-5} \cmidrule(lr){6-9}
Method & \multicolumn{2}{c}{Math} & \multicolumn{2}{c}{DAPO} & \multicolumn{2}{c}{Math} & \multicolumn{2}{c}{DAPO} \\
\cmidrule(lr){2-3} \cmidrule(lr){4-5} \cmidrule(lr){6-7} \cmidrule(lr){8-9}
& Pass@1 & Pass@32 & Pass@1 & Pass@32 & Pass@1 & Pass@32 & Pass@1 & Pass@32 \\
\midrule
Vanilla RL & \makecell{0.231\\(0.006)} & \makecell{0.465\\(0.031)} & \makecell{0.218\\(0.012)} & \makecell{\textbf{0.474}\\(0.014)} & \makecell{0.385\\(0.003)} & \makecell{0.634\\(0.008)} & \makecell{0.403\\(0.002)} & \makecell{0.627\\(0.004)} \\
$\ucbalg$ & \makecell{0.236\\(0.000)} & \makecell{0.472\\(0.024)} & \makecell{0.214\\(0.004)} & \makecell{0.473\\(0.014)} & \makecell{0.379\\(0.002)} & \makecell{0.635\\(0.012)} & \makecell{0.397\\(0.001)} & \makecell{0.627\\(0.005)} \\
$\ucbmean$ & \makecell{0.239\\(0.003)} & \makecell{0.462\\(0.003)} & \makecell{0.223\\(0.006)} & \makecell{0.473\\(0.011)} & \makecell{0.387\\(0.002)} & \makecell{0.618\\(0.001)} & \makecell{0.407\\(0.000)} & \makecell{0.633\\(0.008)} \\
$\ucbconst$ & \makecell{\textbf{0.242}\\(0.003)} & \makecell{\textbf{0.473}\\(0.003)} & \makecell{\textbf{0.243}\\(0.005)} & \makecell{0.448\\(0.003)} & \makecell{\textbf{0.395}\\(0.001)} & \makecell{\textbf{0.636}\\(0.006)} & \makecell{\textbf{0.419}\\(0.001)} & \makecell{\textbf{0.642}\\(0.002)} \\
$\batchalgo$ & \makecell{0.241\\(0.001)} & \makecell{0.467\\(0.025)} & \makecell{0.229\\(0.003)} & \makecell{0.469\\(0.014)} & \makecell{0.390\\(0.007)} & \makecell{0.629\\(0.011)} & \makecell{0.413\\(0.008)} & \makecell{0.631\\(0.005)} \\\bottomrule
\end{tabular}
\label{tab:quant_results_best}
\end{table}

\begin{table}[h]
    \centering
        \caption{Quantitative comparison of different baselines on pass@1 and pass@32 at the final checkpoint over three random seeds. We report mean and standard deviation in parentheses. The best mean results are in bold. Note that $\batchalgo$ in general achieves the best final performance in terms of pass@32.}
\begin{tabular}{lcccccccc}
\toprule
& \multicolumn{4}{c}{$\llama$} & \multicolumn{4}{c}{$\qwen$} \\
\cmidrule(lr){2-5} \cmidrule(lr){6-9}
Method & \multicolumn{2}{c}{Math} & \multicolumn{2}{c}{DAPO} & \multicolumn{2}{c}{Math} & \multicolumn{2}{c}{DAPO} \\
\cmidrule(lr){2-3} \cmidrule(lr){4-5} \cmidrule(lr){6-7} \cmidrule(lr){8-9}
& Pass@1 & Pass@32 & Pass@1 & Pass@32 & Pass@1 & Pass@32 & Pass@1 & Pass@32 \\
\midrule
Vanilla RL & \makecell{0.215\\(0.018)} & \makecell{0.395\\(0.017)} & \makecell{0.196\\(0.012)} & \makecell{0.362\\(0.034)} & \makecell{0.381\\(0.012)} & \makecell{0.593\\(0.006)} & \makecell{0.399\\(0.010)} & \makecell{0.580\\(0.013)} \\
$\ucbalg$ & \makecell{0.233\\(0.003)} & \makecell{0.425\\(0.006)} & \makecell{0.208\\(0.008)} & \makecell{0.388\\(0.006)} & \makecell{0.372\\(0.013)} & \makecell{0.582\\(0.008)} & \makecell{0.392\\(0.007)} & \makecell{0.580\\(0.008)} \\
$\ucbmean$ & \makecell{0.233\\(0.003)} & \makecell{0.414\\(0.004)} & \makecell{0.221\\(0.008)} & \makecell{0.372\\(0.012)} & \makecell{0.387\\(0.002)} & \makecell{0.586\\(0.005)} & \makecell{0.407\\(0.006)} & \makecell{\textbf{0.603}\\(0.007)} \\
$\ucbconst$ & \makecell{0.228\\(0.003)} & \makecell{0.417\\(0.007)} & \makecell{\textbf{0.242}\\(0.009)} & \makecell{0.393\\(0.005)} & \makecell{0.391\\(0.003)} & \makecell{0.578\\(0.007)} & \makecell{\textbf{0.419}\\(0.006)} & \makecell{0.589\\(0.006)} \\
$\batchalgo$ & \makecell{\textbf{0.238}\\(0.009)} & \makecell{\textbf{0.426}\\(0.011)} & \makecell{0.227\\(0.004)} & \makecell{\textbf{0.410}\\(0.009)} & \makecell{0.382\\(0.001)} & \makecell{\textbf{0.610}\\(0.001)} & \makecell{0.412\\(0.008)} & \makecell{0.594\\(0.010)} \\
\bottomrule
\end{tabular}
\label{tab:quant_results_final}
\end{table}

\newpage
\section{Implementation Details}\label{sec:app_imp_detail}

Our codebase is developed based on the verl codebase \citep{sheng2024hybridflow}. Thus we use the verl naming convention for the hyperparameters. For all our experiments, we use the hyperparameters in Table \ref{tab:hyperparameters} unless otherwise specified. For all Llama experiments, we set bonus coefficient $c = 0.1$, and for all Qwen experiments, we set $c = 0.2$. For $\ucbconst$, we set $b_0 = 1$ for easy dataset and $b_0 = 0.5$ for medium dataset. 
\begin{table}[h]
    \centering
        \caption{Comparison of different exploration strategies based on the number of different answers sampled in a batch. We repeat for 2 random seeds and report the mean and standard deviation (in parentheses).}
      \begin{tabular}{c|c}
        \toprule
       Name & Value \\
        \midrule
      train batch size &  256\\ 
      learning rate & 1e-6\\ 
      ppo mini batch size & 256\\ 
      kl loss coef & 0.001 \\
      entropy coeff & 0 \\
      rollout.n & 8 \\ 
      rollout.val$\textunderscore$kwargs.temperature & 1 \\
      \bottomrule
      \end{tabular}\\
    \label{tab:hyperparameters}
\end{table}

\end{document}